\def\1{\bm{1}}
\DeclareMathAlphabet{\mathsfit}{\encodingdefault}{\sfdefault}{m}{sl}
\SetMathAlphabet{\mathsfit}{bold}{\encodingdefault}{\sfdefault}{bx}{n}
\newcommand{\E}{\mathbb{E}}
\newcommand{\R}{\mathbb{R}}
\newcommand{\Var}{\mathrm{Var}}
\newcommand{\Cov}{\mathrm{Cov}}
\DeclareMathOperator*{\argmax}{arg\,max}
\DeclareMathOperator*{\argmin}{arg\,min}
\DeclareMathOperator*{\vol}{vol}
\DeclareMathOperator*{\op}{op}
\theoremstyle{plain}
\newtheorem{theorem}{Theorem}
\newtheorem{lemma}[theorem]{Lemma}
\newtheorem{proposition}[theorem]{Proposition}
\newtheorem{example}[theorem]{Example}
\newtheorem{remark}[theorem]{Remark}
\theoremstyle{definition}
\newcommand{\x}{\bm{x}}                                 
\newcommand{\y}{\bm{y}}                                 
\newcommand{\Sp}{\mathcal{S}}                           
\newcommand{\Rp}{\R_{\geq 0}}                           
\newcommand{\D}{\mathcal{D}}                            
\newcommand{\G}{\mathcal{G}}                            
\newcommand{\K}{{\bm {K}}}                            
\newcommand{\GP}{\mathcal{GP}}                          
\newcommand{\kb}{k_{\mathrm{b}}}                        
\newcommand{\kavg}{k_{\mathrm{avg}}}                  
\newcommand{\kmax}{k_{\mathrm{max}}}                    
\newcommand{\kplus}{k_{+}}                              
\newcommand{\act}[2]{{#1 #2}}                           
\newcommand\inner[2]{\left\langle #1, #2 \right\rangle} 
\newcommand\blackbox{{f^\star}}
\newcommand\f{{f}}
\setlist{nosep,leftmargin=*} 
\title{Symmetry-Aware Bayesian Optimization via Max Kernels}
\author{
Anthony Bardou\thanks{School of Computer and Communication Sciences, École polytechnique fédérale de Lausanne (EPFL), Station 14, CH-1015 Lausanne, Switzerland} \quad
Antoine Gonon\thanks{Institute of Mathematics, École polytechnique fédérale de Lausanne (EPFL), Station 8, CH-1015 Lausanne, Switzerland} \quad
Aryan Ahadinia\footnotemark[1] \quad
Patrick Thiran\footnotemark[1]
}
\begin{document}

\maketitle

\begin{abstract}
Bayesian Optimization (BO) is a powerful framework for optimizing noisy, expensive-to-evaluate black-box functions. When the objective exhibits invariances under a group action, exploiting these symmetries can substantially improve BO efficiency. While using maximum similarity across group orbits has long been considered in other domains, the fact that the max kernel is not positive semidefinite (PSD) has prevented its use in BO. In this work, we revisit this idea by considering a PSD projection of the max kernel. Compared to existing invariant (and non-invariant) kernels, we show it achieves significantly lower regret on both synthetic and real-world BO benchmarks, without increasing computational complexity.
\end{abstract}

\section{Introduction}\label{sec:intro}

Bayesian optimization (BO) tackles the maximization of a noisy, expensive-to-evaluate black-box $\blackbox:\Sp\subset\R^d\to\R$ using a Gaussian process (GP) surrogate. When prior knowledge says that $\blackbox$ is invariant on orbits $[\x] = \{\act{g}{\x} : g \in \G\}$ of a group $\G$, that is,
\[
\blackbox(\x)=\blackbox(\act{g}{\x})\quad(\forall g\in\G),
\]
embedding this invariance into the kernel can significantly improve sample efficiency. 
A classical and principled approach is to average a base kernel $\kb$ over group orbits 
(e.g., \citet{Kondor08GroupMLthesis,glielmo2017accurate,Brown24SampleEfficientBOInvariant}). Averaging yields a $\G$-invariant kernel with a clean RKHS interpretation 
(as discussed in \Cref{sec:background-invbo}), but as $|\G|$ grows it can dilute high-similarity alignments across orbits.

\textbf{From averaging to max-alignment.}
We revisit a simple idea—\emph{retain the strongest orbitwise alignment}—and adapt it to BO. Given a base kernel $\kb$ and a symmetry group $\G$, define
\begin{equation}\label{eq:kmax}
\kmax(\x,\x')=\max_{g,g'\in\G}\,\kb\!\big(\act{g}{\x},\,\act{g'}{\x'}\big),
\end{equation}
so that the similarity between $\x$ and $\x'$ is the best alignment over their orbits. While $\kmax$ is symmetric and $\G$-invariant, it is not positive semidefinite (PSD) in general and thus cannot serve directly as a GP covariance.

\textbf{A PSD, invariant surrogate via projection + Nyström.}
On a finite design set $\D$, we form the Gram matrix of $\kmax$ and project it onto the PSD cone (eigenvalue clipping), obtaining $\K_+$. Denoting by $\K_+^\dagger$ the Moore-Penrose pseudo-inverse of $\K_+$, we then define the $\G$-invariant, PSD kernel
\begin{equation}
\label{def:kplusD-intro}
\kplus^{(\D)}(\x,\x')\;=\;\kmax(\x,\D)\,\K_+^{\dagger}\,\kmax(\D,\x').
\end{equation}
Equivalently, $\kplus^{(\D)}(\x,\x')=\phi(\x)^\top\phi(\x')$ with features $\phi(\x)=\K_+^{\dagger/2}\,\kmax(\D,\x)$, which makes positive semidefiniteness immediate. By construction, $\kplus^{(\D)}$ (i) coincides with $\kmax$ on $\D$ whenever $\kmax$ is already PSD, and (ii) has per-iteration asymptotic cost comparable to orbit-averaged kernels; details in \Cref{sec:max_ker-kplus}.

\textbf{Why can max-alignment help?}
Averaging mixes all orbit pairings and can shrink contrasts as $|\G|$ increases. In contrast, \eqref{eq:kmax} preserves high-contrast alignments that drive exploration, while the projection step~\eqref{def:kplusD-intro} produces a valid GP kernel without introducing new algorithmic complexity (BO iterations already perform a Singular Value Decomposition~(SVD) of the Gram matrix for GP inference, so the extra-computation of $\K_+^{\dagger}$ does not change the asymptotic cost as we will see later in \Cref{tab:complexities}).

\textbf{Empirics and spectra.}
Across synthetic benchmarks with finite and continuous groups and a wireless-network design task, we show that $\kplus^{(\D)}$ consistently attains lower cumulative and simple regret than both the base kernel and the orbit-averaged alternative, with gains increasing with $|\G|$. Our spectral analyses reveal that $\kplus^{(\D)}$ does not necessarily enjoy faster eigendecay than averaging-based kernels; thus, eigendecay-based regret bounds would predict similar or weaker rates, yet we observe the opposite in practice. We hypothesize that search-geometry effects (e.g., preserving high-contrast orbit alignments),  approximation hardness and misspecification are key factors to take into account to fill this gap between theory and practice; see \Cref{sec:eigendecay}.

\textbf{Summary of the contributions.} 
We propose $\kmax$ as a \emph{max-alignment} route to $\G$-invariance, turn it into a valid GP kernel for BO via PSD projection and Nyström, and show $\kplus^{(\D)}$ is $\G$-invariant, equals $\kmax$ on $\D$ when $\kmax$ is PSD, and matches the asymptotic cost of orbit-averaged kernels (\Cref{sec:max_ker}). We demonstrate consistent BO gains over orbit averaging across BO benchmarks (\Cref{sec:xps}), and we analyze why eigendecay alone does not explain these gains (\Cref{sec:eigendecay}).

\section{Background} \label{sec:background}

\subsection{Bayesian Optimization in a Nutshell}
\label{sec:bo_background}

\textbf{Problem.}
We seek to maximize an expensive-to-evaluate, black-box objective $\blackbox:\Sp\!\to\!\R$ under the assumption that $\blackbox$ is in the RKHS $\mathcal{H}_k$ of a kernel $k:\Sp\times\Sp\to \R$. 
Each query $\x\in\Sp$ returns a noisy observation
$y=\blackbox(\x)+\varepsilon$, where $\varepsilon\sim\mathcal N(0,\sigma_0^2)$. 
Let $\mathcal Z_t = \{(\x_i,y_i)\}_{i=1}^t$ denote the dataset after $t$ evaluations, and write
$\D_t=(\x_1,\ldots,\x_t)$ and $\bm y_t=(y_1,\ldots,y_t)^\top$. 

\textbf{Surrogate model: the GP prior.}
BO maintains a probabilistic surrogate $\f$ over functions in $\mathcal{H}_k$ to guide sampling of new queries $\x\in\Sp$ with the goal of converging to $\argmax_{x\in\Sp} \blackbox(\x)$. 
A common choice is a zero-mean Gaussian process (GP)~\citep{williams2006gaussian},
\[
\f \sim \GP(0,k), 
\]
Conditionally on $\mathcal Z_t$, the posterior $\f\mid \mathcal Z_t$ is still a GP with posterior mean and covariance
\begin{align}
\mu_t(\x) &= k(\x,\D_t)\,\big(\K_t+\sigma_0^2 \bm I_t\big)^{-1}\bm y_t, \label{eq:post_mean}\\
\Cov_t(\x,\x') &= k(\x,\x') - k(\x,\D_t)\,\big(\K_t+\sigma_0^2 \bm I_t\big)^{-1}k(\D_t,\x'), \label{eq:post_cov}
\end{align}
where $\K_t=k(\D_t,\D_t)\in\R^{t\times t}$, $\bm I_t$ is the $t\times t$ identity, and
$k(\x,\D_t)=[k(\x,\x_1),\ldots,k(\x,\x_t)]$.

\textbf{BO Iteration.} 
At step $t$, BO trades off \emph{exploration} (learning $\blackbox$) and \emph{exploitation}
(sampling near current optima) via an acquisition function $\alpha_t:\Sp\to\R$ computed from
$(\mu_t,\Cov_t)$ (e.g., GP-UCB~\citep{srinivas-information:2012} or
Expected Improvement~\citep{jones1998efficient}). The next point is
\[
\x_{t+1}\in\argmax_{\x\in\Sp}\,\alpha_t(\x),\qquad
y_{t+1}=\blackbox(\x_{t+1})+\varepsilon_{t+1}.
\]
\textbf{Measuring performance with regret.} We follow the common practice in BO: for experiments where $\blackbox$ is known, we measure the regret on the deterministic $\blackbox\in\mathcal{H}_k$, and when discussing theoretical regret bounds we refer to the regret on $\f\sim\GP(0,k)$~\citep{garnett2023bayesian}. In both cases, for $h=\f$ or $h=\blackbox$, the \emph{instantaneous regret} at timestep $t$ is $r_t=\max_{\x\in\Sp} h(\x)-h(\x_t)$, the
\emph{cumulative regret} at horizon $T$ is $R_T=\sum_{t=1}^T r_t$, and the
\emph{simple regret} is $s_T=\max_{\x\in\Sp} h(\x)-\max_{1\le t\le T} h(\x_t)$. A BO algorithm with a sublinear regret (i.e., $R_T \in o(T)$) is called \emph{no-regret} and offers asymptotic global optimization guarantees on $\blackbox$. Most standard cumulative regret upper bounds are established in terms of the eigendecay of the operator spectrum of the kernel $k$~\citep{srinivas-information:2012, valko2013finite, scarlett2017lower, whitehouse2023sublinear}.

\subsection{Invariance in Bayesian Optimization} \label{sec:background-invbo}

In many applications, the objective function $\blackbox$ is invariant under the action of a known symmetry group $\G$ on $\Sp$, i.e., $\blackbox(\x) = \blackbox(\act{g}{\x})$ for all $g \in \G$. When such invariances are ignored, BO algorithms may waste evaluations by treating all points within the same $|\G|$-orbit as distinct. Given a non-invariant base kernel $\kb$ and an arbitrary symmetry group $\G$, both provided by the user, this section reviews existing strategies for incorporating group invariance into BO and positions our contribution within this literature.

\textbf{Data augmentation.} A popular way to enforce symmetry is to expand the dataset $\mathcal Z$ itself, as it is often done in computer vision~\citep{krizhevsky2012imagenet}. For each acquired observation $(\x_t, y_t)$, one augments $\mathcal Z$ with all transformed copies $\{(\act{g}{\x_t}, y_t)\}_{g \in \G}$, while leaving the base kernel $\kb$ unchanged. However, since BO scales as $\mathcal{O}(|\mathcal Z|^3)$, this approach quickly becomes computationally prohibitive and is inapplicable to continuous symmetry groups.

\textbf{Search space restriction.} A second strategy is to restrict the optimization domain to the smallest subset $\Sp_\G \subseteq \Sp$ such that $\bigcup_{g \in \G} g\Sp_\G = \Sp$ (e.g., \citet{BAIRD2023112134}). For instance, if $\Sp = [-1,1]^2$ and $\G$ is the group of planar rotations by angle $\pi/2$, then it suffices to optimize over $\Sp_\G = [0,1]^2$ while keeping the base kernel $\kb$ unchanged. In general, however, identifying a suitable fundamental domain $\Sp_\G$ can be challenging, and enforcing optimization within it may be impractical.

\textbf{Invariant kernels.} A principled way to incorporate prior $\G$-invariance of $\blackbox$ is to consider a $\G$-invariant GP prior $\f$, i.e., a GP whose sample paths $\x\in\Sp \mapsto \f(\x,\omega)$ obtained by fixing one outcome $\omega$ in the probability space are themselves invariant under $\G$. \citet{ginsbourger2012argumentwise} established that such GPs necessarily admit a $\G$-invariant covariance function\footnote{Up to modification, i.e., there is another GP $\f'$ such that for every $x\in \Sp$, $\mathbb{P}(\f(\x) = \f'(\x))=1$ and $\f'$ has invariant paths and invariant covariance, see Property 3.3 in \citet{ginsbourger2012argumentwise}.}, meaning $k(g\x, g'\x') = k(\x, \x')$ for all $\x,\x' \in \Sp$ and $g,g' \in \G$. The central question then becomes: how can one construct an invariant kernel $k$ from an arbitrary base kernel $\kb$ and symmetry group $\G$? An elegant solution, dating back to \citet{Kondor08GroupMLthesis} and recently advocated for BO  by~\citet{Brown24SampleEfficientBOInvariant}, is to average $\kb$ over $\G$-orbits:
\begin{equation} \label{eq:kavg}
    \kavg(\x, \x') = \frac{1}{|\G|^2} \sum_{g, g' \in \G} \kb(g\x, g'\x').
\end{equation}
This construction is not only guaranteed to be $\G$-invariant, but also admits a clean functional interpretation: if $\mathcal{H}_{\kb}$ and $\mathcal{H}_{\kavg}$ denote the RKHS induced by $\kb$ and $\kavg$ respectively, then $\mathcal{H}_{\kavg}$ coincides exactly with the subspace of $\G$-invariant functions in $\mathcal{H}_{\kb}$ (Theorem~4.4.3 in \citet{Kondor08GroupMLthesis}). Consequently, $\kavg$ (up to normalization) has gained popularity as the standard off-the-shelf kernel for BO in symmetric settings~\citep{glielmo2017accurate, kim2021bayesian, Brown24SampleEfficientBOInvariant}.

A complementary idea in kernel methods is to retain the \emph{best} latent alignment between two orbits via a maximum, as in convolution/best-match kernels for structured data~\citep{gartner2003survey, vishwanathan2003fast} and follow-up work across domains~\citep{frohlich2005optimal, zhang2010maximum, curtin2013fast}. Max-alignment kernels, however, are not PSD in general, leading to indefinite Gram matrices. This has motivated two families of remedies: (i) explicit Kreĭn-space formulations~\citep{ong2004learning, oglic2018learning}, and (ii) simple PSD corrections such as eigenvalue clipping/flipping in SVMs~\citep{luss2007support, chen2009learning}, which are empirically effective. 

\textbf{Our adaptation to BO.}
Guided by the above, we adopt the max-alignment view for BO. To ensure positive definiteness, we project $\kmax$ (see~\eqref{eq:kmax}) onto a PSD kernel $\kplus^{(\D)}$, which coincides with $\kmax$ whenever the latter is already PSD. This preserves the sharp, high-contrast orbit alignments of $\kmax$ while ensuring compatibility with the BO framework and it keeps per-iteration BO complexity on par with orbit-averaged kernels (see complexity details later in \Cref{sec:background-invbo}). In our experiments, $\kplus^{(\D)}$ better reflects the intended symmetries of standard synthetic objectives and achieves substantially lower cumulative regret; interestingly, these empirical gains are not mirrored by existing eigendecay-based upper bounds, a point we return to in \Cref{sec:eigendecay}.

\section{The Max Kernel} \label{sec:max_ker}
We have introduced the max-alignment kernel $\kmax$ and its PSD surrogate $\kplus^{(\D)}$ in \eqref{def:kplusD-intro}. This section explains \emph{why} $\kmax$ is a natural $\G$-invariant covariance, clarifies how it differs from orbit averaging through examples, and records the practical PSD construction we use in BO.

\subsection{Motivation: $\kmax$ as a valid covariance} \label{sec:max_ker-motivation}

A natural way to motivate $\kmax$ is to exhibit $\G$-invariant GPs whose covariance equals $\kmax$.

\textbf{Construction.}
Let $h\sim\GP(0,\kb)$ with an isotropic base kernel $\kb(\x,\x')=\kappa(\|\x-\x'\|_2)$ with $\kappa$ nonincreasing (e.g., popular ones such as RBF, Matérn).
Consider a map $\phi_\G$ such that (i) $\phi_\G(\x)=\phi_\G(\act{g}{\x})$ for all $g\in\G$ and
(ii) $\|\phi_\G(\x)-\phi_\G(\x')\|_2=\min_{g,g'}\|\act{g}{\x}-\act{g'}{\x'}\|_2$.
Define $\f(\x)=h(\phi_\G(\x))$. Then $\f$ is $\G$-invariant and:

\begin{proposition}\label{prop:kmax_is_adequate}
Under the construction above, $\f\sim\GP(0,\kmax)$ with $\kmax$ given by~\eqref{eq:kmax}.
\end{proposition}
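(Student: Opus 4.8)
The plan is to verify the two defining properties of a Gaussian process directly: that $\f$ is centered and that its covariance function is $\kmax$. Since $\f(\x) = h(\phi_\G(\x))$ is a deterministic transformation of the input followed by the GP $h \sim \GP(0,\kb)$, the process $\f$ is automatically Gaussian (any finite collection $(\f(\x_1),\dots,\f(\x_n)) = (h(\phi_\G(\x_1)),\dots,h(\phi_\G(\x_n)))$ is a finite-dimensional marginal of $h$, hence jointly Gaussian), and $\E[\f(\x)] = \E[h(\phi_\G(\x))] = 0$. So the only real content is the covariance computation.

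For the covariance, I would write
\[
\Cov(\f(\x),\f(\x')) = \E\big[h(\phi_\G(\x))\,h(\phi_\G(\x'))\big] = \kb\big(\phi_\G(\x),\phi_\G(\x')\big) = \kappa\big(\|\phi_\G(\x)-\phi_\G(\x')\|_2\big),
\]
using that $\kb$ is isotropic. By property (ii) of $\phi_\G$, $\|\phi_\G(\x)-\phi_\G(\x')\|_2 = \min_{g,g'\in\G}\|\act{g}{\x}-\act{g'}{\x'}\|_2$. Then, because $\kappa$ is nonincreasing, it turns the $\min$ over distances into a $\max$ over kernel values:
\[
\kappa\Big(\min_{g,g'}\|\act{g}{\x}-\act{g'}{\x'}\|_2\Big) = \max_{g,g'}\kappa\big(\|\act{g}{\x}-\act{g'}{\x'}\|_2\big) = \max_{g,g'}\kb\big(\act{g}{\x},\act{g'}{\x'}\big) = \kmax(\x,\x').
\]
This chain of equalities is the heart of the argument. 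I would also note $\G$-invariance of $\f$ follows immediately from property (i): $\f(\act{g}{\x}) = h(\phi_\G(\act{g}{\x})) = h(\phi_\G(\x)) = \f(\x)$ almost surely (indeed surely, as a pointwise identity of random variables).

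The step I expect to need the most care is not any of the computations above, which are routine, but rather the \emph{existence} of a map $\phi_\G$ satisfying (i) and (ii) — i.e., making sure the construction is not vacuous. Property (ii) asserts that the orbit-quotient pseudometric $d_\G(\x,\x') := \min_{g,g'}\|\act{g}{\x}-\act{g'}{\x'}\|_2$ embeds isometrically into a Hilbert space, which is a genuine constraint (not every metric space embeds isometrically into $\ell_2$). I would address this either by (a) restricting attention to groups $\G$ for which such an embedding is known to exist (e.g., finite groups acting by isometries, where one can take $\phi_\G$ valued in a sufficiently high-dimensional space, or continuous groups where an explicit quotient parametrization is available — matching the examples used later in the paper), or (b) observing that for the purposes of Proposition~\ref{prop:kmax_is_adequate} we only need $\phi_\G$ to exist on the relevant finite design sets, where isometric $\ell_2$-embedding of a finite pseudometric space is a classical fact whenever the pseudometric is of negative type; alternatively one can absorb this into a hypothesis. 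In the write-up I would state clearly that the proposition is conditional on the existence of such a $\phi_\G$ and point to the concrete group actions (treated in \Cref{sec:xps}) where it is realized, so that the motivational role of the proposition is preserved without overclaiming.
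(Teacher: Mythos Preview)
Your proof is correct and follows essentially the same route as the paper's: compute the covariance via $\kb(\phi_\G(\x),\phi_\G(\x'))=\kappa(\|\phi_\G(\x)-\phi_\G(\x')\|_2)$, invoke (ii) to replace the argument by $\min_{g,g'}\|\act{g}{\x}-\act{g'}{\x'}\|_2$, and use monotonicity of $\kappa$ to swap $\min$ over distances for $\max$ over kernel values. Your final paragraph on the existence of $\phi_\G$ is thoughtful but unnecessary here---the proposition is explicitly conditional on the construction (``Under the construction above''), and the paper treats existence as a hypothesis rather than something to be established within the proof.
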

\noindent
\emph{Proof sketch, details in \Cref{app:kmax_proofs}.} $\Cov(\f(\x),\f(\x'))\overset{\text{def} \f}{=}\kb(\phi_\G(\x),\phi_\G(\x'))\overset{\text{(ii)}}{=} \kappa(\min_{g,g'}\|\act{g}{\x}-\act{g'}{\x'}\|_2)$, and
monotonicity of $\kappa$ converts the min-distance into $\max_{g,g'}\kb(\act{g}{\x},\act{g'}{\x'})$.\qed

This shows that $\kmax$ naturally arises as the covariance of valid $\G$-invariant GPs.  
In contrast, the common approach to invariance in BO is to build $\kavg$ by averaging a base kernel as in~\eqref{eq:kavg}. 
But averaging and maximization induce fundamentally different geometries:

\begin{lemma}\label{prop:kavg_does_not_fit}
For any base kernel $\kb$ and any (double) orbit $\mathcal O(\x,\x'):=\{(\act{g}{\x}, \act{g'}{\x'}), g,g'\in\G\}$, 
$\kavg=\kmax$ on $\mathcal O(\x,\x')$ if and only if $\kb=\kmax$ on that orbit. 
\end{lemma}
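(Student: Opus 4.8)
The plan is to use the $\G$-invariance of both $\kmax$ and $\kavg$ to collapse the claim — which is phrased "on the double orbit $\mathcal O(\x,\x')$" — to a single scalar identity at $(\x,\x')$, and then apply the elementary fact that the mean of a finite list of reals equals its maximum precisely when every entry attains that maximum.

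First I would note that $\kmax$ is $\G$-invariant: since $\G$ is a group, $\{\act{h}{(\act{g}{\x})} : g\in\G\} = \{\act{g}{\x} : g\in\G\}$ for every $h\in\G$, hence $\kmax(\act{h}{\x},\act{h'}{\x'}) = \kmax(\x,\x')$; the same reindexing applied to \eqref{eq:kavg} gives $\kavg(\act{h}{\x},\act{h'}{\x'}) = \kavg(\x,\x')$ (both facts are already recorded in \Cref{sec:background-invbo}). So $\kmax$ and $\kavg$ are each constant on $\mathcal O(\x,\x')$, taking the values $\kmax(\x,\x')$ and $\kavg(\x,\x')$ respectively. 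Therefore "$\kavg = \kmax$ on $\mathcal O(\x,\x')$" is equivalent to the single equality $\kavg(\x,\x') = \kmax(\x,\x')$, while "$\kb = \kmax$ on $\mathcal O(\x,\x')$" is equivalent to "$\kb(\act{g}{\x},\act{g'}{\x'}) = \kmax(\x,\x')$ for all $g,g'\in\G$" (since every point of $\mathcal O(\x,\x')$ has the form $(\act{g}{\x},\act{g'}{\x'})$, and $\kmax$ is constant there).

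The crux is then the identity
\[
\kmax(\x,\x') - \kavg(\x,\x') \;=\; \frac{1}{|\G|^2}\sum_{g,g'\in\G}\Big(\kmax(\x,\x') - \kb(\act{g}{\x},\act{g'}{\x'})\Big),
\]
in which every summand is nonnegative by definition of $\kmax$ as the maximum over $g,g'$. A sum of nonnegative reals vanishes iff each term does, which yields both implications at once: if $\kb=\kmax$ on the orbit then each summand is $0$, so $\kavg(\x,\x')=\kmax(\x,\x')$; conversely, $\kavg(\x,\x')=\kmax(\x,\x')$ forces each summand to be $0$, i.e. $\kb(\act{g}{\x},\act{g'}{\x'})=\kmax(\x,\x')=\kmax(\act{g}{\x},\act{g'}{\x'})$ for all $g,g'$, which is precisely $\kb=\kmax$ on $\mathcal O(\x,\x')$. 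Combined with the reduction above, this proves the lemma.

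There is no substantive obstacle; the only care needed is the bookkeeping around "on the orbit". Specifically, one must genuinely invoke $\G$-invariance so that agreement at a single representative $(\x,\x')$ propagates to the whole double orbit, and read the average in \eqref{eq:kavg} as a sum over $\G\times\G$ counted with multiplicity (rather than over the distinct points of $\mathcal O(\x,\x')$), so that the "mean equals max" step applies verbatim — a repeated orbit element only adds repeated entries to the list. It is worth stating explicitly that the conclusion is local to each double orbit: the lemma does not assert that $\kavg$ and $\kmax$ differ everywhere, only that on a fixed $\mathcal O(\x,\x')$ they coincide exactly in the degenerate case where the base kernel is already "saturated" to $\kmax$ there.
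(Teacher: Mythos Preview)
Your proof is correct and follows precisely the paper's own argument, which is stated in one line just after the lemma: ``an average reaches the maximum only when every term is maximal.'' You have simply unpacked that sentence carefully, including the $\G$-invariance bookkeeping that lets one reduce the orbitwise statement to a single representative.
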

Indeed, an average reaches the maximum only when every term is maximal. Thus $\kavg$ can never reproduce the geometry of $\kmax$, except in the degenerate case where the base kernel is already $\kmax$, making averaging redundant. One might wonder whether this limitation of $\kavg$ could be circumvented by building it from a \emph{different} base kernel than the one used for $\kmax$. In \Cref{app:kavg_vs_kmax} we show that, under mild assumptions satisfied by standard kernels (upper-bounded by $1$, with equality $k(\x,\x)=1$ along the diagonal), $\kavg$ and $\kmax$ can coincide only in the trivial case where the base kernel of $\kavg$ is already invariant when its arguments belong to the same orbit. 
Thus, even in this more general setting, averaging does not reproduce the geometry of maximization (except if the base kernel already had invariances).

To make this contrast concrete, we now examine a simple example 
(radial invariance with an RBF base kernel) where $\kmax$ and $\kavg$ 
can be computed in closed form. 

\begin{example}[Radial invariance with $\kmax$]
\label{ex:radial}
Let $\G$ be the group of planar rotations and $\kb(\x,\x') = \exp\!\left(-\|\x-\x'\|_2^2 / 2l^2\right)$ be an RBF kernel. With $\phi_\G(\x)=\|\x\|_2$,
\[
\kmax(\x,\x')=\exp\!\left(-(\|\x\|_2-\|\x'\|_2)^2/2l^2\right),\quad
\kavg(\x,\x')=\exp\!\Big(-\tfrac{\|\x\|_2^2+\|\x'\|_2^2}{2l^2}\Big)I_0\!\Big(\tfrac{\|\x\|_2\|\x'\|_2}{l^2}\Big),
\]
with $I_0$ the modified Bessel function (derivation in \Cref{app:radial_kavg}). As illustrated in \Cref{fig:motivating_example}, the two kernels $\kmax$ and $\kavg$ induce qualitatively different similarity structures. 
By construction, $\kmax$ assigns large similarity whenever $\|\x\|_2 \approx \|\x'\|_2$. If $\|\x\|_2 = \|\x'\|_2$, the function $\blackbox$ satisfies $\blackbox(\x)=\blackbox(\x')$ since it is invariant under rotations, and $\kmax$ exactly recovers this invariance by assigning maximal similarity $\kmax(\x,\x')=1$. 
In contrast, $\kavg$ only approximates this behavior: its iso-similarity curves as a function of $(\|\x\|_2,\|\x'\|_2)$ correspond to distorted balls, and two points with identical norms may be ranked as highly dissimilar (see the diagonal $\|\x\|_2=\|\x'\|_2$ of the right plot in \Cref{fig:motivating_example}). 
This mismatch highlights that while both constructions enforce rotation invariance, only $\kmax$ preserves the correct notion of similarity.
\end{example}

\begin{figure}[t]
    \centering
    \includegraphics[height=3.5cm]{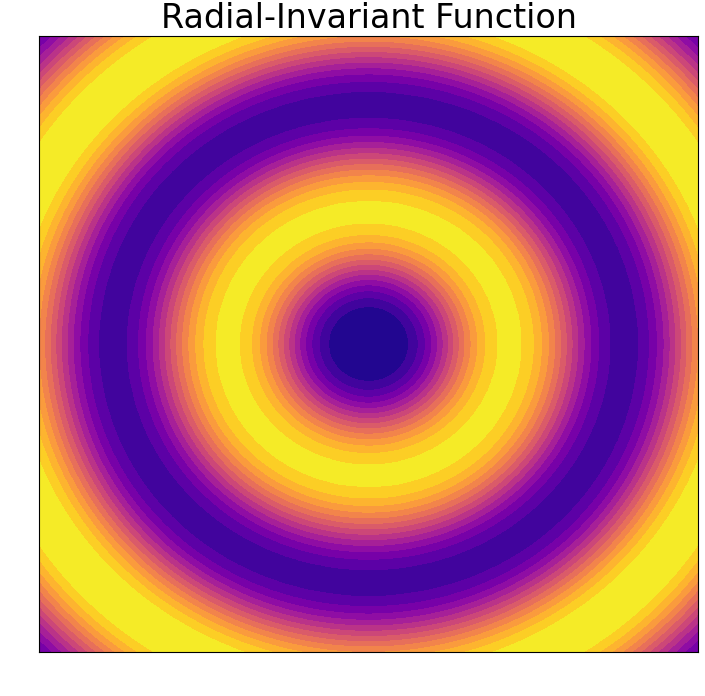} \quad
    \includegraphics[height=3.5cm]{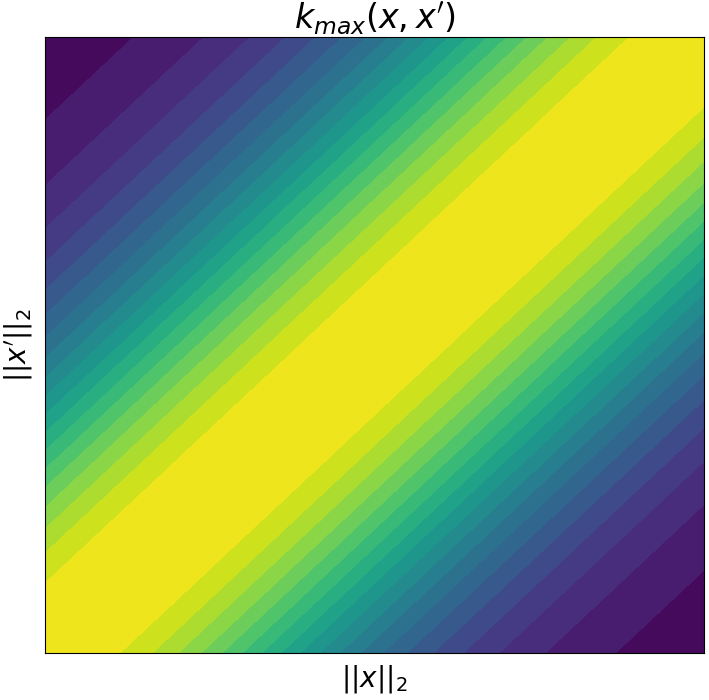} \quad
    \includegraphics[height=3.5cm]{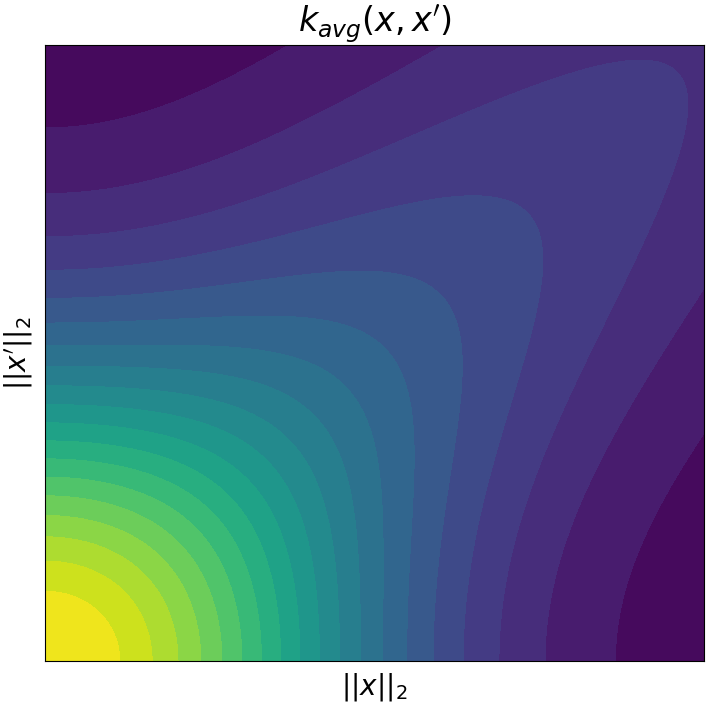}
    \caption{(Left)~A two-dimensional function $\blackbox(\x)$ invariant under planar rotations (see~\eqref{def:radial}): if $\|\x\|_2=\|\x'\|_2$, then $\blackbox(\x)=\blackbox(\x')$. (Center/Right)~Rotation-invariant kernels derived from an RBF base kernel (lengthscale $1/2$), visualized as a function of $(\|\x\|_2,\|\x'\|_2)$. $\kmax$ (center) captures the correct invariance, while $\kavg$ (right) only approximates it.}
    \label{fig:motivating_example}
\end{figure}

\subsection{A PSD Extension of \texorpdfstring{$\kmax$}{kmax}: What We Use in Practice}
\label{sec:max_ker-kplus}
%
Because $\kmax$ is not PSD in general, we apply a standard projection step on the finite design set $\D=\{\x_1,\dots,\x_n\}$. Let $\K=\kmax(\D,\D)$ with eigendecomposition $\K=\bm Q\bm \Lambda \bm Q^\top$ and define\footnote{$\K_+$ does not depend on the choice of the eigendecomposition, see \Cref{lem:finite-psd-projection} in the appendix.} (with the max applied elementwise)
\begin{equation}
\label{eq:def-K-plus}
\K_+ \;=\; \bm Q\,\max(0,\bm \Lambda)\,\bm Q^\top.
\end{equation}
We then use the Nyström extension\footnote{It indeed extends $\K_+$ since $\kplus^{(\D)}(\x_i,\x_j)
\;=\; \K_{i,:}\,\K_+^{\dagger}\,\K_{:,j}
\;=\; (\K \K_+^{\dagger} \K)_{ij}
\;=\; (\K_+)_{ij}$.
}~\citep{williams2000using} to evaluate cross-covariances with new points, yielding the PSD, $\G$-invariant surrogate $\kplus^{(\D)}$ given in \eqref{def:kplusD-intro} and that we reproduce here:
\begin{equation}
\label{eq:nystrom-kplus}
\kplus^{(\D)}(\x,\x')
\;:=\; \kmax(\x,\D)\,\K_+^{\dagger}\,\kmax(\D,\x') .
\end{equation}  

\begin{table}[t]
\centering
\caption{Complexity per BO iteration. 
Here $|G|^{\!*}$ denotes either $|G|$ or $|G|^2$ depending on whether 
the orbit terms reduce to a single sum (when $\kb(\act{g}{\x},\x')$ suffices) 
or require a double sum over $(g,g')$; $m$ is the number of candidate points 
used in acquisition optimization.}
\resizebox{\linewidth}{!}{%
\begin{tabular}{lccc}
\toprule
 & Base kernel $\kb$ & Averaged $\kavg$ & Projected $\kplus^{(\D)}$ \\
\midrule
Gram matrix ($n\times n$)
 & $\mathcal{O}(n^2)$ 
 & $\mathcal{O}(n^2|G|^{\!*})$ 
 & $\mathcal{O}(n^2|G|^{\!*})$ \\
SVD / inversion
 & $\mathcal{O}(n^3)$ 
 & $\mathcal{O}(n^3)$ 
 & $\mathcal{O}(n^3)$ \\
PSD projection
 & -- & -- & $\mathcal{O}(n^3)$\footnotemark \\
Per-query evaluation
 & $\mathcal{O}(1)$ 
 & $\mathcal{O}(|G|^{\!*})$ 
 & $\mathcal{O}(n|G|^{\!*})$ \\
\midrule
\textbf{BO iteration} 
 & {$\mathcal{O}(m+n^2+n^3)$} 
 & {$\mathcal{O}((m+n^2)|G|^{\!*}+n^3)$}
 & {$\mathcal{O}((mn+n^2)|G|^{\!*}+n^3)$} \\
\bottomrule
\end{tabular}
}
\label{tab:complexities}
\end{table}
\footnotetext{One SVD of $\K$ suffices to obtain both $\K_+$ and $\K_+^\dagger$, 
so the extra PSD projection does not increase asymptotic cost.}

\textbf{Key properties of $\kplus^{(D)}$:}
\begin{itemize}[noitemsep, topsep=0pt, leftmargin=*]
\item \emph{PSD \& invariance.} 
$\kplus^{(\D)}$ is PSD and inherits argumentwise $\G$-invariance\footnote{$\kmax(\act{g}{\x},\x')=\kmax(\x,\x')$ implies $\kmax(\act{g}{\x},\D)=\kmax(\x,\D)$, hence invariance of $\kplus^{(\D)}$.} of $\kmax$.
\item \emph{Consistency with $\kmax$.} If $\K\succeq 0$, then $\K_+=\K$ and $\kplus^{(\D)}$ agrees with $\kmax$ on $\D\times\D$.
\item \emph{Cost.} Each BO iteration involves (i) building the Gram matrix on $\D$, 
(ii) inverting the Gram matrix to build the acquisition function, and 
(iii) $m$ kernel evaluations when optimizing the acquisition function.  
Step (ii) has the same cost as the SVD of $\K$ needed to compute both $\K_+$ and $\K_+^\dagger$, which makes $\kplus^{(\D)}$ having the same asymptotic per-iteration cost as $\kavg$; its per-query evaluations are more expensive, 
but this difference is negligible as long as we keep $m\lesssim n$. A concise complexity summary is provided in \Cref{tab:complexities}. 
\item \emph{Regularity.} For finite groups, $\kmax$ is a max of finitely many smooth maps and is almost everywhere~(a.e.)~differentiable; the Nyström extension preserves a.e.\ differentiability in each argument. For continuous groups, smoothness can sometimes be obtained via closed-form formulas (e.g., as in \Cref{ex:radial}). 
\end{itemize}

We now illustrate the behavior of $\kplus^{(\D)}$ versus $\kavg$ (in this situation, $\kmax$ is not PSD and the projection step is indeed needed to restore positive semidefiniteness).

\begin{example}[Ackley function with $\kplus$]
\label{ex:ackley}
\Cref{fig:ackley_kernels} compares $\kplus^{(\D)}$ and $\kavg$ 
on the one-dimensional Ackley function (see~\eqref{def:ackley}). 
The projected kernel $\kplus^{(\D)}$ preserves the expected pairwise 
symmetries (invariance along $x=y$ and $x=-y$) and spreads mass 
more evenly across the symmetric regions, whereas $\kavg$ 
concentrates covariance mostly near the origin. 
Thus, $\kplus^{(\D)}$ better reflects the symmetry geometry 
of the problem, echoing the qualitative difference observed 
in \Cref{ex:radial}.
\end{example}

\begin{figure}[h]
\centering
\includegraphics[height=3.5cm]{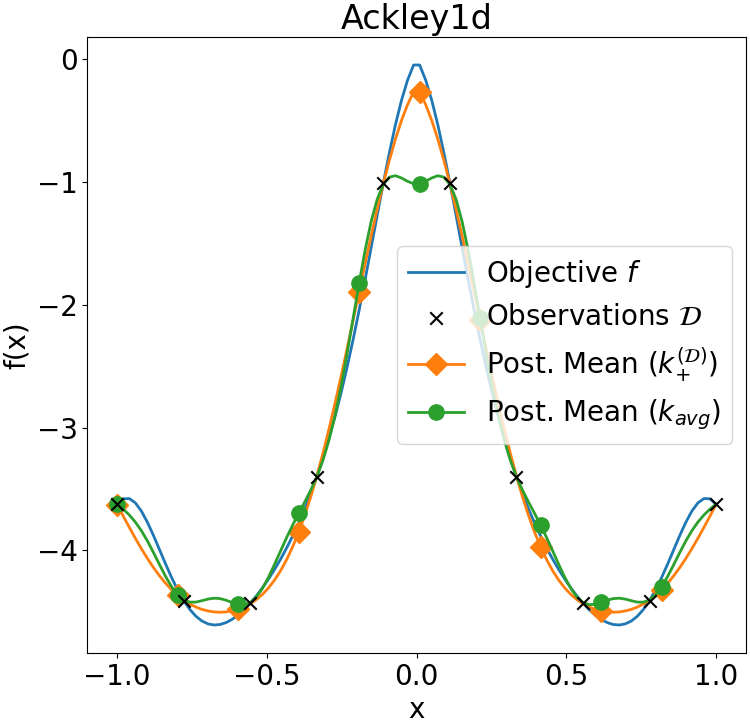} \quad
\includegraphics[height=3.5cm]{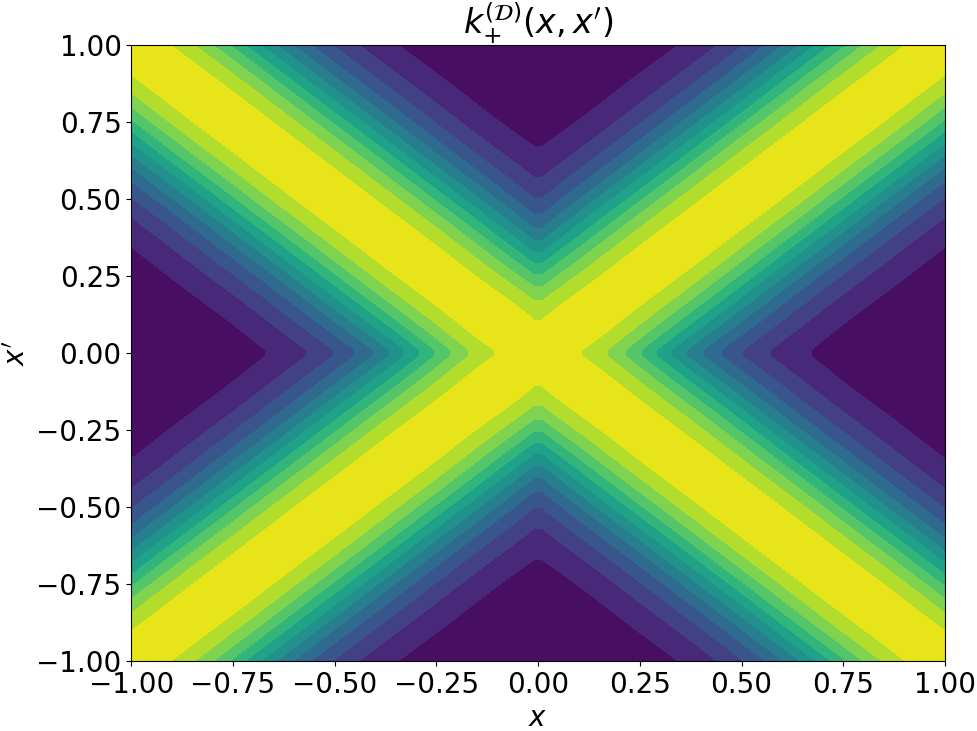}\quad
\includegraphics[height=3.5cm]{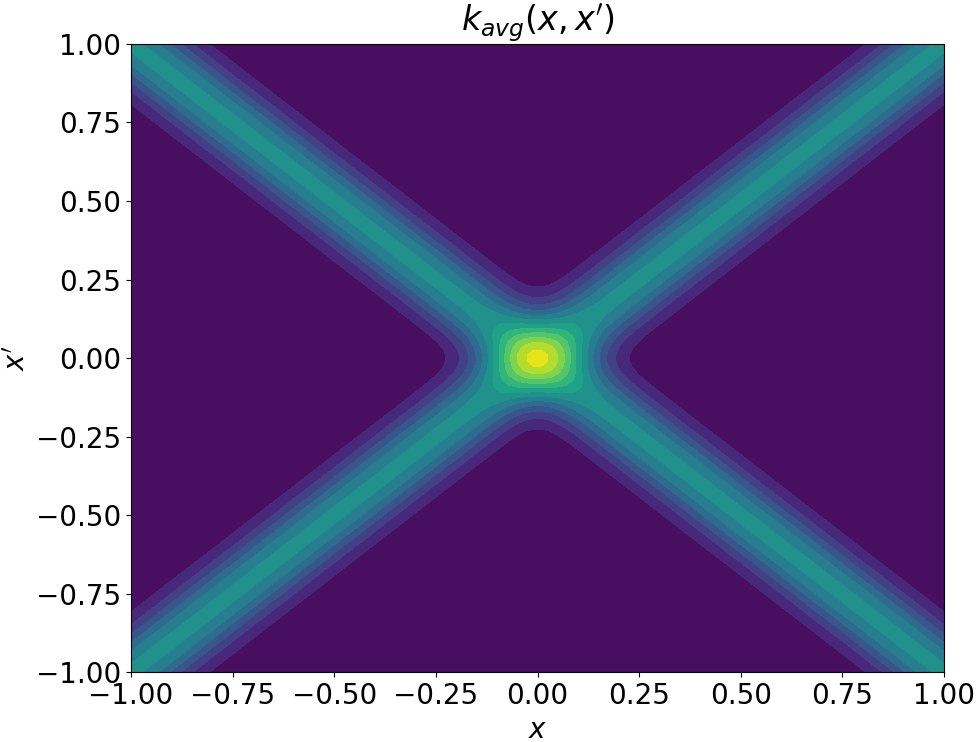}
\caption{(Left)~One-dimensional Ackley function $\blackbox$ (see \eqref{def:ackley}), invariant up to sign flips, and GP posterior means $\mu_t(\x)$ as in \eqref{eq:post_mean} for $\kplus^{(\D)}$~(orange diamond) and $\kavg$~(green circles) built from $\D$~(black crosses).  (Center)~Covariance structure induced by $\kplus^{(\D)}$. (Right)~Covariance structure induced by $\kavg$. Both kernels are invariant to reflections across $x=y$ and $x=-y$, but $\kavg$ concentrates covariance near $0$, 
while $\kplus^{(\D)}$ better reflects the underlying symmetry geometry. Consequently, the GP posterior mean induced by $\kplus^{(\D)}$ is the best at fitting the objective (left).}
\label{fig:ackley_kernels}
\end{figure}

\textbf{Beyond the finite view (details in \Cref{app:kplus}).} 
The PSD projection with Nyström in \Cref{eq:nystrom-kplus} is a practical, 
data-dependent construction. It can be seen as the finite-sample face of a 
broader, intrinsic definition that does not depend on $\D$. Since $\kmax$ is 
symmetric, it admits a spectral decomposition 
$\kmax(\x,\x')=\sum_i \lambda_i \phi_i(\x) \phi_i(\x')$ in $L^2$, and we can always define (a.e.)
\[
\kplus(\x,\x') := \sum_i \max(0,\lambda_i)\,\phi_i(\x) \phi_i(\x') ,
\]
with $\kplus=\kmax$ whenever $\kmax$ is already PSD. On finite domains, this 
precisely reduces to the matrix PSD projection in \eqref{eq:def-K-plus}. In
\Cref{app:kplus} we formalize the infinite-domain construction via integral
operators, prove that $\kplus$ is $\G$-invariant, and show that the finite
projection $+$ Nyström in \eqref{eq:nystrom-kplus} converges to $\kplus$ at the
spectral (Hilbert-Schmidt) level under iid sampling (\Cref{app:kplus-approx}).

\textbf{Takeaway.} $\kmax$ is the exact covariance of a natural class of $\G$-invariant GPs and induces a search geometry that preserves high-contrast orbit alignments (\Cref{ex:radial,ex:ackley}). The PSD projection + Nyström step yields a valid GP kernel $\kplus^{(\D)}$ without introducing extra asymptotic complexity. We now measure its practical impact in \Cref{sec:xps}.

\section{Experiments}
\label{sec:xps}

We evaluate $\kplus^{(\D)}$ against two baselines: (i) the off-the-shelf kernel $\kb$ (no symmetry handling), and (ii) the orbit-averaged kernel $\kavg$~\citep{Brown24SampleEfficientBOInvariant}. Benchmarks include standard synthetic objectives and a real-world wireless design task with known invariances. We ask: \emph{(Q1) Does $\kplus^{(\D)}$ reduce simple/cumulative regret vs.\ $\kavg$?} and \emph{(Q2) How does performance scale with the size of the symmetry group and dimension?} Experimental details are in \Cref{app:benchmarks}.

\begin{table}[t]
    \centering
    \caption{Performance of $\kb$, $\kavg$, and $\kplus^{(\D)}$ across benchmarks. For each kernel $k\in\{\kb,\kavg,\kplus^{(\D)}\}$ we report $m \pm s_{\text{err}}$, where $m$ is the empirical mean over 10 seeds (lower is better) and $s_{\text{err}}$ is the empirical standard error. Best mean is \textbf{bold}; means $m$ whose 95\% confidence interval ($m \pm 1.96 s_{\text{err}}$) confidence interval overlap with the best are \underline{underlined}. Performance is measured by cumulative regret on synthetic benchmarks and by negated simple reward on real-world experiments.}
    \label{tab:results}
    \resizebox{\linewidth}{!}{%
    \begin{tabular}{lcccc}
        \toprule
        \textbf{Benchmark} & $|\G|$ & $\kb$ & $\kavg$ & $\kplus^{(\D)}$\\
        \midrule
        \textit{Synthetic (Cumulative Reg.)} & & & &\\
        Ackley2d & $8$ & $382.7 \pm 5.7$ & \underline{$128.2 \pm 10.4$} & $\mathbf{126.4 \pm 3.6}$\\
        Griewank6d & $64$ & $3840.3 \pm 177.7$ & \underline{$3067.4 \pm 841.9$} & $\mathbf{1832.6 \pm 146.3}$\\
        Rastrigin5d & $3,840$ & $3568.5 \pm 91.3$ & \underline{$1583.5 \pm 341.9$} & $\mathbf{813.4 \pm 70.6}$\\
        Radial2d & $\infty$ & $388.6 \pm 20.3$ & $480.9 \pm 76.4$ & $\mathbf{199.7 \pm 11.6}$\\
        Scaling2d & $\infty$ & \underline{$1820.6 \pm 1135.4$} & $3361.8 \pm 742.9$ & $\mathbf{25.4 \pm 6.4}$\\
        \midrule
        \textit{Real-World (Neg. Simple Rew.)} & & & &\\
        WLAN8d & $24$ & $-65.0 \pm 3.2$ & $-51.8 \pm 1.7$ & $\mathbf{-74.4 \pm 0.7}$\\
        \bottomrule
    \end{tabular}
    }
\end{table}

\begin{figure}[t]
    \centering
    \includegraphics[height=3.2cm]{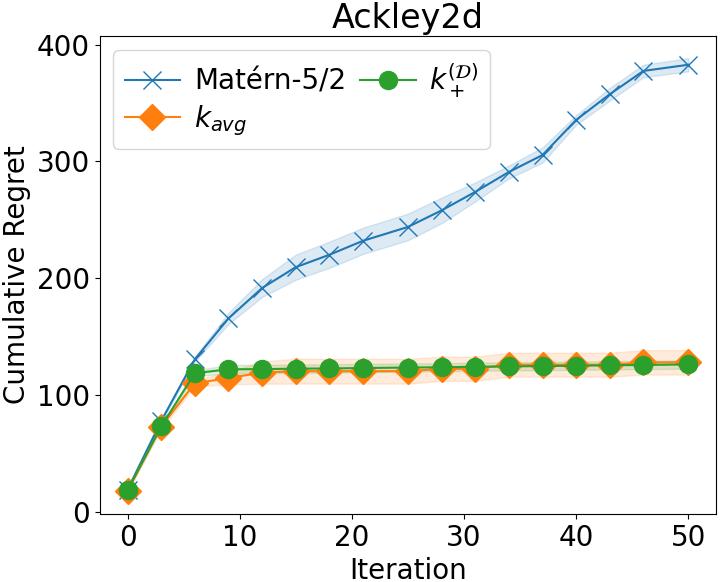} \quad
    \includegraphics[height=3.2cm]{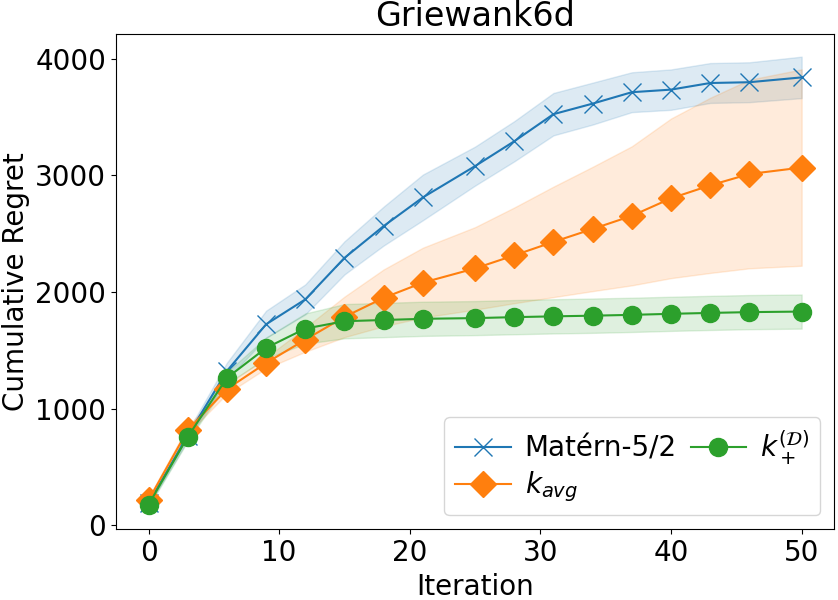} \quad
    \includegraphics[height=3.2cm]{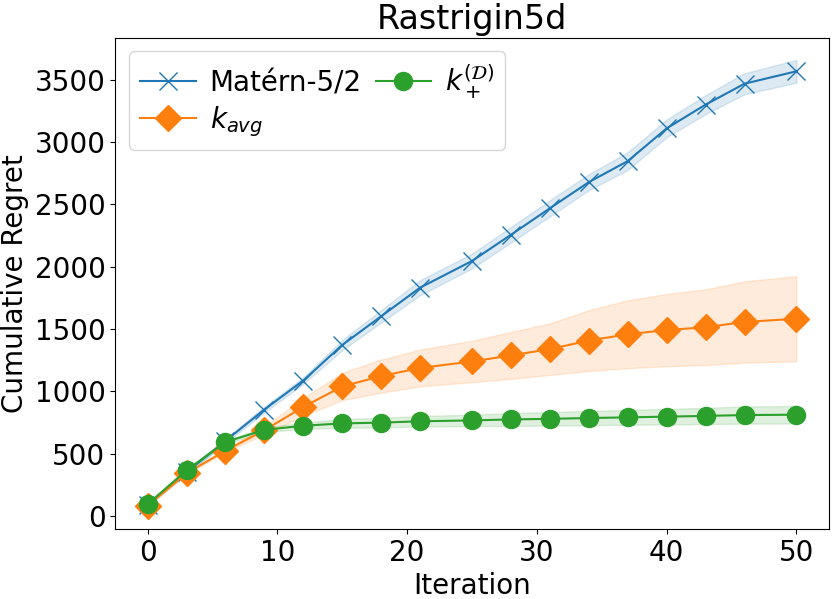} \quad
    \includegraphics[height=3.2cm]{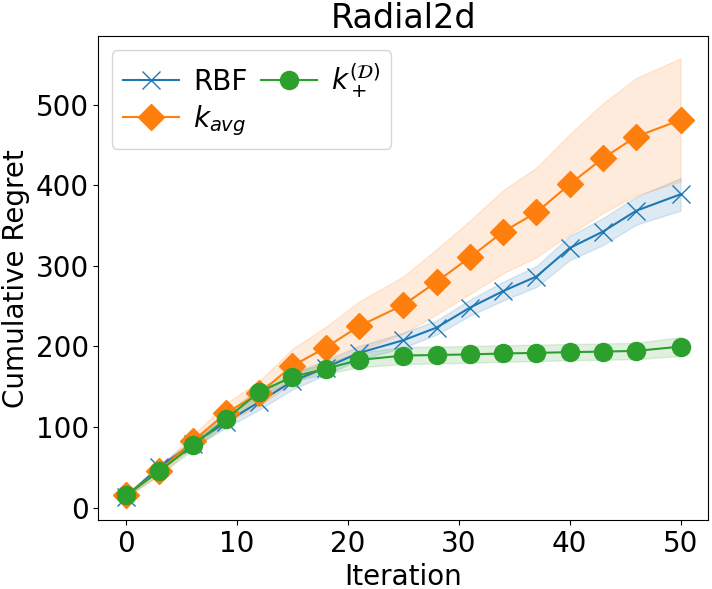} \quad
    \includegraphics[height=3.2cm]{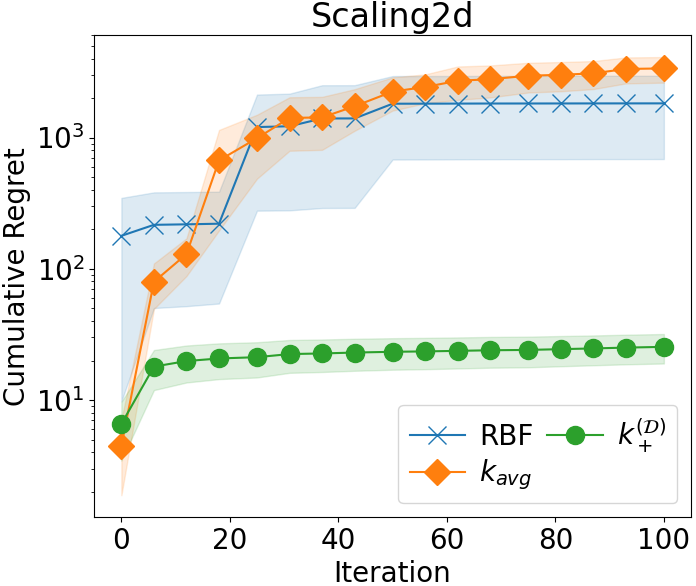} \quad
    \includegraphics[height=3.2cm]{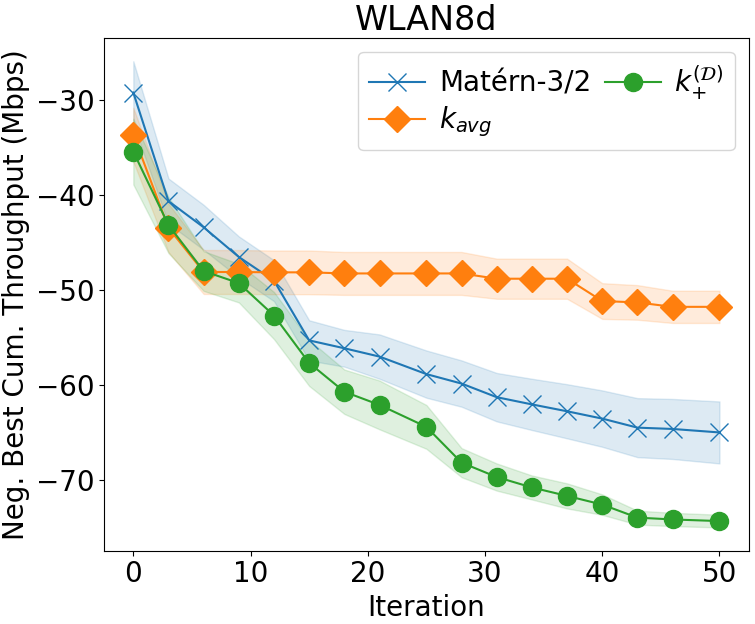}
    \caption{Cumulative regret and negated simple reward under GP-UCB with $\kb$ (blue crosses), $\kavg$ (orange diamonds), and $\kplus^{(\D)}$ (green circles). Shaded areas: standard error over 10 seeds.}
    \label{fig:xps}
\end{figure}

\textbf{Headline: $\kplus^{(\D)}$ wins on every task.}
Across all benchmarks (\Cref{tab:results}), $\kplus^{(\D)}$ achieves the best mean performance with up to 50\% of improvement. This answers \textbf{Q1} positively. Regarding \textbf{Q2}, we will see that as the group size increases, $\kplus^{(\D)}$ stays strong, while $\kavg$ degrades and can even underperform the non-invariant base kernel $\kb$.

\textbf{Setup in one glance.}
We run GP-UCB with each kernel $k\in\{\kb,\kavg,\kplus^{(\D)}\}$, using the same acquisition and optimization budgets. We report results averaged over 10 seeds. Synthetic objectives span $d\in\{2,\dots,6\}$ and symmetry sizes from $|\G|=8$ up to continuous groups ($|\G|=\infty$). The real-world task is an 8-dimensional AP-placement problem invariant to AP permutations (\Cref{sec:xps-real_world}). Hyperparameters and group actions are detailed in \Cref{app:benchmarks}.

\subsection{Synthetic Benchmarks}
\label{sec:xps-synthetic}

We consider synthetic functions $\blackbox$ (Ackley, Griewank, Rastrigin, etc.) that exhibit symmetries and are classically considered as challenging to optimize in the BO literature~\citep{qian2021bayesian, bardou2024too}. We cover dimensions $d=2$ to $d=6$ and group sizes $|\G|=8$ to $|\G|=\infty$. We evaluate performance using the cumulative regret $R_T = \sum_{i=1}^T \big(\blackbox(\x^*) - \blackbox(\x_t)\big)$ since the global maximizer $\x^* = \argmax_{\x \in \Sp} \blackbox(\x)$ is known. 

\textbf{\textbf{Finite groups: the gap widens as $|\G|$ grows.}}
With Matérn-5/2 base $\kb$ on Ackley2d ($|\G|{=}8$), $\kavg$ and $\kplus^{(\D)}$ are tied; both dominate $\kb$.
As $|\G|$ increases (Griewank6d, $|\G|{=}64$; Rastrigin5d, $|\G|{=}3{,}840$), $\kplus^{(\D)}$ increasingly outperforms $\kavg$ achieving cumulative regrets that are, on average, $40\%$ and $49\%$ lower respectively (\Cref{tab:results} and \Cref{fig:xps}, top panels).

\textbf{\textbf{Continuous groups: $\kavg$ can underperform even $\kb$.}}
For radial and scaling invariances (continuous groups; RBF base), $\kavg$ degrades relative to $\kb$, while $\kplus^{(\D)}$ remains strong (\Cref{fig:xps}, bottom left). 

\subsection{Wireless Network Design}
\label{sec:xps-real_world}

A wireless network (WN) consists of $m$ access points (APs) deployed over a given area to provide Internet connectivity to $p$ users. Since the quality of service (QoS) of each AP is degraded by interference from neighboring APs, determining optimal AP placement is a central challenge in WN design~\citep{wang2020thirty}. In this benchmark, we use a simulator that, given $p$ users and $m$ APs placed on a surface $\mathcal{A}$, evaluates the resulting QoS (see \Cref{app:benchmarks} for details). The optimization task is therefore to determine the positions of $m$ APs on a two-dimensional surface, yielding the $2m$-dimensional search space $\Sp = \mathcal{A}^{m}$. Because all APs are identical, the QoS function is naturally invariant under permutations of their positions. We use a Matérn-3/2 base $\kb$ to better capture threshold effects in the objective induced by AP-user associations~\citep{bardou2022inspire}. Performance is evaluated using the negated best reward $\min_{t \in [T]} -\blackbox(\x_t)$ attained during optimization (the regret cannot be computed because the max of $\blackbox$ is unknown), since the goal is to assess the quality of the best network configuration discovered by the optimizer, rather than the cumulative negative reward across all explored configurations.

\textbf{\textbf{$\kplus^{(\D)}$ finds better network configurations.}} 
In the AP-placement task with $p=16$ users and $m=4$ APs ($d=8,|\G|=24$ permutations), $\kplus^{(\D)}$ consistently discovers higher-throughput configurations than both $\kavg$ and $\kb$ (\Cref{fig:xps}, bottom right; \Cref{fig:wlan} in \Cref{app:benchmarks} contains the resulting network configuration).

\subsection{Robustness to Group Size}

Both synthetic and real-world benchmarks suggest that $\kavg$ performs comparably to $\kplus^{(\D)}$ when the group size $|\G|$ is small, but its performance deteriorates as $|\G|$ grows, whereas $\kplus^{(\D)}$ remains stable. To investigate this effect more systematically, we conduct additional experiments on the $d$-dimensional Ackley and Rastrigin benchmarks, each invariant under the hyperoctahedral group $\G$ of size $|\G| = 2^d d!$. We compare the average regret of $\kavg$ and $\kplus^{(\D)}$ after 50 iterations of GP-UCB for $d=1,\dots,5$, and include $\kb$ as a baseline to control for the effect of increasing dimensionality.

\begin{figure}[t]
    \centering
    \begin{subfigure}{0.32\textwidth}
        \centering
        \includegraphics[height=3.2cm]{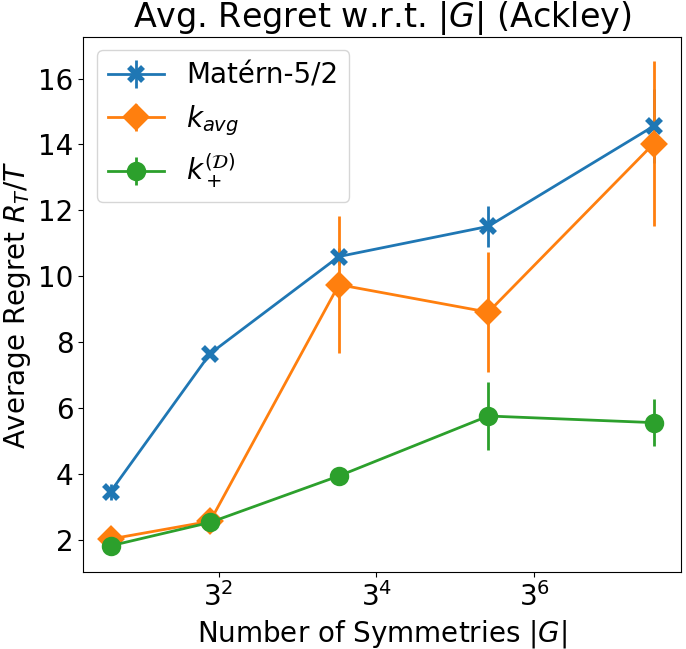}
    \end{subfigure}%
    \begin{subfigure}{0.32\textwidth}
        \centering
        \includegraphics[width=\linewidth]{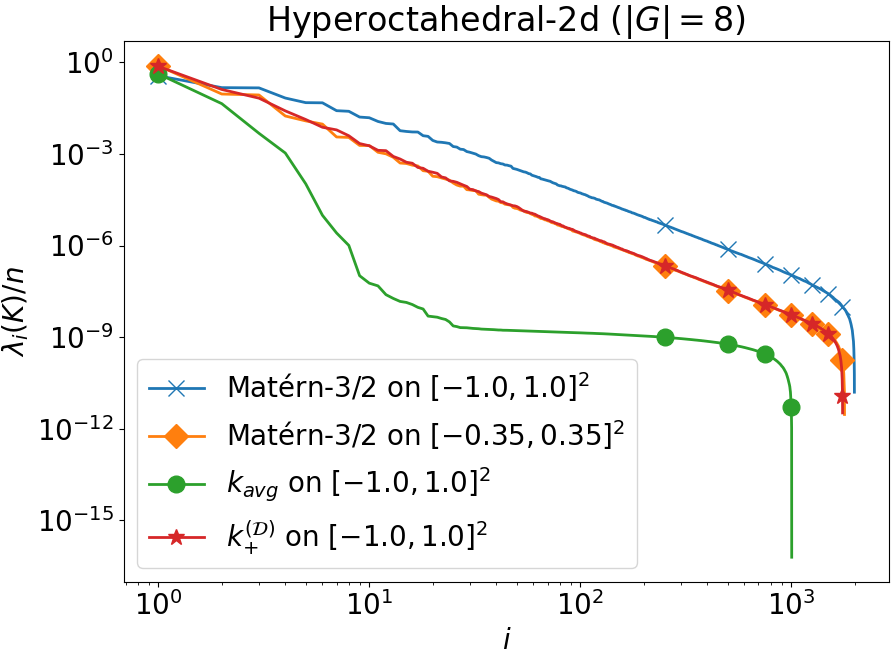}
    \end{subfigure}%
    \begin{subfigure}{0.32\textwidth}
        \centering
        \includegraphics[width=\linewidth]{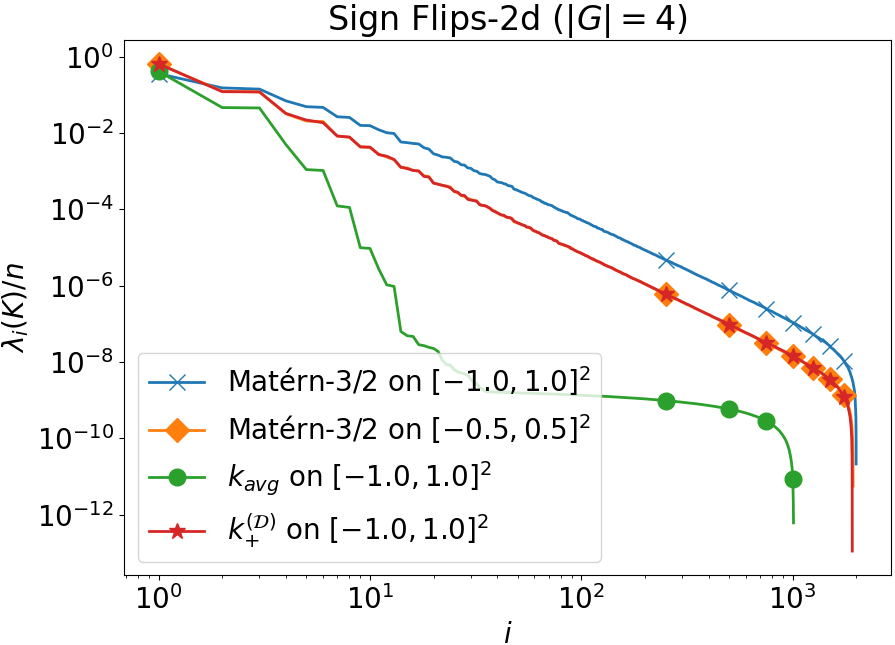}
    \end{subfigure}

    \begin{subfigure}{0.32\textwidth}
        \centering
        \includegraphics[height=3.2cm]{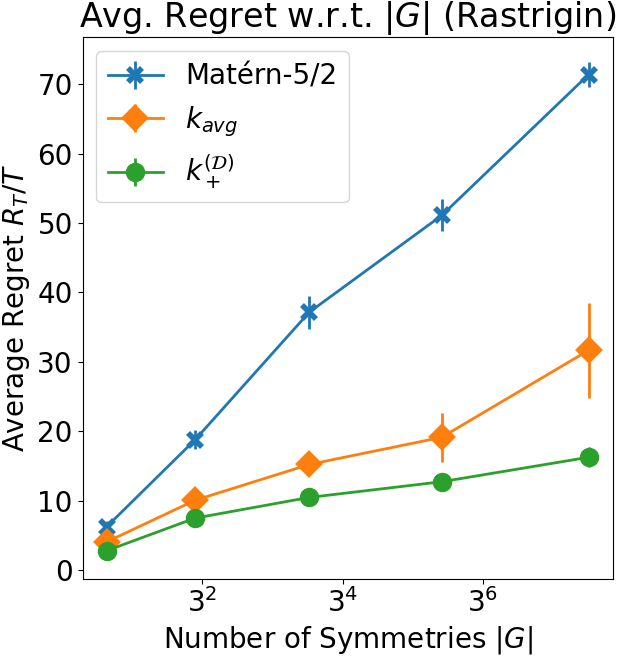}
    \end{subfigure}%
    \begin{subfigure}{0.32\textwidth}
        \centering
        \includegraphics[width=\linewidth]{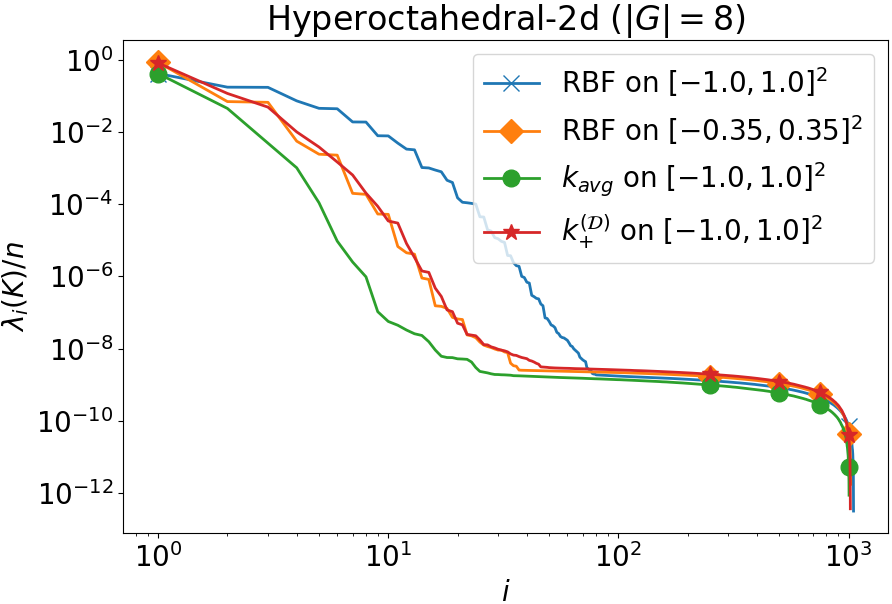}
    \end{subfigure}%
    \begin{subfigure}{0.32\textwidth}
        \centering
        \includegraphics[width=\linewidth]{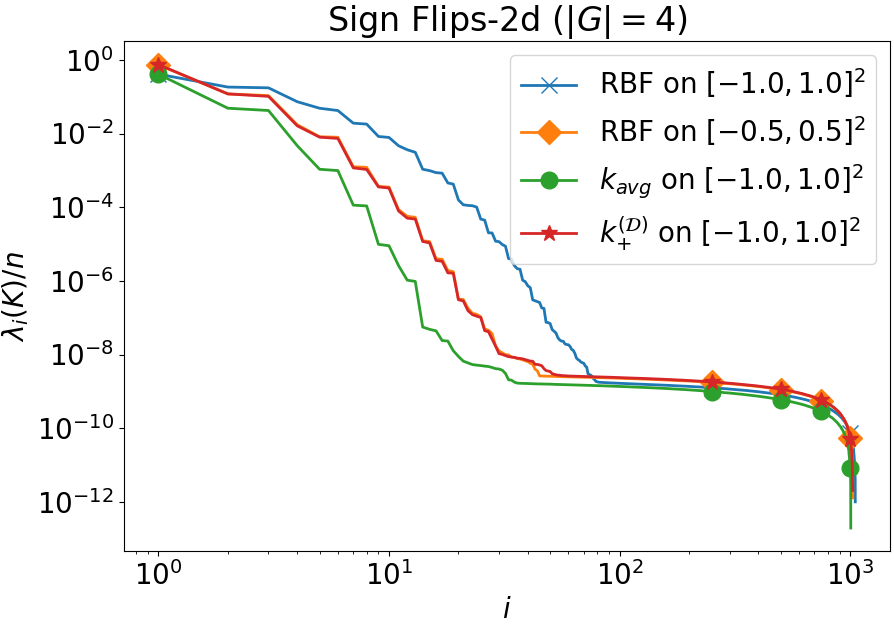}
    \end{subfigure}
    \caption{\textbf{Left column:} Average regret $R_T/T$ for $\kb$ (blue crosses), $\kavg$ (orange diamonds), and $\kplus^{(\D)}$ (green circles) on Ackley (top) and Rastrigin (bottom), averaged over 10 seeds with standard error bars.  
    \textbf{Middle and right columns:} Empirical eigendecays under different bases and groups (ordered, normalized eigenvalues of the Gram matrix).}
    \label{fig:merged}
\end{figure}

The results are shown in 
\Cref{fig:merged} (left column)
. Both experiments reveal the same trend: while $\kavg$ consistently outperforms $\kb$, its performance also deteriorates as $|\G|$ increases. In contrast, $\kplus^{(\D)}$ remains largely unaffected by the growing number of symmetries, demonstrating a clear robustness to group size. In the next section, we discuss several explanations for these empirical observations.

\textbf{\textbf{Takeaway.}}
$\kplus^{(\D)}$ consistently matches or outperforms $\kavg$ and $\kb$, with the largest gains at large $|\G|$. The evidence suggests that (i) \emph{how} a kernel encodes orbit alignments matters as much as \emph{whether} it is invariant, and (ii) averaging across many alignments can dilute informative similarities. These themes reconnect with our discussion in \Cref{sec:eigendecay} and motivate analyses beyond eigendecay rates. 

\section{Spectral Analysis and Regret Bounds}
\label{sec:eigendecay}

So far, $\kplus^{(\D)}$ has shown consistently lower regret than $\kavg$, despite comparable
computational cost. A natural question is: \emph{can existing BO theory account for such a gap?}
Current regret bounds for GP surrogates proceed via the information gain, which is shaped by the decay of the operator spectrum of the kernel. In particular, faster spectral decay leads to tighter regret upper bounds in standard analyses
\citep{srinivas-information:2012, valko2013finite, scarlett2017lower, whitehouse2023sublinear}.
We now compare the eigendecay of $\kplus^{(\D)}$ and $\kavg$, and ask whether it can
explain the empirical gap.

\textbf{Empirical eigendecays: similar or \emph{faster} decay for $\kavg$.}
Across our benchmarks, the empirical spectra of $\kplus^{(\D)}$ and $\kavg$ exhibit very similar
log--log slopes (decay rates). In several settings, $\kavg$’s eigenvalues decay even
\emph{faster} than those of $\kplus$; see 
\Cref{fig:merged} (middle and right columns).
Under the usual theory, this would translate into similar, or potentially \emph{tighter}, upper
bounds for methods run with $\kavg$ compared to those with $\kplus^{(\D)}$. A more detailed discussion of the empirical spectra in 
\Cref{fig:merged} 
and further insights are in \Cref{app:eigendecay}. 

\textbf{Limitations of eigendecay as an explanation.}
Since $\kavg$ matches or exceeds $\kplus^{(\D)}$ in empirical decay rate, standard theory would predict similar or better regret upper bounds. Yet in practice we consistently observe lower regret for $\kplus^{(\D)}$ (\Cref{sec:xps}). This suggests that eigendecay alone does not capture the structural advantages of $\kplus^{(\D)}$. We outline possible explanations in the conclusion.

\section{Conclusion}
\label{sec:conclusion}

Our spectral analysis highlights a gap between theory and practice: although $\kavg$ often
exhibits \emph{faster} empirical eigendecay than $\kplus^{(\D)}$, the latter consistently
achieves lower regret. Standard eigendecay arguments thus fail to explain the observed advantage
of $\kplus^{(\D)}$.

We hypothesize two complementary explanations.  
First, \textbf{geometry vs.\ rates:} eigendecay quantifies how fast spectra shrink but ignores
\emph{which} eigenfunctions are emphasized. In practice, $\kavg$ often introduces
\emph{similarity reversals}, distorting the search geometry
(\Cref{fig:motivating_example}), whereas $\kplus^{(\D)}$ preserves high-contrast alignments between orbits, inherited from $\kmax$.  
Second, \textbf{approximation hardness:} BO theory typically assumes that the black-box
$\blackbox$ lies in the RKHS $\mathcal{H}_k$ of the chosen kernel $k$. Existing work on
\emph{misspecification} \citep{bogunovic2021misspecified} shows that the cumulative regret can be bounded from below by a linear term that involves the distance between $\blackbox$ and $\mathcal{H}_k$. Yet even when this distance is zero,
different kernels may yield very different approximation rates, affecting how quickly BO can optimize $\blackbox$. This distinction matters: in our experiments with the RBF kernel as $\kb$ (\Cref{sec:xps}), $\mathcal{H}_{\kb}$ is universal (property of the RBF kernel, see~\citet{micchelli2006universal}),
hence invariant functions $\blackbox$ always lie in $\mathcal{H}_{\kavg}$ (consider $(Pf)(\x)=\sum_{g\in\G} f(\act{g}{\x})/|\G|$ the projection onto $\mathcal{H}_{\kavg}$~\citep[Appendix A]{Brown24SampleEfficientBOInvariant} and observe that if $f_n\to \blackbox$ with $f_n\in \mathcal{H}_{\kb}$ then $Pf_n \to \blackbox$ with $Pf_n \in\mathcal{H}_{\kavg}$). There is no
misspecification in the sense of \citet{bogunovic2021misspecified} since $d(\blackbox,\mathcal{H}_{\kavg})=0$, yet $\kavg$ still performs
worse than $\kplus^{(\D)}$. This suggests that $\blackbox$ is simply \emph{harder to approximate}
in $\mathcal H_{\kavg}$ than in $\mathcal H_{\kmax}$. 
A plausible reason why
\citet{Brown24SampleEfficientBOInvariant} report strong performance for
$\kavg$ is that they focus on functions that are explicit linear combinations
of relatively few $\kavg(\x_t,\cdot)$ atoms (between $64$ and $512$, depending
on dimension; see their Appendix~B.1). In such settings, $\kavg$ looks very effective since its GP posterior mean can in principle
recover the function exactly once those $x_t$ are sampled. Typical BO objectives do not share this structure, which
may explain why in our experiments $\kavg$ sometimes underperforms even the
base kernel, while $\kplus^{(\D)}$ remains more reliable. Developing regret bounds that also measure \emph{approximation hardness}, capturing both the distance to $\mathcal{H}_k$ and approximation rates, seems a promising way to obtain guarantees that align more closely with empirical performance.  

Finally, while our focus has been empirical, we note that the intrinsic data-independent version of $\kplus^{(\D)}$, which we called $\kplus$ and which we mentioned at the end of \Cref{sec:max_ker-kplus} (introduced formally in \Cref{app:kplus}), provides a natural, data-independent analogue of the practical kernel $\kplus^{(\D)}$. We see $\kplus$ as a convenient object for future theoretical work, as it cleanly isolates the PSD projection of $\kmax$ from the additional data dependence introduced by Nyström. We believe that it makes $\kplus$ a convenient starting point for any future theoretical work, in the same spirit as gradient flow serving as an idealized analogue of gradient descent.

\subsubsection*{Acknowledgments}

This work was supported in part by the Swiss State Secretariat for Education, Research and Innovation (SERI) under contract number MB22.00027.

\newpage

\appendix

\section{Proofs for \Cref{sec:max_ker}} \label{app:kmax_proofs}

\subsection{Full statement and proof of \Cref{prop:kmax_is_adequate}}
We state \Cref{prop:kmax_is_adequate} formally and give a slightly more detailed proof.
\begin{proposition}[Max-kernel covariance for invariant GPs]
\label{prop:kmax_is_adequate_app}
Let $\Sp,\Sp_h\subset\R^d$ be measurable spaces and let a (finite or compact) group $\G$ act measurably on $\Sp$.
Let $h\sim\GP(0,\kb)$ be a GP on $\Sp_h$ with an isotropic base kernel $\kb:(\x,\x')\in\Sp\times\Sp\mapsto \kappa(\|\x-\x'\|_2)$ where $\kappa:\Rp\to\Rp$ is nonincreasing.
Assume there exists $\phi_\G:\Sp\to\Sp_h$ satisfying (i) \emph{invariance:} $\phi_\G(\x)=\phi_\G(\act{g}{\x})$ for all $g\in\G, \x\in\Sp$; and
(ii) \emph{minimal-distance representativity:} $\|\phi_\G(\x)-\phi_\G(\x')\|_2=\min_{g,g'\in\G}\|\act{g}{\x}-\act{g'}{\x'}\|_2$.
Define $f(\x)=h(\phi_\G(\x))$. Then $f\sim\GP(0,\kmax)$ and it is $\G$-invariant.
\end{proposition}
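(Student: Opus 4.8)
The plan is to verify the three ingredients of the claim separately: that $f$ is a Gaussian process at all, that its mean and covariance are $0$ and $\kmax$, and that it is $\G$-invariant. The argument is short; the only places needing care are the fact that post-composing a GP with a deterministic map is still a GP, and the passage from a minimum inside $\kappa$ to a maximum outside it.

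First I would establish that $f$ is a GP by checking its finite-dimensional marginals. Fix any $\x_1,\dots,\x_n\in\Sp$. Since $\phi_\G$ is a fixed (deterministic) map, the points $\phi_\G(\x_1),\dots,\phi_\G(\x_n)$ are fixed elements of $\Sp_h$, and the random vector $\big(f(\x_1),\dots,f(\x_n)\big)=\big(h(\phi_\G(\x_1)),\dots,h(\phi_\G(\x_n))\big)$ is a subvector of the values of $h$ at these points; as $h\sim\GP(0,\kb)$, it is jointly Gaussian. Hence every finite-dimensional marginal of $f$ is Gaussian, so $f$ is a Gaussian process, fully characterized by its mean and covariance functions. (If a jointly measurable version is wanted, measurability of $\phi_\G$ together with the assumed measurable action suffices, but it is not needed for the GP property.)

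Next I would compute the mean and covariance. Centering of $h$ gives $\E[f(\x)]=\E[h(\phi_\G(\x))]=0$. For the covariance, isotropy of $\kb$ yields
$\Cov(f(\x),f(\x'))=\Cov\big(h(\phi_\G(\x)),h(\phi_\G(\x'))\big)=\kb(\phi_\G(\x),\phi_\G(\x'))=\kappa\big(\|\phi_\G(\x)-\phi_\G(\x')\|_2\big)$.
By the minimal-distance representativity (ii), $\|\phi_\G(\x)-\phi_\G(\x')\|_2=\min_{g,g'\in\G}\|\act{g}{\x}-\act{g'}{\x'}\|_2$, where the minimum is attained (trivially for finite $\G$; for compact $\G$ because $(g,g')\mapsto\|\act{g}{\x}-\act{g'}{\x'}\|_2$ is continuous on the compact set $\G\times\G$). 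Since $\kappa$ is nonincreasing, the elementary identity $\kappa(\min_i a_i)=\max_i\kappa(a_i)$ applies to the attained minimum, giving $\Cov(f(\x),f(\x'))=\max_{g,g'\in\G}\kappa\big(\|\act{g}{\x}-\act{g'}{\x'}\|_2\big)=\max_{g,g'\in\G}\kb(\act{g}{\x},\act{g'}{\x'})=\kmax(\x,\x')$, exactly \eqref{eq:kmax}. Together with the first step, $f\sim\GP(0,\kmax)$.

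Finally, $\G$-invariance is immediate from (i): for every $g\in\G$, $\x\in\Sp$, and every outcome of the underlying probability space, $f(\act{g}{\x})=h(\phi_\G(\act{g}{\x}))=h(\phi_\G(\x))=f(\x)$, so each sample path of $f$ is $\G$-invariant (not merely up to modification). I do not expect any genuine obstacle; the subtle points are purely the two flagged above — handling the composition at the level of finite-dimensional distributions rather than sample paths, and the monotonicity-driven $\min$–$\max$ swap, which relies on $\kappa$ being nonincreasing and on the relevant minimum being attained.
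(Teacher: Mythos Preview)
Your proof is correct and follows essentially the same route as the paper: verify that $f$ inherits Gaussianity from $h$, compute the covariance via isotropy of $\kb$, property (ii), and the monotone swap $\kappa(\min)=\max\kappa$, and read off $\G$-invariance from (i). Your version is simply more explicit about why the composition with $\phi_\G$ preserves the GP property and why attainment of the minimum (finite or compact $\G$) legitimizes the min--max swap, points the paper handles in a single line plus a remark.
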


\begin{proof}
Since $g$ is a GP, $f$ is also a GP, and invariance follows from (i). Its covariance kernel is $\kmax$ since:
\begin{align}
        \Cov\left[f(\x), f(\x')\right] &= \Cov\left[h(\phi_\G(\x)), h(\phi_\G(\x'))\right] \nonumber\\
        &= \kb(\phi_\G(\x), \phi_\G(\x')) \nonumber\\
        &= \kappa(\min_{g,g' \in \G} ||\act{g}{\x} - \act{g'}{\x'}||_2) \label{eq:proof_isotropy_and_phi_prop}\\
        &= \max_{g,g' \in \G} \kappa(||g\x - g'\x'||_2) \label{eq:proof_kappa_nonincreasing}\\
        &= \kmax(\x, \x')
\end{align}
where we used (ii) in \Cref{eq:proof_isotropy_and_phi_prop}, and monotonicity of $\kappa$ in \Cref{eq:proof_kappa_nonincreasing}. Note that compactness of $G$ guarantees that the minimum in (ii) is indeed achieved, which makes \Cref{eq:proof_kappa_nonincreasing} true even when $\kappa$ is not necessarily continuous. 
\end{proof}

\subsection{Averaging vs Maximization with Different Base Kernels}
\label{app:kavg_vs_kmax}

We extend \Cref{prop:kavg_does_not_fit} to the case where 
$\kavg$ and $\kmax$ are built from \emph{different} base kernels.
The result shows that even in this more flexible setting, the coincidence of $\kavg$ and $\kmax$ can only occur in degenerate situations.

\begin{lemma}\label{prop:kavg_does_not_fit_diffkernels}
Let $\kb$ and $\kb'$ be two base kernels such that 
$\|\kb\|_\infty=\|\kb'\|_\infty=1$ and $\kb'(\x,\x)=1$ for all $\x$.  
Let $\kavg$ be the group-averaged kernel built from $\kb$ and $\kmax$ be the maximization kernel built from $\kb'$.  
It holds
\[
  \kavg = \kmax \quad \text{on the orbit } 
  \mathcal O(\x,\act{g}{\x}) := \{(\act{h}{\x},\act{h'}{\act{g}{\x}}), \; h,h'\in\G\}
\]
for every $\x\in\mathcal X$ and $g\in\G$, 
if and only if
\[
  \kb(\x,\act{g}{\x}) = \kmax(\x,\act{g}{\x}) = 1 \quad \text{for every $\x$ and $g\in\G$}.
\]
In particular, this forces $\kb$ to already exhibit a form of $\G$-invariance on pairs $(\x,\act{g}{\x})$.
\end{lemma}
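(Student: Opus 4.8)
The plan is to first observe that the right-hand condition is partly automatic: $\kmax(\x,\act{g}{\x})=1$ for every $\x$ and $g\in\G$, regardless of $\kb$. Indeed, in $\kmax(\x,\act{g}{\x})=\max_{h,h'\in\G}\kb'(\act{h}{\x},\act{h'}{(\act{g}{\x})})$ the choice $h=g$ together with $h'$ the identity element of $\G$ produces the term $\kb'(\act{g}{\x},\act{g}{\x})=1$, while $\|\kb'\|_\infty=1$ bounds every term from above by $1$; hence the maximum equals $1$. Moreover, since $\kavg$ and $\kmax$ are both $\G$-invariant (recalled in \Cref{sec:background-invbo} and \Cref{sec:max_ker}), the statement ``$\kavg=\kmax$ on $\mathcal O(\x,\act{g}{\x})$'' is equivalent to the single scalar identity $\kavg(\x,\act{g}{\x})=\kmax(\x,\act{g}{\x})$. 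These two remarks reduce the lemma to the equivalence: $\kavg(\x,\act{g}{\x})=1$ for all $\x,g$ if and only if $\kb(\x,\act{g}{\x})=1$ for all $\x,g$.

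For the ``if'' direction, I would assume $\kb(\x,\act{g}{\x})=1$ for all $\x,g$, which, re-indexed, says $\kb(\y,\act{a}{\y})=1$ for every $\y\in\Sp$ and $a\in\G$. Expanding $\kavg(\x,\act{g}{\x})=|\G|^{-2}\sum_{h,h'\in\G}\kb(\act{h}{\x},\act{h'}{(\act{g}{\x})})$ and rewriting each argument pair via the group law, $\act{h'}{(\act{g}{\x})}=\act{(h'gh^{-1})}{(\act{h}{\x})}$, every summand becomes $\kb(\y,\act{a}{\y})=1$ with $\y=\act{h}{\x}$ and $a=h'gh^{-1}\in\G$. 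Hence $\kavg(\x,\act{g}{\x})=1=\kmax(\x,\act{g}{\x})$, and $\G$-invariance carries this equality over the whole orbit $\mathcal O(\x,\act{g}{\x})$.

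For the ``only if'' direction, I would assume $\kavg(\x,\act{g}{\x})=1$ for all $\x,g$. Since $\kavg(\x,\act{g}{\x})$ is an average over $(h,h')$ of the real numbers $\kb(\act{h}{\x},\act{h'}{(\act{g}{\x})})$, each of which is at most $\|\kb\|_\infty=1$, an average equal to $1$ forces every summand to equal $1$; taking $h$ and $h'$ to be the identity yields $\kb(\x,\act{g}{\x})=1$, which, together with $\kmax(\x,\act{g}{\x})=1$ already established, is exactly the right-hand condition. The closing ``in particular'' then follows by applying $\kb(\y,\act{a}{\y})=1$ with $\y=\act{g}{\x}$ and $a=g'g^{-1}$, giving $\kb(\act{g}{\x},\act{g'}{\x})=1$ for all $g,g'$, i.e.\ $\kb$ is constant, hence $\G$-invariant, on pairs lying in a common orbit.

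The argument is elementary, so there is no serious obstacle; the two points that need care are (a) spotting that $\kmax\equiv 1$ on same-orbit pairs, which is what makes the equivalence clean and is exactly where the normalization $\kb'(\x,\x)=1$ enters, and (b) the bookkeeping identity $\act{h'}{(\act{g}{\x})}=\act{(h'gh^{-1})}{(\act{h}{\x})}$, which converts a generic $\kavg$-summand into the diagonal-orbit form $\kb(\y,\act{a}{\y})$. If one instead takes $\G$ compact with normalized Haar measure in place of the $|\G|^{-2}$ average, the ``only if'' step gives ``integrand $=1$ almost everywhere'', which upgrades to ``everywhere'' under continuity of $\kb$.
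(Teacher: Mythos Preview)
Your proof is correct and follows essentially the same approach as the paper: first observe that $\kmax(\x,\act{g}{\x})=1$ automatically from $\kb'(\x,\x)=1$ and $\|\kb'\|_\infty=1$, then use that an average of terms bounded by $1$ equals $1$ only if every term equals $1$. Your write-up is in fact slightly more complete than the paper's, which dismisses the converse as ``immediate'' without writing out the group bookkeeping $\act{h'}{(\act{g}{\x})}=\act{(h'gh^{-1})}{(\act{h}{\x})}$ that you supply.
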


\begin{proof}
($\Rightarrow$) 
Fix $\x$ and $g\in\G$.  Since by assumption $\kb'$ is bounded by $1$ and $\kb'(\x,\x)=1$:
\[
 1 \geq \kmax(\x,\act{g}{\x}) 
   = \max_{h,h'\in\G}\kb'(\act{h}{\x},\act{h'}{\act{g}{\x}}) \geq \kb'(\x,\x)=1
\]
so $\kmax(\x,\act{g}{\x}) = 1$.

Now consider $\kavg$. By definition,
\[
 \kavg(\x,\act{g}{\x})
   = \frac{1}{|\G|^2}\sum_{h,h'\in\G}
      \kb(\act{h}{\x},\act{h'}{\act{g}{\x}}).
\]
Each summand is bounded by $1$ and the average is equal to $1$ as $ \kavg(\x,\act{g}{\x}) = \kmax(\x,\act{g}{\x}) = 1$. Therefore each term is equal to $1$, which proves $\kb=\kmax=1$ on $\mathcal{O}(\x,\act{g}{\x})$. As this is true for every $\x,g\in\G$, this shows the result. The converse is immediate.
\end{proof}

This shows that even when allowing different base kernels for $\kavg$ and $\kmax$, equality between the two kernels requires $\kb$ to already be argumentwise $\G$-invariant on pairs $(\x,\act{g}{\x})$.  This fails for standard choices (e.g.\ RBF kernels with translation or rotation groups),  so averaging cannot replicate maximization in practice.

\section{Radial invariance: closed form for $\kavg$}
\label{app:radial_kavg}

We prove the formulas provided in \Cref{ex:radial}. 
Let $\G=\mathrm{SO}(2)$ act on $\R^2$ by in-plane rotations, and let $\kb$ be the RBF kernel with lengthscale $l$:
$\kb(\x,\x')=\exp\!\big(-\|\x-\x'\|_2^2/(2l^2)\big)$. Writing $\x=(r,\theta)$ and $\x'=(s,\varphi)$ in polar coordinates, we have
\[
\kavg(\x,\x') = \frac{1}{(2\pi)^2}\int_{0}^{2\pi}\!\!\int_{0}^{2\pi}
\exp\!\Big(-\tfrac{r^2+s^2-2rs\cos(\theta-\varphi+\alpha-\beta)}{2l^2}\Big)\, d\alpha\, d\beta.
\]
Integrating out the absolute angle and keeping only the relative angle $\psi=\theta-\varphi+\alpha-\beta$ yields
\[
\kavg(\x,\x') = \exp\!\left(-\tfrac{r^2+s^2}{2l^2}\right)\cdot \frac{1}{2\pi}\int_{0}^{2\pi}
\exp\!\left(\tfrac{rs}{l^2}\cos\psi\right)\,d\psi
= \exp\!\left(-\tfrac{r^2+s^2}{2l^2}\right) I_0\!\left(\tfrac{rs}{l^2}\right),
\]
where $I_0(z)=\tfrac{1}{2\pi}\int_0^{2\pi}e^{z\cos\psi}\,d\psi$ is the modified Bessel function of order~0.

\section{An intrinsic PSD projection $\,\kplus\,$ and its properties}
\label{app:kplus}

In the main text we defined a \emph{data-dependent} kernel $\kplus^{(\D)}$, corresponding to a PSD projection of $\kmax$ on a finite set of samples $\D$, extended by Nyström. This finite-sample construction $\kplus^{(\D)}$ is the star of the show in practice (as it is convenient to compute, and shows strong performance in practice). However, its data-dependence might make theoretical analysis quite involved. In this appendix, we show that $\kplus^{(\D)}$ is the finite-sample facet of a broader, intrinsic \emph{data-independent} PSD projection $\kplus$ of $\kmax$ which (i) preserves the
$\G$-invariance of $\kmax$, (ii) coincides with $\kmax$ whenever $\kmax$ is already
PSD. 
Since the PSD projection of $\kmax$ discussed here can also be applied to any other indefinite kernel $k$, we directly introduce it for an arbitrary kernel $k$. 

We begin as a warmup with the finite-domain “matrix” construction to build intuition, and then lift it to general domains via integral operators.

\subsection{Warmup: finite domains}
\label{app:kplus:finite}

We start on a finite domain $\Sp$ to build intuition. In that case, 
$\kplus$ is simply Frobenius-nearest PSD truncation of the Gram matrix on the \emph{full domain} $\Sp$, which is unique,
basis-independent, preserves $\G$-invariance, and coincides with $k$ when $k$ is already PSD. 

Let $\Sp=\{\x_1,\ldots,\x_N\}$ be finite, and let $\G$ act on $\Sp$. Consider any symmetric kernel
$k$ on $\Sp$ with Gram matrix $\K\in\R^{N\times N}$ (possibly indefinite) given by $\K_{ij}=k(\x_i,\x_j)$. We define $\kplus$ as the kernel corresponding to the Frobenius-nearest PSD projection of $\K$ \citep{HIGHAM1988103}. 

\begin{lemma}[Frobenius PSD projection and explicit form \citep{HIGHAM1988103}]
\label{lem:finite-psd-projection}
The optimization problem $\;\K_+ := \argmin_{\bm P\succeq 0}\|\bm P-\K\|_F\;$ has a unique solution and,
for any eigendecomposition $\K=Q\Lambda Q^\top$, it is given by 
\[
\K_+ \;=\; \bm Q\,\max(0,\bm \Lambda)\,\bm Q^\top ,
\]
where $\max(0,\cdot)$ acts entrywise on $\bm \Lambda$. In particular, the matrix $\K_+$ depends
only on $\K$ (not on the chosen eigenbasis), satisfies $\K_+\succeq 0$, and $\K_+=\K$ iff $\K\succeq 0$. 
\end{lemma}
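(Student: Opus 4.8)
The plan is to reduce the matrix problem to $N$ independent scalar problems via the spectral theorem, which simultaneously delivers the closed form, uniqueness, and the two corollaries. First I would fix any orthogonal eigendecomposition $\K=\bm Q\bm\Lambda\bm Q^\top$ with $\bm\Lambda=\diag(\lambda_1,\dots,\lambda_N)$, and exploit orthogonal invariance of the Frobenius norm: for an arbitrary symmetric $\bm P$, setting $\bm M:=\bm Q^\top\bm P\bm Q$ gives $\|\bm P-\K\|_F=\|\bm M-\bm\Lambda\|_F$, and $\bm P\mapsto\bm M$ is a bijection of the PSD cone onto itself ($\bm P\succeq 0\iff\bm M\succeq 0$). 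Hence the problem is equivalent to $\min_{\bm M\succeq 0}\|\bm M-\bm\Lambda\|_F$ with $\bm\Lambda$ diagonal.

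Next I would bound the reduced objective from below. Since $\bm M\succeq 0$ forces $\bm M_{ii}\ge 0$ for every $i$,
\[
\|\bm M-\bm\Lambda\|_F^2=\sum_i(\bm M_{ii}-\lambda_i)^2+\sum_{i\ne j}\bm M_{ij}^2\ \ge\ \sum_i\min_{m\ge 0}(m-\lambda_i)^2=\sum_i\big(\max(0,\lambda_i)-\lambda_i\big)^2 ,
\]
and for each $i$ the scalar minimum is attained \emph{only} at $m=\max(0,\lambda_i)$, since $m\mapsto(m-\lambda_i)^2$ restricted to $[0,\infty)$ has a unique minimizer there. Equality in the displayed chain therefore forces $\bm M$ to be diagonal with $\bm M_{ii}=\max(0,\lambda_i)$, i.e. $\bm M=\max(0,\bm\Lambda)$, which is diagonal with nonnegative entries, hence feasible. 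So the reduced problem has the unique minimizer $\max(0,\bm\Lambda)$, and transporting back through the bijection yields the unique solution $\K_+=\bm Q\max(0,\bm\Lambda)\bm Q^\top$.

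The remaining assertions are then immediate. Basis-independence: $\K_+$ is characterized as the unique minimizer of an optimization problem whose only data is $\K$, so it cannot depend on the chosen $\bm Q$; equivalently $\bm Q\max(0,\bm\Lambda)\bm Q^\top=f(\K)$ is the functional calculus applied with $f(t)=\max(0,t)$, which the spectral theorem defines unambiguously even when eigenvalues repeat or vanish. Positive semidefiniteness of $\K_+$ is clear from its form, and $\K_+=\K\iff\max(0,\bm\Lambda)=\bm\Lambda\iff\lambda_i\ge 0$ for all $i\iff\K\succeq 0$.

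I do not expect a real obstacle: this is the classical result of \citet{HIGHAM1988103}. The only point meriting care is that the eigendecomposition is non-unique when $\K$ has repeated or zero eigenvalues, so the conclusion must be phrased either through the optimization characterization or through functional calculus to guarantee the stated formula is well-defined; both routes are covered above.
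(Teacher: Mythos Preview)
Your proof is correct and is the standard argument for this classical result. The paper itself does not supply a proof of this lemma: it is stated with a citation to \citet{HIGHAM1988103} and used immediately thereafter, so there is nothing to compare against beyond noting that your route (orthogonal invariance of $\|\cdot\|_F$, reduction to the diagonal case, then independent scalar projections onto $[0,\infty)$) is precisely the textbook derivation, and your handling of basis-independence via the uniqueness of the minimizer (equivalently, functional calculus) is the right way to address repeated eigenvalues.
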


\noindent
We \emph{define} $\kplus$, the (Frobenius) PSD projection of $k$, as:
\begin{equation}
\label{eq:finite-Kplus}
\kplus(x_i,x_j) \;:=\; (\K_+)_{ij}, \qquad i,j\in[N].
\end{equation}

\paragraph{Inheritance of $\G$-invariance.}

Each element $g\in\G$ induces a permutation of the elements of $\Sp$: let  $\pi_g$ be the permutations of the integers $j\in\{1,\dots,N\}$ defined by $\act{g}{\x_j} = \x_{\pi_g(j)}$.  
Denote by $\bm P_g$ the permutation matrix associated with $\pi_g$. For every vector $\bm v$, the matrix $\bm P_g$ acts as $(\bm P_g \bm v)_i = \bm v_{\pi_g^{-1}(i)}$ which is equivalent to the action on canonical vectors $\bm P_g \bm e_j = \bm e_{\pi_g(j)}$ or $(\bm P_g)_{ij}=1_{i=\pi_{g}(j)}$. 

Invariance in the first component guarantees $\kmax(\x_{\pi_g(i)}, \x_j) = \kmax(\act{g}\x_i, \x_j) = \kmax(\x_i, \x_j)$ for every $i,j\in\{1,\dots,N\}$, i.e., the rows of $\K=(k(\x_i,\x_j))_{i,j}$ are invariant under the permutation $\pi_g$, hence $\bm P_g \K = \K$. Thus, for any positive integer $m$, $\bm P_g \K^m = (\bm P_g \K)\K^{m-1} = \K^m$ so for any polynomial $p$ such that $p(0)=0$, $\bm P_g p(\K) = p(\K)$. Now consider a sequence $(p_n)_n$ of polynomials such that\footnote{We can impose $p_n(0)=0$ since $f(0)=0$. Indeed, take $p_n(\lambda) = q_n(\lambda) - q_n(0)$ where $q_n$ is a sequence given by Weierstrass' theorem, which converges to $f(\lambda)=\max(0,\lambda)$ on the spectrum of $\K$. We have $|p_n(\lambda) - f(\lambda)| \leq |q_n(\lambda) - f(\lambda)| + |q_n(0)|$ and because $f(0)=0$ we get $|q_n(0)| = |q_n(0) - f(0)|\to 0$.} $p_n(0)=0$ and $|p_n(\lambda) - \max(0,\lambda)|\underset{n\to\infty}{\to} 0$ for any $\lambda$ in the spectrum of $\K$. In the limit 
$\bm P_g \K_+ = \K_+$,
hence $\kplus$ is invariant under the action of $\G$ on the first variable ($\kplus(\act{g}{\x},\x') = \kplus(\x,\x')$), and invariance along the second one follows by symmetry ($\K_+ \bm P_g^\top = \K_+$). This shows that $\kplus$ inherits from the $\G$-invariance of $k$ (equivalently, $\bm P_g \K = \K = \K \bm P_g^\top$ for all $g$). We collect this result in the next lemma.

\begin{lemma}[Invariance is preserved by the projection]
\label{lem:finite-invariance}
Consider $g\in\G$. If $P_g \K = \K$, then $P_g \K_+ = \K_+ = \K_+ P_g^\top$. Hence the
projected kernel $\kplus$ is $\G$-invariant on $\Sp\times\Sp$.
\end{lemma}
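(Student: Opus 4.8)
The plan is to diagonalize $\K$ and exploit the fact that the hypothesis $\bm P_g\K=\K$ forces \emph{every eigenvector of $\K$ attached to a nonzero eigenvalue} to be fixed by $\bm P_g$. Since $\K_+$ is built out of exactly the positive part of this spectrum (\Cref{lem:finite-psd-projection}), it is then automatically fixed too, and the second identity follows by transposition because $\K_+$ is symmetric. Translating the two matrix identities back through the correspondence $\act{g}{\x_j}=\x_{\pi_g(j)}$ gives $\G$-invariance of $\kplus$.

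Concretely, first I would fix an orthonormal eigenbasis and write $\K=\sum_i\lambda_i\,\vv_i\vv_i^\top$; by \Cref{lem:finite-psd-projection}, $\K_+=\sum_{i:\lambda_i>0}\lambda_i\,\vv_i\vv_i^\top$, independently of the chosen basis. For any index $i$ with $\lambda_i\neq 0$, applying $\bm P_g\K=\K$ to the vector $\vv_i$ gives $\lambda_i\,\bm P_g\vv_i=\bm P_g(\K\vv_i)=\K\vv_i=\lambda_i\vv_i$, hence $\bm P_g\vv_i=\vv_i$. Summing over the positive part of the spectrum, $\bm P_g\K_+=\sum_{i:\lambda_i>0}\lambda_i\,(\bm P_g\vv_i)\vv_i^\top=\K_+$. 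Since $\K_+$ is symmetric (being PSD), transposing yields $\K_+\bm P_g^\top=(\bm P_g\K_+)^\top=\K_+^\top=\K_+$.

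Finally I would convert the matrix statements into the kernel statement, exactly as in the discussion preceding the lemma: with $\bm P_g$ the permutation matrix of $\pi_g$, the identity $\bm P_g\K_+=\K_+$ reads $\kplus(\act{g}{\x_i},\x_j)=\kplus(\x_i,\x_j)$ for all $i,j$, and $\K_+\bm P_g^\top=\K_+$ gives the same in the second argument; as $g\in\G$ was arbitrary, this is the argumentwise $\G$-invariance of $\kplus$ on $\Sp\times\Sp$.

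There is essentially no obstacle once the eigenbasis viewpoint is adopted; the only point that needs attention is that the implication ``$\bm P_g\vv_i=\vv_i$'' is used \emph{only} for $\lambda_i\neq 0$, which is precisely why controlling the $\max(0,\cdot)$-clipped spectrum suffices. If one instead argues via functional calculus, approximating $t\mapsto\max(0,t)$ by polynomials in $\K$ (the route taken in the paragraph before the lemma), the analogous care is to normalize the approximants to vanish at $0$ — otherwise $\bm P_g\,p(\K)\neq p(\K)$, since $\bm P_g\bm I\neq\bm I$ in general — and the convergence $p_n(\K)\to\K_+$ is then immediate in any matrix norm because it reduces to $\max_i|p_n(\lambda_i)-\max(0,\lambda_i)|\to 0$.
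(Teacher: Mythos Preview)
Your proof is correct. The paper's proof of this finite-domain lemma takes the polynomial-approximation route you mention in your final paragraph: it observes $\bm P_g\K^m=\K^m$ for all $m\ge1$, hence $\bm P_g\,p(\K)=p(\K)$ for any polynomial with $p(0)=0$, and then approximates $t\mapsto\max(0,t)$ by such polynomials via Weierstrass. Your argument instead works directly at the spectral level, showing that $\bm P_g$ fixes every eigenvector with nonzero eigenvalue and hence fixes $\K_+$. This is more elementary here (no approximation needed), and it is in fact exactly the strategy the paper itself adopts when it lifts the result to general domains (\Cref{lem:func_calc_commute}, Step~2): there too the key observation is that $U_g$ acts as the identity on each nonzero eigenspace of $T$. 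So your route is not only correct but anticipates the paper's own generalization; the polynomial-approximation argument, on the other hand, has the virtue of making transparent that \emph{any} function of $\K$ vanishing at $0$ inherits the invariance, which is conceptually useful even if slightly heavier in this finite setting.
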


\paragraph{Relation to the practical Nyström kernel.}
If the set $\D=\{\x_1,\ldots,\x_n\}$ used to build $\kplus^{(\D)}$ (\Cref{eq:nystrom-kplus}) equals the whole domain $\D=\Sp$, then $\kplus^{(\D)}=\kplus$. Indeed, $\kplus^{(\D)}(\x_i,\x_j)=\K_{i:} \K^\dagger_+ \K_{:j} = (\K\K^\dagger_+\K)_{ij}=(\K_+)_{ij}$ on $\D\times \D$, and the latter is the definition of $\kplus$ on finite domains.

We now generalize the matrix considerations above using integral operators. The finite-domain construction is recovered as a special case. 

\subsection{General definition (via integral operators theory)}

We lift the finite-domain construction of the previous subsection to general domains by viewing $k$ as a
Hilbert–Schmidt operator and defining $\kplus$ as the positive part of $T_k$; this yields a PSD,
data-independent kernel that inherits any $\G$-invariance and equals $k$ whenever $k$ is PSD.

Let $(\Sp,\mathcal{T},\mu)$ be a probability space. 
For a measurable, symmetric kernel $k:\Sp\times\Sp\to\R$ with $k\in L^2(\mu\otimes\mu)$,
let the (compact, self-adjoint) Hilbert-Schmidt operator $T_k:L^2(\mu)\to L^2(\mu)$ be
\[
  (T_k f)(\x)\;=\;\int_{\Sp} k(\x,\x')\,f(\x')\,d\mu(\x').
\]
(Note that in the finite-domain case, $f$ is a vector indexed by the domain and if $\mu$ is the uniform measure then $T_k$ is simply multiplication by the Gram matrix $\K$ normalized by the domain size.) 
By the spectral theorem, there exist $(\lambda_i,\phi_i)_{i\ge1}$ with $\{\phi_i\}$ orthonormal in
$L^2(\mu)$ and $(\lambda_i)\in\ell^2$ (possibly of mixed signs) such that $T_k=\sum_{i\ge1}\lambda_i\,\phi_i\otimes\phi_i$ in $L^2(\mu)$ where for every $u,v\in L^2(\mu)$, $u\otimes v$ is the rank-one operator $L^2(\mu) \to L^2(\mu)$ such that $(u\otimes v)f := \langle f,v\rangle\,u$ for every $f\in L^2(\mu)$.

\paragraph{Generic definition of $\kplus$ via operator theory.}
Define the positive part of $T_k=\sum_i\lambda_i\,\phi_i\otimes\phi_i$ by $T_k^+:=\sum_i (\lambda_i)_+\,\phi_i\otimes\phi_i$, where $(t)_+=\max\{t,0\}$. Since $\sum_i ((\lambda_i)_+)^2 \leq \sum_i \lambda_i^2 < \infty$, the series
\begin{equation}
  \label{eq:kplus-spectral}
  \kplus(\x,\x') \;:=\; \sum_{i\ge1} (\lambda_i)_+\,\phi_i(\x)\,\phi_i(\x') 
  \quad\text{($\mu\otimes\mu$-a.e.)}.
\end{equation}
converges in $L^2(\mu\otimes\mu)$ and defines a kernel $\mu\otimes\mu$-almost everywhere.
By construction\footnote{Indeed, by definition $(T_{\kplus}f)(\x)
= \int_\Sp \Big(\sum_{i\ge1}(\lambda_i)_+ \phi_i(\x)\phi_i(\x')\Big) f(\x')\,d\mu(\x')=\sum_{i\ge1} (\lambda_i)_+ \,\langle f,\phi_i\rangle\, \phi_i(\x) = \Bigl(\Big(\sum_{i\ge1} (\lambda_i)_+\,\phi_i\otimes\phi_i\Big) f\Bigr) (\x)= \big(T_k^+ f\big)(\x)$.} $T_{\kplus}=T_k^+$, hence $\kplus$ is PSD as a kernel a.e., and PSD in the operator sense: $\inner{f}{T_{\kplus}f}\geq 0$ for all $f\in L^2(\mu)$.
In particular, if $k$ was already PSD (all $\lambda_i\ge0$), then $\kplus=k$ (up to null sets). 
It also inherits $\G$-invariance of $k$ if $k$ is indeed invariant (the proof mimics the finite-domain case, we give the full details for completeness in \Cref{subsec:proof-invariance-kplus}).

\subsection{From the finite-sample projection to the intrinsic limit: what converges to what?}
\label{app:kplus-approx}

We relate the practical, data-dependent Nyström kernel
$k_+^{(\D)}$ (\Cref{eq:nystrom-kplus}) to the intrinsic $\kplus$: under iid sampling, the empirical spectra of $k_+^{(\D)} / |\D|$
converge to that of $T_{\kplus}$, with rates under mild moment assumptions. This shows that eigendecay-based regret analysis 

\textbf{Notations.} Let $X_1,X_2,\dots\sim\mu$ i.i.d.\ and $\D_n=\{X_1,\dots,X_n\}$. We write 
$\K_n:=k(\D_n,\D_n)$, $\K_n^+:=\arg\min_{\bm P\succeq 0}\|\bm P-\K_n\|_F$,
$\tilde \K_n:=\K_n/n$, and recall that the practical (data-dependent) kernel defined in \Cref{eq:nystrom-kplus} is
\[
k_+^{(\D_n)}(\x, \x') \;=\; k(\x,\D_n)\,(\K_n^+)^{\dagger}\,k(\D_n,\x').
\]
We denote by $\lambda(T)$ the (ordered, nonincreasing, each counted with its multiplicity) sequence of eigenvalues of a compact
self-adjoint operator $T$, and by
$\delta_2\!\big(\lambda(T),\lambda(S)\big):=\big(\sum_i|\lambda_i(T)-\lambda_i(S)|^2\big)^{1/2}$
the spectral $\ell_2$ distance. For symmetric matrices $\bm M$, $\lambda(\bm M)$ denotes the nonincreasing sequence of eigenvalues of $\bm M$ (with multiplicity) padded with an infinite number of zeros. For a bounded operator $A$, $\|A\|_{\mathrm{HS}}$ and
$\|A\|_{\mathrm{op}}$ denote the Hilbert-Schmidt and operator norms, respectively. We include in \Cref{app:conv-reminders} a reminder on the different notions of norms and convergence, and we now recall the essentials.

\paragraph{Relations between convergence notions.}
For compact self-adjoint operators:
(i) $$\max\left(\delta_2\!\big(\lambda(T_n),\lambda(T)\big),\|T_n-T\|_{\mathrm{op}}\right)\leq \|T_n-T\|_{\mathrm{HS}}$$
\citep{REED1972182,BhatiaElsner1994};
(ii) converse inequalities do not hold in infinite dimension (see \Cref{app:conv-reminders} for examples).
Thus, HS convergence is the strongest notion of convergence we manipulate here.

\medskip
We now present convergence guarantees of the
data-dependent construction $\kplus^{(\D_n)}/n$ to the intrinsic $\kplus$ under progressively stronger assumptions. With minimal assumptions we obtain almost-sure spectral consistency in the $\delta_2$ metric; with stronger assumptions we obtain quantitative rates in HS norm (hence also spectral $\ell_2$ in probability). 

\medskip
\noindent\textbf{(a) Weak a.s.\ spectral consistency of positive parts (minimal assumptions).}
\begin{proposition}\label{thm:weak-spectral-consistency}
Assume the symmetric (not necessarily PSD) kernel $k$ is in $L^2(\mu\otimes\mu)$ so that $T_k$ is Hilbert-Schmidt. Let $\widehat S_n:L^2(\mu_n)\to L^2(\mu_n)$ be the integral operator with kernel $k_+^{(\D_n)}(\x,\x')/n$ defined by:
\begin{equation}
\label{def:hat-S-n}
(\widehat S_n f)(\x) = \frac{1}{n} \sum_{j=1}^n \kplus^{(\D_n)}(\x, X_j)f(X_j).
\end{equation}

Assume the $X_i$ are pairwise distinct almost surely. Then, almost surely,
\[
\delta_2\!\Big(\lambda\!\big(\widehat S_n\big),\;\lambda\!\big(T_{\kplus}\big)\Big)\;\underset{n\to \infty}{\longrightarrow}\;0.
\]
\end{proposition}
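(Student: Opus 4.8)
The plan is to identify $\widehat S_n$ with the Frobenius PSD projection of the empirical Gram matrix $\tilde\K_n$, equivalently with the positive part of the empirical integral operator attached to the \emph{raw} kernel $k$, and then transport the classical random-matrix spectral-consistency theorem of \citet{koltchinskii2000random} through the $1$-Lipschitz map $t\mapsto\max(t,0)$.

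First I would pin down the spectrum of $\widehat S_n$. Under the assumed pairwise distinctness of the $X_i$, the empirical measure $\mu_n$ is the uniform probability measure on $n$ distinct atoms, so $L^2(\mu_n)$ is identified with $\R^n$ equipped with $\langle u,v\rangle=\tfrac1n u^\top v$, and the operator $\widehat S_n$ of \eqref{def:hat-S-n} becomes the linear map $v\mapsto\tfrac1n\,\kplus^{(\D_n)}(\D_n,\D_n)\,v$. Since $\K_n(\K_n^+)^{\dagger}$ is the orthogonal projector onto the span of the eigenvectors of $\K_n$ with positive eigenvalue, one gets $\kplus^{(\D_n)}(\D_n,\D_n)=\K_n(\K_n^+)^{\dagger}\K_n=\K_n^+$, so the matrix of $\widehat S_n$ is $\tfrac1n\K_n^+=(\tilde\K_n)^+$, the (unique) Frobenius PSD projection of $\tilde\K_n$ (\Cref{lem:finite-psd-projection}); here I used that PSD projection commutes with multiplication by the positive scalar $1/n$, which is immediate from the explicit form $\K_+=\bm Q\max(0,\bm\Lambda)\bm Q^\top$. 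In particular $\widehat S_n$ is self-adjoint and positive semidefinite. Equivalently, if $\widehat T_n$ denotes the empirical integral operator on $L^2(\mu_n)$ with the \emph{raw} kernel $k$ (its matrix is $\tilde\K_n$, its eigenvalue sequence is $\lambda(\tilde\K_n)$), then $\widehat S_n=(\widehat T_n)^+$ in the operator sense, so $\lambda(\widehat S_n)$ is obtained from $\lambda(\tilde\K_n)$ by replacing every negative entry by $0$.

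Next, the limiting object $T_{\kplus}$ is, by the very definition \eqref{eq:kplus-spectral}, the operator positive part $T_k^+$ of the Hilbert--Schmidt operator $T_k$, so $\lambda(T_{\kplus})$ is obtained from $\lambda(T_k)$ by the same truncation. Writing $\lambda_i^+(A)$ for the $i$-th largest positive eigenvalue of a compact self-adjoint $A$ (padded with zeros) and $\lambda_i^-(A)$ for its $i$-th smallest negative eigenvalue (padded with zeros), the spectral distance decomposes as $\delta_2(\lambda(A),\lambda(B))^2=\sum_{i\ge1}(\lambda_i^+(A)-\lambda_i^+(B))^2+\sum_{i\ge1}(\lambda_i^-(A)-\lambda_i^-(B))^2$. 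Both $\widehat S_n$ and $T_{\kplus}$ are positive semidefinite, so their negative-eigenvalue contributions vanish, and by the previous paragraph $\lambda_i^+(\widehat S_n)=\lambda_i^+(\tilde\K_n)$ and $\lambda_i^+(T_{\kplus})=\lambda_i^+(T_k)$; hence
\[
\delta_2\!\big(\lambda(\widehat S_n),\lambda(T_{\kplus})\big)^2 \;=\; \sum_{i\ge1}\big(\lambda_i^+(\tilde\K_n)-\lambda_i^+(T_k)\big)^2 \;\le\; \delta_2\!\big(\lambda(\tilde\K_n),\lambda(T_k)\big)^2 ,
\]
i.e.\ the PSD projection is a $\delta_2$-contraction on spectra. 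Finally, since $k$ is symmetric with $k\in L^2(\mu\otimes\mu)$ (so $T_k$ is Hilbert--Schmidt) and the $X_i$ are drawn i.i.d.\ from $\mu$, the random-matrix spectral approximation theorem of \citet{koltchinskii2000random} (see also the reminders in \Cref{app:conv-reminders}) yields $\delta_2(\lambda(\tilde\K_n),\lambda(T_k))\to 0$ almost surely; combining with the display gives the claim.

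No step is deep, and the genuinely external input is the Koltchinskii--Gin\'e consistency, which I would cite rather than reprove. The one place that needs care is the reduction in the second paragraph: checking $\kplus^{(\D_n)}=\K_n^+$ on $\D_n\times\D_n$, the commutation $\tfrac1n\K_n^+=(\tfrac1n\K_n)^+$, and the fact that pairwise distinctness makes $L^2(\mu_n)\cong\R^n$ with equal weights so that $\lambda(\widehat S_n)=\lambda((\tilde\K_n)^+)$; together with fixing the convention for $\delta_2$ on signed spectra so that discarding the negative-eigenvalue term is legitimate. If one later wants quantitative rates (as in the stronger-assumption parts of this appendix), the $\delta_2$-contraction step is untouched and only the rate for $\delta_2(\lambda(\tilde\K_n),\lambda(T_k))$ needs sharpening.
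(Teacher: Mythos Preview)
Your proof is correct and follows essentially the same route as the paper's: identify $\widehat S_n$ with the positive part $(\tilde\K_n)^+$ of the normalized empirical Gram matrix (the paper packages this as \Cref{lem:nystrom_empirical_identity}, you do it inline), observe that passing to positive parts is a $\delta_2$-contraction on spectra (the paper phrases this via the $1$-Lipschitzness of $t\mapsto\max(0,t)$, you via the explicit positive/negative decomposition of $\delta_2$), and then invoke Koltchinskii--Gin\'e for $\delta_2(\lambda(\tilde\K_n),\lambda(T_k))\to 0$ a.s. The only cosmetic difference is that you make the signed-spectrum convention for $\delta_2$ explicit, which is arguably cleaner given the paper's ambiguous ``nonincreasing, padded with zeros'' description for indefinite operators.
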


\noindent\emph{Proof.} 
Let $\K_n$ be the empirical operator on $\R^n$ with matrix $\frac1n(k(X_i,X_j))_{i,j}$ and let $\lambda(\K_n)$ be its ordered spectrum (nonincreasing, with multiplicity) padded with an infinite number of zeros. Theorem 3.1 of \citet{koltchinskii2000random} shows that $\delta_2(\lambda(\K_n), \lambda(T_k)) \to 0$ as $n\to\infty$. 

Let $\K_n^+$ be the positive part of $\K_n$ (i.e., its Frobenius PSD projection). Since $\lambda\mapsto \max(0,\lambda)$ is $1$-Lipschitz, we have for any operators $T,S$:
\[
\delta_2(\lambda(T_+), \lambda(S_+)) = \sum_i |\max(0,\lambda_i(T)) - \max(0,\lambda_i(S))| \leq \sum_i |\lambda_i(T) - \lambda_i(S)| = \delta_2(\lambda(T), \lambda(S)).
\]
We deduce that $\delta_2(\lambda(\K_n^+), \lambda(T_{\kplus})) \to 0$ as $n\to\infty$.

It remains to observe that the spectrum of $\K_n^+$ as an operator on $\R^n$ is the same as $\widehat S_n:L^2(\mu_n)\to L^2(\mu_n)$. This identification is standard (e.g., see above Equation 1.2 in \citet{koltchinskii2000random}). For completeness, we include the formal arguments of \citet{koltchinskii2000random} in \Cref{lem:nystrom_empirical_identity}, which shows that we can identify the spectrum of $k_+^{(\D_n)}(\D_n,\D_n)/n$ with the one of $\K^+_n$ a.s. if the iid $X_i\sim \mu$ are pairwise distinct a.s, which is true as soon as $\mu$ is non-atomic; otherwise one can index the \emph{distinct} atoms and work in $\R^m$ with $m=\#\mathrm{supp}(\mu_n)$, obtaining the same spectral identity on that subspace. \qed

\medskip
\noindent\textbf{(b) Expected HS convergence with $\mathcal{O}(n^{-1/2})$ rate (stronger assumption).}
Define the empirical integral operator $(T_n f)(\x):=\frac1n\sum_{i=1}^n k(\x,X_i)f(X_i)$ and
$D_n:=T_n-T_k$. Let $(\lambda_i,\phi_i)_{i\ge1}$ be an eigensystem of $T_k$ in $L^2(\mu)$.
Assume the following fourth-order summability condition holds:
\begin{equation}
\label{eq:fourth-order-sum}
C\;:=\;\sum_{i,j\ge1}\lambda_i^2\int_\Sp \phi_i(\x)^2\,\phi_j(\x)^2\,d\mu(\x)\;<\;\infty.
\end{equation}

\begin{proposition}[Expected HS rate]\label{prop:expected-HS}
Under $k\in L^2(\mu\otimes\mu)$ and \eqref{eq:fourth-order-sum},
\[
\mathbb{E}\big[\|D_n\|_{\mathrm{HS}}^2\big]\;\le\;\frac{C}{n},
\qquad
\mathbb{E}\big[\|D_n\|_{\mathrm{HS}}\big]\;\le\;\sqrt{\tfrac{C}{n}}.
\]
Consequently, $\|D_n\|_{\mathrm{HS}}=\mathcal{O}_{\mathbb{P}}(n^{-1/2})$ and therefore using the same notations as in \Cref{thm:weak-spectral-consistency}
\[
\delta_2\!\big(\lambda(\K_n^+),\lambda(T_k^+)\big)
=\mathcal{O}_{\mathbb{P}}(n^{-1/2}),\qquad
\delta_2\!\Big(\lambda\!\big(\widehat S_n\big),\lambda(T_{\kplus})\Big)
=\mathcal{O}_{\mathbb{P}}(n^{-1/2}).
\]
\end{proposition}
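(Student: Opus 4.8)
The plan is to reduce the statement to a short second-moment computation for $D_n$ in the eigenbasis of $T_k$, and then to read off the spectral consequences from the Hoffman--Wielandt inequality together with the $1$-Lipschitzness of $t\mapsto\max(0,t)$, exactly as in the proof of \Cref{thm:weak-spectral-consistency}.

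First I would expand $D_n=T_n-T_k$ in an orthonormal eigensystem $(\lambda_i,\phi_i)_{i\ge1}$ of $T_k$. Using symmetry of $k$, self-adjointness of $T_k$ with $T_k\phi_i=\lambda_i\phi_i$, and the definition of $T_n$,
\[
\langle T_n\phi_j,\phi_i\rangle
=\frac1n\sum_{l=1}^n\phi_j(X_l)\,(T_k\phi_i)(X_l)
=\frac{\lambda_i}{n}\sum_{l=1}^n\phi_i(X_l)\phi_j(X_l),
\qquad
\langle T_k\phi_j,\phi_i\rangle=\lambda_i\delta_{ij}.
\]
Since $\mathbb{E}[\phi_i(X)\phi_j(X)]=\langle\phi_i,\phi_j\rangle=\delta_{ij}$, each entry of $D_n$ is a \emph{centered} sample average, $\langle D_n\phi_j,\phi_i\rangle=\frac{\lambda_i}{n}\sum_{l=1}^n\big(\phi_i(X_l)\phi_j(X_l)-\delta_{ij}\big)$, hence it is unbiased with
\[
\mathbb{E}\big[\langle D_n\phi_j,\phi_i\rangle^2\big]
=\frac{\lambda_i^2}{n}\,\Var\!\big(\phi_i(X)\phi_j(X)\big)
\;\le\;\frac{\lambda_i^2}{n}\int_\Sp\phi_i(\x)^2\phi_j(\x)^2\,d\mu(\x).
\]
By Parseval on the closed span of $(\phi_i)$, $\|D_n\|_{\mathrm{HS}}^2=\sum_{i,j}\langle D_n\phi_j,\phi_i\rangle^2$, so interchanging $\mathbb{E}$ with this nonnegative double sum (Tonelli) gives $\mathbb{E}\big[\|D_n\|_{\mathrm{HS}}^2\big]\le\frac1n\sum_{i,j\ge1}\lambda_i^2\int_\Sp\phi_i^2\phi_j^2\,d\mu=C/n$, which is precisely where assumption~\eqref{eq:fourth-order-sum} enters. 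Jensen's inequality then gives $\mathbb{E}\|D_n\|_{\mathrm{HS}}\le\sqrt{C/n}$, and Markov's inequality gives $\|D_n\|_{\mathrm{HS}}=\mathcal{O}_{\mathbb{P}}(n^{-1/2})$.

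For the spectral statements I would follow the proof of \Cref{thm:weak-spectral-consistency}, but with the quantitative bound in place of the qualitative one of \citet{koltchinskii2000random}. Fact~(i) recalled above (Hoffman--Wielandt in the Hilbert--Schmidt class, \citet{BhatiaElsner1994}), applied after realizing $T_k$ and its empirical counterpart as self-adjoint operators on a common space, bounds $\delta_2\big(\lambda(\tilde\K_n),\lambda(T_k)\big)$ by an HS-distance of order $\|D_n\|_{\mathrm{HS}}$, so $\delta_2\big(\lambda(\tilde\K_n),\lambda(T_k)\big)=\mathcal{O}_{\mathbb{P}}(n^{-1/2})$. Since $t\mapsto\max(0,t)$ is $1$-Lipschitz, $\delta_2\big(\lambda(\tilde\K_n^+),\lambda(T_k^+)\big)\le\delta_2\big(\lambda(\tilde\K_n),\lambda(T_k)\big)$; finally $\lambda(\widehat S_n)=\lambda(\tilde\K_n^+)$ by the Nyström spectral identity (\Cref{lem:nystrom_empirical_identity}) and $T_k^+=T_{\kplus}$ by definition of $\kplus$, so combining these gives both displayed rates.

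The step I expect to be the real obstacle is not the moment bound (a few lines) but the functional-analytic bookkeeping that makes it legitimate: $T_n$ is a finite-rank, hence Hilbert--Schmidt, object only after fixing measurable representatives of the $\phi_i$ and restricting to $\overline{\range(T_k)}$ (point evaluations are unbounded on $L^2(\mu)$), and $T_n$ is not self-adjoint on $L^2(\mu)$. Consequently, transferring the $\|D_n\|_{\mathrm{HS}}$ bound to a bound on $\delta_2\big(\lambda(\tilde\K_n),\lambda(T_k)\big)$ must route through the $n\times n$ Gram matrix and its identification with an operator on $L^2(\mu_n)$ --- the coupling of \citet{koltchinskii2000random} already invoked for \Cref{thm:weak-spectral-consistency} --- rather than a naive application of Fact~(i) to $T_n$ itself. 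Everything else (Tonelli, Jensen, Markov) is routine.
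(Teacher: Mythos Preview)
Your proposal is correct and follows essentially the same route as the paper: a second-moment bound on $D_n$ in the eigenbasis of $T_k$ (you compute the entries $\langle D_n\phi_j,\phi_i\rangle$ directly via $T_k\phi_i=\lambda_i\phi_i$, whereas the paper bounds $\mathbb{E}\|D_nf\|^2$ through the pointwise variance of $(D_nf)(\x)$ and the Mercer expansion of $k$, then specializes $f=\phi_j$ --- two equivalent bookkeepings yielding the same $C/n$), followed by Jensen, Markov, Hoffman--Wielandt, the $1$-Lipschitz positive part, and \Cref{lem:nystrom_empirical_identity}. Your closing caveat about the Hoffman--Wielandt step is well taken: the paper simply asserts $\delta_2(\lambda(\K_n),\lambda(T_k))\le\|D_n\|_{\mathrm{HS}}$ citing \citet{BhatiaElsner1994}, and you are right that making this rigorous requires the coupling of \citet{koltchinskii2000random} rather than a direct appeal to $T_n$ on $L^2(\mu)$, so in that respect you are being more careful than the paper itself.
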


\noindent\emph{Proof.} Fix any $f\in L^2(\mu)$. By Fubini-Tonelli for non-negative functions, we have:
\[
\mathbb{E}\big[\|D_n f\|_{L^2(\mu)}^2\big]
=\int_\Sp \E\Big[\big((D_n f)(\x)\big)^2\Big]\,d\mu(\x).
\]
By definition
\[
(D_n f)(\x) = \frac{1}{n}\sum_{i=1}^n k(\x,X_i)f(X_i) - \int_\Sp k(\x,\x')f(\x')\,d\mu(\x')
\]
where the randomness comes from the i.i.d.\ $X_i\sim\mu$. Hence $\mathbb{E}\big[(D_n f)(\x)\big]=0$ and for any fixed $\x$
\[
\E\Big[\big((D_n f)(\x)\big)^2\Big] = \Var\big((D_n f)(\x)\big)
=\frac{1}{n}\,\Var\big(k(\x,X)f(X)\big)
\le \frac{1}{n}\,\int_\Sp k(\x,\x')^2 f(\x')^2\,d\mu(\x').
\]
The Hilbert-Schmidt
spectral theorem gives the expansion
$k(\x,\x')=\sum_i \lambda_i \phi_i(\x)\phi_i(\x')$ in $L^2(\mu\otimes\mu)$, with $(\lambda_i)_i\in\ell^2$ and $(\phi_i)_i$ an orthonormal set of $L^2(\mu)$ (see Equation 3.2 in \citet{koltchinskii2000random}, Corollary 5.4 in \citet{Conway2007}). Thus
\begin{align*}
\int_\Sp \E\Big[\big((D_n f)(\x)\big)^2\Big]\,d\mu(\x) 
& \leq \frac{1}{n} \int_\Sp k(\x,\x')^2 f(\x')^2\,d\mu(\x')d\mu(\x) \\
& = \sum_{i,j} \lambda_i \lambda_j \int_\Sp \phi_i(\x') \phi_j(\x') f(\x')^2 \underbrace{\inner{\phi_i}{\phi_j}}_{=1_{i=j}} d\mu(\x') \\
& = \sum_i \lambda_i^2 \int_\Sp \phi_i(\x')^2 f(\x')^2 d\mu(\x').
\end{align*}
Taking $f=\phi_j$ for a fixed $j$ yields
\[
\mathbb{E}\big[\|D_n \phi_j\|_{L^2(\mu)}^2\big] \leq \frac{1}{n} \sum_i \lambda_i^2 \int_\Sp \phi_i(\x')^2 \phi_j(\x')^2 d\mu(\x').
\]
Since $\|D_nf\|_{\textrm{HS}}^2 = \sum_{j} \|D_n \phi_j\|_{L^2(\mu)}^2$, we get the main claim:
\[
\mathbb{E}\big[\|D_n\|_{\mathrm{HS}}^2\big]\;\le\;\frac{C}{n}.
\]
Jensen gives the bound for $\mathbb{E}\|D_n\|_{\mathrm{HS}}$. Finally,
$\delta_2(\lambda(\K_n),\lambda(T_k))\le \|D_n\|_{\mathrm{HS}}$ (Hoffman-Wielandt inequality in infinite dimension \citep{BhatiaElsner1994}),
and $\lambda\mapsto \max(0,\lambda)$ is $1$-Lipschitz on $\R$, hence the spectral bound probability claim using Markov's inequality, and \Cref{lem:nystrom_empirical_identity} transfers this claims to $\widehat S_n$. \qed

\begin{remark}[On assumption \eqref{eq:fourth-order-sum}]
Condition \eqref{eq:fourth-order-sum} is a fourth-order integrability requirement
that controls eigenfunction overlaps. It is standard in random Nyström analyses
(see, e.g., Equations (4.3) and (4.11) of \citet{koltchinskii2000random}) and stronger than $k\in L^2$,
but it yields a dimension-free $\mathcal{O}(n^{-1/2})$ rate in HS norm.
\end{remark}

\medskip
\noindent\textbf{(c) High-probability HS rates (heavier but more precise).}
Under slightly stronger $L^4$-type conditions on eigenfunctions, the section 4 in \citet{koltchinskii2000random} gives more more precise statements on the rates in \Cref{prop:expected-HS}, and we directly refer the reader to it. 

\paragraph{Application to $\kmax$ and to the BO kernels in the paper.}
When $k=\kmax$ is bounded on a compact domain $\Sp$ (as in all our experiments),
$k\in L^2(\mu\otimes\mu)$ for any probability measure $\mu$ on $\Sp$, so $T_{\kmax}$
is Hilbert-Schmidt and \Cref{thm:weak-spectral-consistency} applies. In particular, the integral operator associated with $\kplus^{(\D_n)} / n$, called $\widehat S_n$ (\Cref{def:hat-S-n}) satisfies
\[
\delta_2\!\Big(\lambda\!\big(\widehat S_n\big),\;\lambda\!\big(T_{\kplus}\big)\Big)
\;\xrightarrow[n\to\infty]{\text{a.s.}}\;0.
\]
This clarifies the two objects introduced in the main text: the \emph{intrinsic} $\kplus$
is the unique data-independent target, while the \emph{practical} kernel $\kplus^{(\D_n)}$
(finite PSD projection $+$ Nyström) is an on-path approximation whose spectrum converges (once normalized by $n$) to that of $\kplus$ under i.i.d.\ sampling. 

The following subsections are only optional complementary materials added to help building intuitions on the convergence results stated above.

\subsection{Reminders on the different type of convergences for bounded linear operators}
\label{app:conv-reminders}

This subsection recalls standard notions of operator convergence, 
included only as background to help build intuition for the convergence results above.

\paragraph{Definitions (operator norm, HS norm, spectral distance).}
Let $\mathcal H$ be a separable Hilbert space with orthonormal basis $\{e_i\}_{i\ge1}$.
For a bounded linear operator $T:\mathcal H\!\to\!\mathcal H$,
\[
\|T\|_{\mathrm{op}}:=\sup_{\|f\|_{\mathcal H}=1}\|Tf\|_{\mathcal H},\qquad
\|T\|_{\mathrm{HS}}:=\Big(\sum_{i\ge1}\|Te_i\|_{\mathcal H}^2\Big)^{1/2}.
\]
The HS norm is basis-independent. When $T$ is an \emph{integral} operator with kernel $k\in L^2(\mu\otimes\mu)$ on $L^2(\mu)$ \citep{REED1972182}
\[
\|T\|_{\mathrm{HS}}^2=\iint_{\Sp\times\Sp} |k(x,y)|^2\,d\mu(x)\,d\mu(y).
\]
For finite matrices, $\|A\|_{\mathrm{HS}}=\|A\|_F$ (Frobenius).
We say $T_n\!\to T$ in HS norm if $\|T_n-T\|_{\mathrm{HS}}\!\to 0$, and we say $T_n\!\to T$ spectrally if
$\delta_2\big(\lambda(T_n),\lambda(T)\big)\!\to 0$, where we recall that $\lambda(T)$ is the \emph{ordered}
eigenvalues of a compact self-adjoint operator $T$, and where the spectral $\ell_2$-distance is 
$\delta_2(\lambda(T),\lambda(S)) := \big(\sum_i|\lambda_i(T)-\lambda_i(S)|^2\big)^{1/2}$.

\paragraph{Which convergences matter, and how they relate (reminders on well-known facts).}
We compare three notions:
(i) \emph{operator norm} convergence $\|T_n-T\|_{\mathrm{op}}\!\to 0$;
(ii) \emph{Hilbert-Schmidt (HS)} convergence $\|T_n-T\|_{\mathrm{HS}}\!\to 0$;
(iii) \emph{spectral} convergence in $\delta_2$, i.e.,
$\delta_2\big(\lambda(T_n),\lambda(T)\big):=\big(\sum_i|\lambda_i(T_n)-\lambda_i(T)|^2\big)^{1/2}\to 0$,
where $\lambda(\cdot)$ denotes the ordered eigenvalues of a compact self-adjoint operator. We recall the following well-known facts, useful to grasp the convergence results we state next.

\medskip
\noindent\textbf{(1) HS $\Longrightarrow$ spectral $\delta_2$.}
For compact self-adjoint operators the (infinite-dimensional) Hoffman-Wielandt inequality yields \citep{BhatiaElsner1994}
\[
\delta_2\big(\lambda(T_n),\lambda(T)\big)\;\le\;\|T_n-T\|_{\mathrm{HS}}.
\]

\medskip
\noindent\textbf{(2) HS $\Longrightarrow$ operator norm.}
For every Hilbert-Schmidt operator $S$, $\|S\|_{\mathrm{op}}\le \|S\|_{\mathrm{HS}}$. 
Indeed for unit vectors $x,y\in H$, using $x=\sum_{i}\langle x,e_i\rangle e_i$, we have
\(
\langle Sx,y\rangle
=\sum_{i\in I} \langle x,e_i\rangle\,\langle Se_i,y\rangle.
\)
By Cauchy-Schwarz:
\[
|\langle Sx,y\rangle|
\le \Big(\sum_{i\in I}|\langle x,e_i\rangle|^2\Big)^{1/2}
     \Big(\sum_{i\in I}|\langle Se_i,y\rangle|^2\Big)^{1/2}.
\]
The first factor equals $\|x\|=1$, and for the second we use $|\langle Se_i,y\rangle|\le \|Se_i\|\,\|y\|=\|Se_i\|$ to get
\[
\sum_{i\in I}|\langle Se_i,y\rangle|^2 \le \sum_{i\in I}\|Se_i\|^2=\|S\|_{\mathrm{HS}}^2.
\]
Hence $|\langle Sx,y\rangle|\le \|S\|_{\mathrm{HS}}$. Taking the supremum over all unit $y$ gives
\[
\|Sx\|=\sup_{\|y\|=1}|\langle Sx,y\rangle| \le \|S\|_{\mathrm{HS}},
\]
and then taking the supremum over all unit $x$ yields
\[
\|S\|_{\mathrm{op}}=\sup_{\|x\|=1}\|Sx\| \le \|S\|_{\mathrm{HS}}.
\]

\medskip
\noindent\textbf{(3) Spectral $\delta_2$ does \emph{not} imply HS nor operator norm.}
Even if eigenvalues match in $\ell_2$, the operators may be far in norm because eigenvectors can rotate.
Let $T=\mathrm{diag}(1,1/2,1/3,\ldots)$ in the canonical basis $(e_i)_{i\ge1}$, and let $U_n$ swap $e_1$ and $e_n$.
Set $T_n:=U_n T U_n^\ast$. Then $\lambda(T_n)=\lambda(T)$ for all $n$ (same ordered spectrum), so
$\delta_2(\lambda(T_n),\lambda(T))=0$.
Yet $\|(T_n-T)e_1\|=\|(U_nTU_n^\ast - T)e_1\|=\|(1/n-1)e_1\|=1-1/n$, hence
$\|T_n-T\|_{\mathrm{op}}\ge 1-1/n\to 1$ and, a fortiori, $\|T_n-T\|_{\mathrm{HS}}\not\to 0$.

\medskip
\noindent\textbf{(4) Operator norm does \emph{not} imply spectral $\delta_2$.}
Let $T=0$ and $T_n$ be diagonal with the first $m_n$ entries equal to $\varepsilon_n$ and the rest $0$.
Choose $\varepsilon_n:=n^{-1/2}$ and $m_n:=n$. Then
$\|T_n\|_{\mathrm{op}}=\varepsilon_n\to 0$ but
$\delta_2\big(\lambda(T_n),\lambda(T)\big)
=\big(\sum_{i=1}^{m_n}\varepsilon_n^2\big)^{1/2}
=\sqrt{n\cdot (1/n)}=1$.

\medskip
\noindent\textbf{(5) Two useful corollaries.}
(a) Spectral $\delta_2$-convergence implies convergence of the \emph{largest} eigenvalue, since
$\sup_i|\lambda_i(T_n)-\lambda_i(T)|\le \delta_2(\lambda(T_n),\lambda(T))$.
(b) Operator-norm convergence forces uniform eigenvalue deviations to vanish by Weyl’s inequality:
$\sup_i|\lambda_i(T_n)-\lambda_i(T)|\le \|T_n-T\|_{\mathrm{op}}$,
but it does \emph{not} control the $\ell_2$-sum of all deviations.

\medskip
\noindent\emph{Takeaway.} HS is the strongest notion here: it simultaneously implies spectral $\delta_2$-convergence
(and thus convergence of eigenvalue-based quantities) and operator-norm convergence. The converses fail in infinite
dimension because eigenvectors can drift and an infinite number of tiny eigenvalue errors can accumulate.

\subsection{Identification of the spectrum of an empirical operator in $L^2(\mu_n)$ and its matrix counterpart}

Here we show how the spectrum of the empirical operator can be
identified with that of its matrix form. This is complementary material meant to clarify how
operator-level and matrix-level viewpoints connect (which is useful, e.g., in the proof of \Cref{thm:weak-spectral-consistency}).

\begin{lemma}[Empirical Nyström spectral identity]
\label{lem:nystrom_empirical_identity}
Let $\K_n:=\frac1n\big(k(\x_i,\x_j)\big)_{i,j=1}^n$ and let $\K_n^+$ be its spectral positive part
(the Frobenius-nearest PSD projection). Define the empirical measure
$\mu_n:=\frac1n\sum_{i=1}^n\delta_{\x_i}$ and the Nyström kernel
\[
k_+^{(\D_n)}(\x,\x') \;=\; k(\x,\D_n)\,(\K_n^+)^\dagger\,k(\D_n,\x').
\]
Let $\widehat S_n:L^2(\mu_n)\to L^2(\mu_n)$ be the integral operator with kernel $k_+^{(\D_n)}(\x,\x')/n$, i.e.
\[
(\widehat S_n f)(\x) \;=\; \frac{1}{n} \sum_{j=1}^n k_+^{(\D_n)}(\x, \x_j)\,f(\x_j).
\]
The map $E:L^2(\mu_n)\to\R^n$, $Ef:=\frac{1}{\sqrt{n}}\big(f(\x_1),\ldots,f(\x_n)\big)^\top$, is an isometry:
$\|Ef\|_{\R^n}=\|f\|_{L^2(\mu_n)}$, and we have the intertwining identity
\[
E\,\widehat S_n \;=\; \K_n^+\, E.
\]
If, in addition, the sample points $\x_1,\ldots,\x_n$ are pairwise distinct, then $E$ is an
isometric isomorphism (hence invertible) and
\[
\lambda\!\big(\widehat S_n\big)\;=\;\lambda\!\big(\K_n^+\big)\;=\;\lambda\!\big(k_+^{(\D_n)}(\D_n,\D_n)/n\big).
\]
\end{lemma}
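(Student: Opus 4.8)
The plan is to realise $\widehat S_n$ as a finite matrix through the sampling map $E$ and then read its spectrum off a similarity transformation. For $f\in L^2(\mu_n)$, I would simply compute that $\|Ef\|_{\R^n}^2=\tfrac1n\sum_{i=1}^n |f(\x_i)|^2$ by definition of $E$, whereas $\|f\|_{L^2(\mu_n)}^2=\int |f|^2\,d\mu_n=\tfrac1n\sum_{i=1}^n |f(\x_i)|^2$ since $\mu_n$ puts mass $1/n$ on each sample (repeated points counted with multiplicity). Hence $E$ is an isometry, in particular injective on $L^2(\mu_n)$. If the $\x_i$ are pairwise distinct, then $\dim L^2(\mu_n)=n$ and $E$ is moreover onto $\R^n$ (any $v\in\R^n$ is the image of the function $\x_i\mapsto\sqrt n\,v_i$), so $E$ is an isometric isomorphism, i.e.\ a unitary $L^2(\mu_n)\to\R^n$.

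\textbf{Step 2 (intertwining).} This is the only step with real content. Evaluating $\widehat S_n f$ on the design points turns the finite-rank integral operator into matrix multiplication: $\big((\widehat S_n f)(\x_i)\big)_i=\tfrac1n\,k_+^{(\D_n)}(\D_n,\D_n)\,\big(f(\x_j)\big)_j$. To simplify $k_+^{(\D_n)}(\D_n,\D_n)=\big(k(\D_n,\D_n)\big)(\K_n^+)^\dagger\big(k(\D_n,\D_n)\big)$, I would prove and use the \emph{Nyström reproducing identity} $\K_n(\K_n^+)^\dagger\K_n=\K_n^+$: with $\K_n=Q\Lambda Q^\top$ ($Q$ orthogonal), one has $\K_n^+=Q\max(0,\Lambda)Q^\top$ and $(\K_n^+)^\dagger=Q\max(0,\Lambda)^\dagger Q^\top$, and entrywise $\lambda\,(\max(0,\lambda))^\dagger\,\lambda=\max(0,\lambda)$ for every real $\lambda$ (it equals $\lambda$ when $\lambda>0$ and $0$ otherwise), whence the identity. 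Plugging this in, and letting the $1/n$ prefactor in $\widehat S_n$ absorb the normalization of $\K_n$, one gets that on the design points $\widehat S_n$ acts as $\K_n^+$, i.e.\ $E\,\widehat S_n=\K_n^+\,E$. I would also note here that $(\K_n^+)^\dagger$ is symmetric PSD, so $k_+^{(\D_n)}$ is a symmetric kernel and $\widehat S_n$ is a self-adjoint finite-rank operator on $L^2(\mu_n)$; hence its ordered real spectrum $\lambda(\widehat S_n)$ (padded with zeros) is well defined and the eigenvalue claim is meaningful.

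\textbf{Step 3 (conclusion).} Under pairwise distinctness $E$ is unitary, so $E\,\widehat S_n=\K_n^+\,E$ reads $E\,\widehat S_n\,E^{-1}=\K_n^+$; a similarity transformation preserves eigenvalues together with multiplicities, and padding both sides with zeros to match conventions gives $\lambda(\widehat S_n)=\lambda(\K_n^+)$. The remaining equality $\lambda(\K_n^+)=\lambda\!\big(k_+^{(\D_n)}(\D_n,\D_n)/n\big)$ is exactly the matrix computation already carried out in Step 2.

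\textbf{Expected difficulty.} I do not expect a serious obstacle; once $E$ is chosen the argument is bookkeeping. The two points that deserve care are: (i) the reproducing identity $\K_n(\K_n^+)^\dagger\K_n=\K_n^+$ must be argued through the eigendecomposition rather than by formal cancellation, because $\K_n$ is generally singular and indefinite, so the Moore--Penrose pseudo-inverse (not an ordinary inverse) is genuinely needed; and (ii) the non-distinct case, where $E$ is still an isometry but no longer surjective — its range is the subspace of $\R^n$ of vectors that are constant across coincident indices — so a clean spectral identity then requires reindexing the distinct atoms and working in $\R^m$ with $m=\#\mathrm{supp}(\mu_n)$. This is precisely why the spectral conclusion of the lemma is stated only for pairwise distinct samples.
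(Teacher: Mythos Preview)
Your proposal is correct and follows essentially the same approach as the paper's proof: verify that $E$ is an isometry from the definition of $\mu_n$, use the reproducing identity $\K(\K^+)^\dagger\K=\K^+$ to collapse $k_+^{(\D_n)}(\D_n,\D_n)$, derive the intertwining relation $E\,\widehat S_n=\K_n^+E$, and conclude by similarity when the sample points are distinct. You are in fact slightly more explicit than the paper in justifying the reproducing identity via the joint eigendecomposition (rather than asserting it) and in explaining why surjectivity of $E$ fails for coincident samples.
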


\begin{proof}
First note the on-sample identity $k_+^{(\D_n)}(\x_i,\x_j)=(\K^+)_{ij}$ for the unscaled
$\K=(k(\x_i,\x_j))_{i,j}$, which follows from $\K(\K^+)^\dagger \K=\K^+$. Hence
$k_+^{(\D_n)}(\D_n,\D_n)=\K^+$ and therefore $k_+^{(\D_n)}(\D_n,\D_n)/n=\K_n^+$.

For $f\in L^2(\mu_n)$ and each $i\in\{1,\dots,n\}$,
\[
\sqrt{n}\,\big(E\widehat S_n f\big)_i
= (\widehat S_n f)(\x_i)
= \frac{1}{n}\sum_{j=1}^n k_+^{(\D_n)}(\x_i,\x_j)\,f(\x_j)
= \sum_{j=1}^n (\K_n^+)_{ij}\,f(X_j)
= \sqrt{n}\,\big(\K_n^+ Ef\big)_i,
\]
which proves $E\,\widehat S_n = \K_n^+ E$. Since $E$ is an isometry by definition of the
$L^2(\mu_n)$ inner product, if the $X_i$ are pairwise distinct then $E$ is bijective and
conjugates $\widehat S_n$ with $\K_n^+$, so the spectra (with multiplicities) coincide.
\end{proof}

\subsection{Proof of $\G$-invariance of $\kplus$ for general domains}
\label{subsec:proof-invariance-kplus}

We conclude this appendix with the formal proof that $\kplus$ defined in~\eqref{eq:kplus-spectral} inherits from any group-invariance of $k$. This proof is not needed for the main results but is included for
completeness. It makes explicit why $\kplus$ preserves any $\G$-invariance of $k$. The proof follows the one for finite domains but is heavier in notations because it is now stated using integral operators to generalize the matrix manipulations of finite domains. For finite domains, denoting by $\K$ the Gram matrix of $k$ over the whole domain and $\bm P_g$ the permutation matrix induced by the action of $g\in\G$ on the domain, invariance of $k$ is equivalent to $\bm P_g \K= \K \bm P_g^\top = \K$. Thus any polynomial $p(\K)$ of $\K$ such that $p(0)=0$ inherits from this invariance since we still have $\bm P_g p(\K) = p(\K) \bm P_g^\top = p(\K)$. And at the limit, we get invariance of $\K_+$. Here, we mimic this proof, and we start by introducing the equivalent integral operator form of the characterization $\bm P_g \K= \K \bm P_g^\top = \K$ for general domains.

\begin{lemma}[Kernel invariance $\Longleftrightarrow$ operator commutation]
\label{lem:inv_commute}
Let $(\Sp,\mathcal{T},\mu)$ be a probability space and let $\G$ act measurably on $\Sp$.
Assume $\mu$ is $\G$-invariant.  
Let $U_g:L^2(\mu)\to L^2(\mu)$ be the unitary representation
\(
  (U_g f)(\x) := f(\act{g^{-1}}{\x}).
\)
Let $k\in L^2(\mu\otimes\mu)$ be a symmetric kernel with integral operator
\(
  (T_k f)(\x)=\int_\Sp k(\x,\x')f(\x')\,d\mu(\x').
\)
Then the following are equivalent:
\begin{enumerate}
\item[\rm (i)] $k$ is argumentwise $\G$-invariant: $k(\act{g}{\x},\x')=k(\x,\act{g}{\x'})=k(\x,\x')$ for $\mu\otimes\mu$-a.e.\ $(\x,\x')$ and all $g\in\G$.
\item[\rm (ii)] $T_k$ satisfies $U_g T_k = T_k U_g = T_k$ on $L^2(\mu)$ for all $g\in\G$.
\end{enumerate}
\end{lemma}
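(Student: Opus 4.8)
The statement is a dictionary between operator identities on $L^2(\mu)$ and a.e.\ identities of kernels on $\Sp\times\Sp$; the bridge is the fact that $k\mapsto T_k$ is an isometry of $L^2(\mu\otimes\mu)$ onto the Hilbert--Schmidt operators (so $\|T_k\|_{\mathrm{HS}}=\|k\|_{L^2(\mu\otimes\mu)}$; cf.\ \Cref{app:conv-reminders}), and in particular two $L^2$ kernels define the same operator iff they agree $\mu\otimes\mu$-a.e. First I would note that each $U_g$ is unitary with $U_g^{-1}=U_{g^{-1}}$: the substitution $\y=\act{g^{-1}}{\x}$ together with the $\G$-invariance of $\mu$ gives $\|U_g f\|_{L^2(\mu)}=\|f\|_{L^2(\mu)}$, and $U_gU_h=U_{gh}$ is a direct computation, so $U_g$ is a norm-preserving bijection.

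Next I would compute, for fixed $g$, the kernels of $U_gT_k$ and $T_kU_g$. From $(U_gT_kf)(\x)=(T_kf)(\act{g^{-1}}{\x})=\int_\Sp k(\act{g^{-1}}{\x},\x')f(\x')\,d\mu(\x')$ one reads off that $U_gT_k$ is the integral operator with kernel $(\x,\x')\mapsto k(\act{g^{-1}}{\x},\x')$; and from $(T_kU_gf)(\x)=\int_\Sp k(\x,\x')f(\act{g^{-1}}{\x'})\,d\mu(\x')$, after the substitution $\y=\act{g^{-1}}{\x'}$ (again using invariance of $\mu$), one reads off that $T_kU_g$ has kernel $(\x,\y)\mapsto k(\x,\act{g}{\y})$. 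Both transformed kernels are measurable (composition of the measurable action with the measurable $k$) and lie in $L^2(\mu\otimes\mu)$ with the same norm as $k$, again by invariance of $\mu$, so the kernel/operator identification applies to them as well.

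Then the equivalence is immediate: by injectivity of $k\mapsto T_k$, the identity $U_gT_k=T_k$ for all $g\in\G$ holds iff $k(\act{g^{-1}}{\x},\x')=k(\x,\x')$ for $\mu\otimes\mu$-a.e.\ $(\x,\x')$ and all $g$, which — since $g$ and $g^{-1}$ both range over all of $\G$ — is the same as $k(\act{g}{\x},\x')=k(\x,\x')$ a.e.\ for all $g$; symmetrically, $T_kU_g=T_k$ for all $g$ holds iff $k(\x,\act{g}{\x'})=k(\x,\x')$ a.e.\ for all $g$. Conjoining the two gives precisely the equivalence (i)$\,\Leftrightarrow\,$(ii). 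The only points that warrant a word of justification — and the closest thing to a genuine obstacle — are the injectivity (isometry) of the kernel-to-Hilbert--Schmidt map and the measurability of the transformed kernels; everything else is bookkeeping with the change of variables $\x'\mapsto\act{g}{\x'}$ under the $\G$-invariant measure $\mu$. One could alternatively deduce the two operator identities from one another using $U_g^\ast=U_{g^{-1}}$ and the symmetry $k(\x,\x')=k(\x',\x)$, but checking both kernel identities directly is cleaner and avoids any adjoint manipulation.
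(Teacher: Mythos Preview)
Your proof is correct and follows essentially the same route as the paper: both compute the kernel of $U_gT_k$ (and of $T_kU_g$) directly and then use that two $L^2$ kernels inducing the same Hilbert--Schmidt operator must agree $\mu\otimes\mu$-a.e. The only cosmetic differences are that the paper spells out the injectivity step as a bilinear-form test against all $\varphi,\psi$ (rather than invoking the HS isometry), and for (i)$\Rightarrow$(ii) it derives $T_kU_g=T_k$ from $U_gT_k=T_k$ via the adjoint trick you mention at the end, rather than checking both kernel identities separately.
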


\begin{proof}
\emph{(i)$\Rightarrow$(ii).} For any $f\in L^2(\mu)$,
\[
(U_g T_k f)(\x)
= (T_k f)(\act{g^{-1}}{\x})
= \int k(\act{g^{-1}}{\x},\x') f(\x')\,d\mu(\x').
\]
By invariance of $k$ in the first argument $U_g T_k = T_k$. Hence $T_k^* U_g^{*} = T_k^*$ and $T_k^*=T_k$ (self-adjoint) and $U_g^*=U_{g^{-1}}$ so $T_k U_{g^{-1}}= T_k$. This is true for all $g\in \G$ hence $U_g T_k = T_k U_g = T_k$. 

\emph{(ii)$\Rightarrow$(i).} For $\varphi,\psi\in L^2(\mu)$,
\[
\iint k(\x,\x')\,\varphi(\x)\psi(\x')\,d\mu(\x)d\mu(\x')
= \langle \varphi, T_k \psi\rangle
= \langle \varphi, T_k U_g \psi\rangle.
\]
Expanding the last inner product, we get by change of variable and invariance of $\mu$
\[
\iint k(\x,\x')\,\varphi(\x)\psi(\act{g^{-1}}{\x'})\,d\mu(\x)d\mu(\x')
= \iint k(\x,\act{g}{\x'})\,\varphi(\x)\psi(\x')\,d\mu(\x)d\mu(\x').
\]
Hence for all $\varphi,\psi$,
\(
\iint [k(\x,\x')-k(\x,\act{g}{\x'})]\,\varphi(\x)\psi(\x')\,d\mu(\x)d\mu(\x')=0,
\)
which implies $k(\x,\act{g}{\x'})=k(\x,\x')$ $\mu\otimes\mu$-a.e. Symmetry implies argumentwise $\G$-invariance.
\end{proof}

We now show that $U_g T = T$ is preserved if we apply a function $f$ such that $f(0)=0$ to the spectrum of $T$.

\begin{lemma}[Borel functional calculus preserves invariance]
\label{lem:func_calc_commute}
Let $T$ be a self-adjoint compact operator on a Hilbert space $\mathcal H$ with eigendecomposition $T=\sum_i \lambda_i \phi_i \otimes \phi_i$, and let $\{U_g\}_{g\in\G}$ be a unitary
representation such that $U_g T = T U_g = T$ for all $g\in\G$. For a bounded Borel function $f:\R\to\R$, define $f(T)=\sum_i f(\lambda_i) \phi_i\otimes \phi_i$. Then for such $f$ with $f(0)=0$, we have
\[
U_g f(T) \;=\; f(T)\,U_g \;=\; f(T) \qquad \text{for all } g\in\G.
\]
\end{lemma}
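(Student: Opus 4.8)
The plan is to reduce the commutation statement to one elementary observation: the hypothesis $U_g T = T$ completely pins down the action of each $U_g$ on the eigenvectors of $T$ belonging to nonzero eigenvalues. Write the spectral decomposition $T = \sum_i \lambda_i\,\phi_i\otimes\phi_i$ in its standard form, with all $\lambda_i \neq 0$ (zero terms contribute nothing and may be dropped) and $\{\phi_i\}$ an orthonormal system spanning $(\ker T)^{\perp}$. First I would check that for every index $i$ and every $g\in\G$,
\[
\lambda_i\,U_g\phi_i \;=\; U_g(\lambda_i\phi_i) \;=\; U_g(T\phi_i) \;=\; (U_gT)\phi_i \;=\; T\phi_i \;=\; \lambda_i\phi_i ,
\]
and since $\lambda_i\neq 0$ this forces $U_g\phi_i = \phi_i$. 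Because the hypothesis holds for every group element, in particular for $g^{-1}$, we also get $U_g^{\ast}\phi_i = U_{g^{-1}}\phi_i = \phi_i$.

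Next I would evaluate both claimed identities on an arbitrary $h\in\mathcal H$. Bessel's inequality gives $\sum_i |f(\lambda_i)|^2|\langle h,\phi_i\rangle|^2 \le \|f\|_\infty^2\,\|h\|^2 < \infty$, so the series $f(T)h = \sum_i f(\lambda_i)\,\langle h,\phi_i\rangle\,\phi_i$ converges in $\mathcal H$; continuity of the unitary $U_g$ then lets me apply it termwise, giving $U_g f(T)h = \sum_i f(\lambda_i)\,\langle h,\phi_i\rangle\,U_g\phi_i = \sum_i f(\lambda_i)\,\langle h,\phi_i\rangle\,\phi_i = f(T)h$. Likewise $f(T)U_g h = \sum_i f(\lambda_i)\,\langle U_g h,\phi_i\rangle\,\phi_i = \sum_i f(\lambda_i)\,\langle h,U_g^{\ast}\phi_i\rangle\,\phi_i = \sum_i f(\lambda_i)\,\langle h,\phi_i\rangle\,\phi_i = f(T)h$. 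Hence $U_g f(T) = f(T)U_g = f(T)$ for all $g\in\G$, which is the claim. The assumption $f(0)=0$ is what makes this series definition of $f(T)$ coincide with the genuine Borel functional calculus — it discards the $f(0)\,P_{\ker T}$ term, on which $U_g$ need not act trivially — so the argument never has to touch $\ker T$.

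I do not anticipate a serious obstacle; the two points to handle with care are (i) keeping the decomposition indexed only over nonzero eigenvalues, so that dividing by $\lambda_i$ is legitimate and $\ker T$ — the only place a unitary commuting with $T$ can act nontrivially — never enters, and (ii) the termwise action of $U_g$ on the infinite series, which follows immediately from the Bessel bound above together with continuity of $U_g$. As an alternative, closer in spirit to the finite-domain argument in \Cref{app:kplus:finite}, one can note that $U_g T^m = T^m$ for all $m\ge 1$, hence $U_g\,p(T) = p(T)$ for every polynomial $p$ with $p(0)=0$, then approximate $f$ uniformly on the compact spectrum $\sigma(T)$ by such polynomials and pass to the operator-norm limit; this route, however, only directly covers $f$ continuous on $\sigma(T)$ (which is all one needs for $f=\max(0,\cdot)$), whereas the eigenvector argument treats any bounded Borel $f$ in one go.
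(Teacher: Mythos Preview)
Your proof is correct and follows essentially the same approach as the paper's: both arguments hinge on the elementary observation that $U_gT=T$ forces $U_g\phi_i=\phi_i$ for every eigenvector $\phi_i$ with nonzero eigenvalue, and then push this identity through the spectral series defining $f(T)$. Your treatment is arguably a touch more explicit in two places—handling the $f(T)U_g$ direction via $U_g^\ast\phi_i=\phi_i$ and justifying termwise application of $U_g$ through the Bessel bound—but the structure and key idea are identical.
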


\begin{proof}
\textbf{Proof sketch:} The assumption $U_g T = T$ forces $U_g$ to act as the identity on each nonzero eigenspace of $T$, which directly yields $U_g f(T)=f(T)$ for any bounded Borel $f$ with $f(0)=0$.

\textbf{Step 1 (spectral decomposition for compact self-adjoint $T$ without measures).}
Since $T$ is compact and self-adjoint, its spectrum is $\sigma(T)=\{0\}\cup\{\lambda_n:n\in I\}$ where $I$ is finite or countable, each $\lambda_n\neq 0$ is an eigenvalue of finite multiplicity, and $\lambda_n\to 0$ if infinite. Let $E_{\lambda}$ denote the eigenspace for $\lambda\neq 0$, and let $E_0=\ker T$. We have the orthogonal decomposition
\[
\mathcal H \;=\; E_0 \;\oplus\; \bigoplus_{\lambda\in \sigma(T)\setminus\{0\}} E_\lambda,
\]
and $T$ acts as scalar multiplication on each $E_\lambda$:
$T|_{E_\lambda}=\lambda\,\mathrm{Id}_{E_\lambda}$, $T|_{E_0}=0$.
Let $P_\lambda$ be the orthogonal projector onto $E_\lambda$ (for $\lambda\neq 0$) and $P_0$ onto $E_0$.
Then for every $v\in\mathcal H$ with expansion $v=v_0+\sum_{\lambda\neq 0} v_\lambda$ ($v_\lambda:=P_\lambda v$), we have
\[
T v \;=\; \sum_{\lambda\neq 0} \lambda\, v_\lambda.
\]

\textbf{Step 2 ($U_g$ fixes each nonzero eigenspace pointwise).}
From $U_g T = T$ we get, for any $v\in E_\lambda$ with $\lambda\neq 0$,
\[
\lambda\, U_g v \;=\; U_g(Tv) \;=\; Tv \;=\; \lambda\, v,
\]
hence $U_g v = v$. Thus $U_g$ acts as the identity on each $E_\lambda$ ($\lambda\neq 0$).
Equivalently, $U_g P_\lambda = P_\lambda U_g = P_\lambda$ for all $\lambda\neq 0$.
(There is no restriction on $U_g$ inside $E_0=\ker T$.)

\textbf{Step 3 (defining $f(T)$ for bounded Borel $f$ with $f(0)=0$).}
Because $\sigma(T)\setminus\{0\}$ is at most countable and $T$ is diagonal on $\{E_\lambda\}$,
we can define $f(T)$ by applying $f$ on the spectrum of $T$ as 
\[
f(T)\,v \;:=\; \sum_{\lambda\in \sigma(T)\setminus\{0\}} f(\lambda)\, v_\lambda,
\qquad v=v_0+\sum_{\lambda\neq 0} v_\lambda,\; v_\lambda\in E_\lambda.
\]
The series converges in norm since the $E_\lambda$ are mutually orthogonal and
$\|f(T)v\|^2 = \sum_{\lambda\neq 0} |f(\lambda)|^2 \|v_\lambda\|^2 \le \big(\sup_{\lambda\neq 0}|f(\lambda)|^2\big)\sum_{\lambda\neq 0}\|v_\lambda\|^2 \le \|f\|_\infty^2 \|v\|^2$.
Thus $f(T)$ is a bounded operator with $\|f(T)\|\le \|f\|_\infty$.
(When $f(0)=0$, there is no contribution on $E_0$.)

\textbf{Step 4 (invariance and commutation).}
For $v=v_0+\sum_{\lambda\neq 0} v_\lambda$ as above and any $g\in\G$,
Step~2 gives $U_g v = U_g v_0 + \sum_{\lambda\neq 0} v_\lambda$ and $P_\lambda U_g = P_\lambda$ for $\lambda\neq 0$.
Hence
\[
U_g f(T)\,v
= U_g \Big( \sum_{\lambda\neq 0} f(\lambda)\, v_\lambda \Big)
= \sum_{\lambda\neq 0} f(\lambda)\, U_g v_\lambda
= \sum_{\lambda\neq 0} f(\lambda)\, v_\lambda
= f(T)\,v,
\]
i.e., $U_g f(T)=f(T)$. In particular $U_g f(T)=f(T)U_g=f(T)$ for all $g\in\G$.
\end{proof}

\paragraph{Consequence.}
If $k$ is $\G$-invariant, then so is $\kplus$ (\Cref{eq:kplus-spectral}).

\section{Eigendecay comparison}
\label{app:eigendecay}

In this appendix, we discuss in more details the empirical observations made in \Cref{sec:eigendecay} and formally derive some inequalities between Schatten norms of integral operators associated with $\kavg$ and $\kplus$.

\subsection{Empirical Observations}
\label{app:empirical-spectra}
Here, we further discuss the empirical spectra reported in 
\Cref{fig:merged} (middle and right columns).

\paragraph{Computation of spectra.} 
The normalized Gram matrices $\K / n$ (where $\K = (k(\x_i,\x_j))_{1\leq i,j\leq n}$) reported in 
\Cref{fig:merged} 
are computed from $n=3000$ i.i.d.\ samples $\x_i\in\Sp$. We compare the spectra obtained with $k\in\{\kb, \kavg, \kplus^{(\D)}\}$ with $\D=\{\x_1,\dots,\x_n\}$ and each $\x_i$ being chosen uniformly in $\Sp=[-1,1]$. We also report the spectrum of $\kb$ when observations $\x_i$ are instead sampled from an alternative domain $\Sp'$ of reduced volume, chosen such that $\vol(\Sp') = \vol(\Sp)/|\G|$. Finally, note that because $\D$ is a set of i.i.d.\ observations, the spectrum of $\kplus^{(\D)}$ approximates the one of $\kplus$ on $\Sp$ (see \Cref{app:kplus-approx}) so our observations transfer to $\kplus$.

\paragraph{$\kplus^{(\D)}$ on $\Sp$ vs.\ $\kb$ on $\Sp'$.}
For the base kernels $\kb$ and groups $\G$ considered, the spectrum of $\kplus^{(\D)}$ on $\Sp=[-1,1]$ exactly matches that of $\kb$ on the reduced domain $\Sp'$. This indicates that $\kplus^{(\D)}$ faithfully incorporates the extra similarities induced by $\G$-invariance: it retains the eigendecay of $\kb$, but as if it were defined on the quotient space $\Sp/\G$ of effective volume $\vol(\Sp)/|\G|$.\footnote{For a finite group $\G$ of isometries, one indeed has $\vol(\Sp/\G) = \vol(\Sp)/|\G|$~\citep{petersen2006riemannian}.}

\paragraph{$\kplus^{(\D)}$ on $\Sp$ vs.\ $\kavg$ on $\Sp$.}
From 
\Cref{fig:merged} (middle and right columns)
, it is clear that the spectrum of $\kavg$ decays at least as fast as that of $\kplus^{(\D)}$. They coincide for the RBF kernel and $\kavg$ decays even faster for the Matérn kernel. In principle, this suggests that $\kavg$ should admit tighter information-gain bounds and thus better regret guarantees. However, our empirical results contradict this prediction, as $\kplus^{(\D)}$ consistently outperforms $\kavg$. This discrepancy highlights the fact that eigendecay alone does not fully explain BO performance, as pointed out in \Cref{sec:eigendecay,sec:conclusion}.

\subsection{Schatten Norm inequalities}

While the empirical spectra in \Cref{app:empirical-spectra} already highlight a mismatch between eigendecay and observed BO performance, one may ask whether formal inequalities between the operators induced by $\kavg$ and $\kplus$ can be established. We record here for completeness that it is possible to control the Schatten class of $\kplus$ in terms of the one of $\kavg$. 

Assume: $(\Sp,\mu)$ is a probability space on which a finite group $\G$ acts measurably, and the base kernel $\kb$ is bounded, symmetric, PSD, and nonnegative. Define
\[
\kavg(\x,\x'):=\frac{1}{|\G|^2}\sum_{g,g'\in\G}\kb(\act{g}{\x},\act{g'}{\x'}),\qquad
\kmax(\x,\x'):=\max_{g,g'\in\G}\kb(\act{g}{\x},\act{g'}{\x'})
\]
and $\kplus$ as the kernel corresponding to the positive part of $T_{\kmax}$: $T_{\kplus} = (T_{\kmax})_+$.

\paragraph{Schatten norm interpolation.}
Let $H=L^2(\mu)$ be the separable Hilbert space of squared integrable functions on $(\Sp,\mu)$, $T:H\to H$ a compact operator, and write
$s_i(T)$ for the singular values of $T$, i.e.\ $s_i(T) = \sqrt{\lambda_i(T^*T)}$, 
arranged in nonincreasing order and counted with multiplicity. 
The Schatten-$p$ norm is defined as
\[
  \|T\|_{S_p} := \Big(\sum_i s_i(T)^p\Big)^{1/p}, \qquad 1\le p < \infty,
  \qquad \|T\|_{S_\infty} := \sup_i s_i(T).
\]

\begin{lemma}[Monotonicity for pointwise kernels]
\label{lem:kernel-monotonicity}
If two kernels $k,k'$ are bounded and satisfy $0\le k\le k'$ pointwise, then $\|T_k\|_{S_p} \leq \|T_{k'}\|_{S_p}$ for $p=2,\infty$. If $k$ and $k'$ are also PSD, then $\|T_k\|_{S_p} \leq \|T_{k'}\|_{S_p}$ for $p=1$ too.
\end{lemma}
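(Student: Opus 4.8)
\emph{Proof plan.} I would treat the three exponents separately, in increasing order of difficulty. At the outset, note that $k,k'$ bounded on the probability space $(\Sp,\mu)$ puts them in $L^2(\mu\otimes\mu)$, so $T_k,T_{k'}$ are Hilbert--Schmidt; since the kernels are symmetric the operators are self-adjoint, hence $s_i(T_k)=|\lambda_i(T_k)|$ and, when $k$ is PSD, $s_i(T_k)=\lambda_i(T_k)\ge 0$. For $p=2$ this is immediate from the Hilbert--Schmidt identity recalled in \Cref{app:conv-reminders}: $\|T_k\|_{S_2}^2=\iint_{\Sp\times\Sp}k(x,y)^2\,d\mu(x)\,d\mu(y)$, and $0\le k\le k'$ pointwise gives $k^2\le (k')^2$ pointwise, hence $\|T_k\|_{S_2}\le\|T_{k'}\|_{S_2}$ after integration.

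For $p=\infty$ I would use that $T_k$ is self-adjoint, so $\|T_k\|_{S_\infty}=\|T_k\|_{\op}=\sup_{\|f\|_{L^2(\mu)}=1}|\langle f,T_kf\rangle|$. For a unit vector $f$, nonnegativity of $k$ gives
\[
|\langle f,T_kf\rangle|=\Bigl|\iint k(x,y)f(x)f(y)\,d\mu(x)\,d\mu(y)\Bigr|\le\iint k(x,y)\,|f(x)|\,|f(y)|\,d\mu(x)\,d\mu(y)=\langle |f|,T_k|f|\rangle,
\]
and since $k'-k\ge 0$ and $|f|\otimes|f|\ge 0$ the right-hand side is $\le\langle |f|,T_{k'}|f|\rangle\le\|T_{k'}\|_{\op}\,\||f|\|_{L^2}^2=\|T_{k'}\|_{\op}$ by Cauchy--Schwarz. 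Taking the supremum over $f$ closes this case.

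For $p=1$, with $k,k'$ additionally PSD, both $T_k,T_{k'}$ are PSD, so $\|T_k\|_{S_1}=\Tr(T_k)=\sum_i\lambda_i(T_k)$. The plan is to reduce the trace to a diagonal integral via a Mercer-type trace formula --- available here since the kernels are bounded PSD and, in all of the paper's applications, continuous on a compact domain --- namely $\Tr(T_k)=\int_{\Sp}k(x,x)\,d\mu(x)$, which is finite because $0\le k(x,x)\le\|k\|_\infty$ on the probability space, and likewise for $k'$. Then $0\le k(x,x)\le k'(x,x)$ on the diagonal integrates to $\|T_k\|_{S_1}\le\|T_{k'}\|_{S_1}$.

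I expect the $p=1$ step to be the main obstacle: one must justify both that $T_k$ is trace class and that $\Tr(T_k)=\int k(x,x)\,d\mu$, i.e., identify the spectral series $\sum_i\lambda_i\,|\phi_i(\cdot)|^2$ with the diagonal of $k$ ($\mu$-a.e.) and interchange sum and integral. This cannot be bypassed by an operator-monotonicity shortcut, since $k\le k'$ pointwise does \emph{not} force $T_{k'}-T_k\succeq 0$ (a pointwise-nonnegative kernel need not be PSD) --- which is precisely why the $p=1$ claim needs the extra PSD hypothesis together with a regularity input, whereas the $p=2$ and $p=\infty$ cases rely only on pointwise positivity of the kernels.
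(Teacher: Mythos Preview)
Your proposal is correct and follows essentially the same route as the paper: $p=2$ via the Hilbert--Schmidt identity $\|T_k\|_{S_2}=\|k\|_{L^2(\mu\otimes\mu)}$, $p=1$ via the trace--diagonal formula $\|T_k\|_{S_1}=\int k(x,x)\,d\mu$, and $p=\infty$ via the ``replace $f$ by $|f|$ and compare pointwise'' trick. The only cosmetic difference is that for $p=\infty$ the paper bounds $\|T_kf\|_{L^2}$ directly by $\|T_{k'}|f|\|_{L^2}$, whereas you pass through the quadratic form $\langle f,T_kf\rangle$ using self-adjointness; your added caveats about justifying the trace formula for $p=1$ are well placed and in fact more careful than the paper's statement.
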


\begin{proof}
For $p=\infty$, the Schatten $p$-norm is the operator norm $\|T\|_{\op}=\sup_{\|f\|_H=1}\|Tf\|_H$. 
Pointwise $0\le k\le k'$ implies 
\(\|T_k f\|_H \le \|T_{k'}|f|\|_H \le \|T_{k'}\|_{S_\infty}\|f\|_H\), 
so taking the supremum over $\|f\|_H=1$ yields 
\(\|T_k\|_{S_\infty}\le \|T_{k'}\|_{S_\infty}\).
If $T=T_k$ is the integral operator associated with a nonnegative kernel $k$, then $\|T_k\|_{S_2} = \|k\|_{L^2(\mu\otimes \mu)}$. 
Hence pointwise $0\le k\le k'$ gives 
$\|T_k\|_{S_2} \le \|T_{k'}\|_{S_2}$ for $p=2$ as well. Finally when $k$ is PSD, we have $\|T_k\|_{S_2} = \int_x k(x,x) d\mu(x)$ (and similarly for $k'$) and again a pointwise comparison yields the result.
\end{proof}

From this we immediately obtain, for our specific kernels that for $p=2,\infty$, and also $p=1$ if $\kmax$ is PSD:
\[
  k_{\mathrm{avg}} \;\le\; k_{\mathrm{max}} 
  \;\le\; |\mathcal G|^2\,k_{\mathrm{avg}}
  \quad\Rightarrow\quad
  \|T_{k_{\mathrm{avg}}}\|_{S_p} \;\le\;
  \|T_{k_{\mathrm{max}}}\|_{S_p} \;\le\;
  |\mathcal G|^2\,\|T_{k_{\mathrm{avg}}}\|_{S_p}
\]

\begin{lemma}[Interpolation inequalities for Schatten norms]
\label{lem:schatten-interp}
For any nonnegative sequence $a=(a_i)_{i\ge1}$ one has
\[
  \|a\|_{\ell^p} \;\le\; \|a\|_{\ell^2}^{\,2/p}\,
    \|a\|_{\ell^\infty}^{\,1-2/p} \qquad (p\ge2),
\]
\[
  \|a\|_{\ell^p}^p \;\le\; 
    \|a\|_{\ell^1}^{\,2-p}\,
    \|a\|_{\ell^2}^{\,2(p-1)} \qquad (1\le p\le2).
\]
\end{lemma}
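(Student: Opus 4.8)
The plan is to prove both bounds by elementary interpolation between the $\ell^1$, $\ell^2$, and $\ell^\infty$ norms of the nonnegative scalar sequence $a=(a_i)_{i\ge1}$; despite the lemma's placement among Schatten-norm statements, no operator theory enters here — these are pointwise statements about a sequence, and Hölder's inequality (equivalently, log-convexity of $p\mapsto\log\|a\|_{\ell^p}$) suffices.

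For the case $p\ge2$, I would split each term as $a_i^p = a_i^2\cdot a_i^{p-2}$ and use $0\le a_i^{p-2}\le \|a\|_{\ell^\infty}^{p-2}$, which is legitimate since $p-2\ge0$ and $a_i\ge0$. Summing over $i$ gives $\|a\|_{\ell^p}^p\le \|a\|_{\ell^\infty}^{p-2}\,\|a\|_{\ell^2}^2$, and taking $p$-th roots yields $\|a\|_{\ell^p}\le \|a\|_{\ell^2}^{2/p}\,\|a\|_{\ell^\infty}^{(p-2)/p}$; rewriting $(p-2)/p = 1-2/p$ gives the stated form. The endpoint $p=2$ is the trivial equality $\|a\|_{\ell^2}\le\|a\|_{\ell^2}$, so no separate treatment is needed.

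For $1\le p\le2$, set $\theta:=\tfrac{2-p}{p}\in[0,1]$ and write $a_i^p = a_i^{p\theta}\,a_i^{p(1-\theta)}$. I would then apply Hölder with exponents $r:=1/(p\theta)$ and $r':=2/\bigl(p(1-\theta)\bigr)$, which are conjugate precisely because $p\theta+\tfrac12 p(1-\theta)=1$ — this identity is exactly the definition of $\theta$. Hölder then gives $\sum_i a_i^p \le \bigl(\sum_i a_i\bigr)^{p\theta}\bigl(\sum_i a_i^2\bigr)^{p(1-\theta)/2} = \|a\|_{\ell^1}^{p\theta}\,\|a\|_{\ell^2}^{p(1-\theta)}$, and substituting $p\theta = 2-p$ and $p(1-\theta)=2(p-1)$ yields $\|a\|_{\ell^p}^p\le \|a\|_{\ell^1}^{2-p}\,\|a\|_{\ell^2}^{2(p-1)}$. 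The endpoints $p=1$ and $p=2$ again reduce to equalities. Equivalently, both displays are instances of log-convexity of the $\ell^p$ norm along harmonic interpolation of the exponent, with interpolation pairs $(2,\infty)$ and $(2,1)$ respectively.

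There is essentially no real obstacle; the only thing demanding care is the bookkeeping of exponents — verifying that the chosen Hölder pair $(r,r')$ is conjugate and that the resulting powers of $\|a\|_{\ell^1}$ and $\|a\|_{\ell^2}$ simplify to exactly $2-p$ and $2(p-1)$ (and, in the first case, that $(p-2)/p=1-2/p$). If a cleaner presentation is preferred, one can first record once and for all the general fact that $\|a\|_{\ell^p}\le\|a\|_{\ell^q}^{1-\theta}\|a\|_{\ell^r}^{\theta}$ whenever $\tfrac1p=\tfrac{1-\theta}{q}+\tfrac{\theta}{r}$ with $\theta\in[0,1]$ and $q\le p\le r$, and then obtain the two displayed inequalities by specializing $(q,r)$ and raising to the power $p$.
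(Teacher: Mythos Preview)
Your proposal is correct and essentially identical to the paper's proof: both split $a_i^p=a_i^2 a_i^{p-2}$ and bound by $\|a\|_{\ell^\infty}^{p-2}$ for $p\ge2$, and for $1\le p\le2$ both use the decomposition $a_i^p=a_i^{2-p}a_i^{2(p-1)}$ with H\"older exponents $1/(2-p)$ and $1/(p-1)$ (your $\theta$-parametrization unwinds to exactly these), treating the endpoints separately.
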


\begin{proof}
For $p\ge2$, 
\(\sum_i a_i^p = \sum_i a_i^{p-2} a_i^2 
\le \|a\|_{\ell^\infty}^{p-2}\sum_i a_i^2\),
giving the stated inequality.
For $1\le p\le2$, write
\[
\sum_i a_i^p=\sum_i a_i^{\,2-p}\,a_i^{\,2(p-1)}.
\]
Let $r=\frac{1}{2-p}$ and $s=\frac{1}{p-1}$ (with the usual convention $1/0=\infty$). 
For $1<p<2$ we have $1<r,s<\infty$ and by Hölder,
\[
\sum_i a_i^p
\;\le\;
\Bigl(\sum_i (a_i^{2-p})^r\Bigr)^{1/r}
\Bigl(\sum_i (a_i^{2(p-1)})^s\Bigr)^{1/s}
=
\bigl(\sum_i a_i\bigr)^{1/r}
\bigl(\sum_i a_i^2\bigr)^{1/s}.
\]
Since $1/r=2-p$ and $1/s=p-1$, this gives
\[
\|a\|_{\ell^p}^p
\;\le\;
\|a\|_{\ell^1}^{\,2-p}\,
\|a\|_{\ell^2}^{\,2(p-1)}.
\]
The endpoint cases $p=1,2$ follow by continuity (and are trivial directly). 
\end{proof}

Applied to $a_i=s_i(T)$, Lemma~\ref{lem:schatten-interp} yields the standard Schatten interpolation inequalities:
\[
  \|T\|_{S_p}\ \le\ \|T\|_{S_2}^{\,2/p}\,
    \|T\|_{S_\infty}^{\,1-2/p}, \quad (p\ge2),
\]
\[
  \|T\|_{S_p}\ \le\ 
  \big(\|T\|_{S_1}\big)^{\frac{2}{p}-1}\,
  \big(\|T\|_{S_2}^2\big)^{1-\frac{1}{p}}, \quad (1\le p\le2).
\]

Since the spectrum of $T_{\kplus}$ is the positive part of the one of $T_{\kmax}$, we have $\|T_{\kplus}\|_{S_p} \leq \|T_{\kmax}\|_{S_p}$. We deduce the next lemma.

\begin{lemma}
    For $p\geq 2$:
    \[
    \|T_{\kplus}\|_{S_p} \leq \|T_{\kmax}\|_{S_p}\ \le |\G| \|T_{\kavg}\|_{S_2}^{\,2/p}\,
    \|T_{\kavg}\|_{S_\infty}^{\,1-2/p}
    \]
    and if $\kmax$ is already PSD then for $1\leq p\leq 2$:
    \[
    \|T_{\kplus}\|_{S_p}=\|T_{\kmax}\|_{S_p}\ \le |\G| \big(\|T_{\kavg}\|_{S_1}\big)^{2/p-1}\,\big(\|T_{\kavg}\|_{S_2}^2\big)^{1-1/p}
    \]
    and 
    \[
    \|T_{\kavg}\|_{S_p} \le \big(\|T_{\kmax}\|_{S_1}\big)^{2/p-1}\,\big(\|T_{\kmax}\|_{S_2}^2\big)^{1-1/p}.
    \]
\end{lemma}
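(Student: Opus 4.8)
The plan is to bootstrap the statement from three facts already in hand: the singular-value domination $\|T_{\kplus}\|_{S_p}\le\|T_{\kmax}\|_{S_p}$, the Schatten interpolation inequalities of \Cref{lem:schatten-interp}, and the pointwise monotonicity of \Cref{lem:kernel-monotonicity} applied to the two-sided comparison $\kavg\le\kmax\le|\G|\,\kavg$. Since $\kb$ is bounded on a probability space, so is $\kmax$, hence $\kmax\in L^2(\mu\otimes\mu)$ and $T_{\kmax}$ is Hilbert-Schmidt, so its spectral decomposition and everything below is available. None of the steps is hard; the work is ordering them and tracking which hypotheses are used where.

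First I would record the singular-value step. Write $(\lambda_i)$ for the (real) eigenvalues of the compact self-adjoint operator $T_{\kmax}$; then $s_i(T_{\kmax})=|\lambda_i|$, while $T_{\kplus}=(T_{\kmax})_+$ has $s_i(T_{\kplus})=(\lambda_i)_+$. Since $0\le(\lambda_i)_+\le|\lambda_i|$ termwise, $\|T_{\kplus}\|_{S_p}\le\|T_{\kmax}\|_{S_p}$ for every $p\in[1,\infty]$, with equality as soon as $\kmax$ is PSD (then $T_{\kplus}=T_{\kmax}$). This supplies the leftmost (in)equalities in all three displays.

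Next, for $p\ge2$ I would apply the first inequality of \Cref{lem:schatten-interp} to $T_{\kmax}$, obtaining $\|T_{\kmax}\|_{S_p}\le\|T_{\kmax}\|_{S_2}^{2/p}\,\|T_{\kmax}\|_{S_\infty}^{1-2/p}$, and then use \Cref{lem:kernel-monotonicity} with $0\le\kmax\le|\G|\,\kavg$ (both kernels nonnegative, so the $p=2,\infty$ cases apply) to replace $\|T_{\kmax}\|_{S_2}$ and $\|T_{\kmax}\|_{S_\infty}$ by $|\G|\,\|T_{\kavg}\|_{S_2}$ and $|\G|\,\|T_{\kavg}\|_{S_\infty}$; the powers of $|\G|$ collapse as $|\G|^{2/p}\,|\G|^{1-2/p}=|\G|$, which is the first display. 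For $1\le p\le2$ under the extra hypothesis that $\kmax$ is PSD, the singular-value step upgrades to $\|T_{\kplus}\|_{S_p}=\|T_{\kmax}\|_{S_p}$; I would then apply the second inequality of \Cref{lem:schatten-interp} to $T_{\kmax}$ and invoke \Cref{lem:kernel-monotonicity} — now also in its $p=1$ form, which is exactly what the PSD hypothesis buys — to get $\|T_{\kmax}\|_{S_1}\le|\G|\,\|T_{\kavg}\|_{S_1}$ and $\|T_{\kmax}\|_{S_2}\le|\G|\,\|T_{\kavg}\|_{S_2}$; again the exponents sum to one, since $|\G|^{2/p-1}\,|\G|^{2(1-1/p)}=|\G|$, giving the second display. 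The third display is the same interpolation inequality applied directly to $T_{\kavg}$, followed by $\|T_{\kavg}\|_{S_1}\le\|T_{\kmax}\|_{S_1}$ and $\|T_{\kavg}\|_{S_2}\le\|T_{\kmax}\|_{S_2}$ from \Cref{lem:kernel-monotonicity} and $0\le\kavg\le\kmax$ (both PSD, so the $p=1$ monotonicity applies), with no extra power of $|\G|$ since only the $\kavg\le\kmax$ direction is used.

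The only genuine point of care — the obstacle, such as it is — is bookkeeping positive-semidefiniteness: the $S_1$ monotonicity of \Cref{lem:kernel-monotonicity} and the identity $T_{\kplus}=T_{\kmax}$ both need $\kmax\succeq0$, which is why the $p\in[1,2]$ statements carry that assumption while the $p\ge2$ statement, using only the $S_2$ and $S_\infty$ norms, does not. One should also note that the constant in each conclusion equals the constant $c$ appearing in the pointwise bound $\kmax\le c\,\kavg$, because the interpolation exponents always sum to one; $c=|\G|$ is the relevant value when the double orbit sum collapses to a single sum, and one substitutes the cruder $c=|\G|^2$ otherwise.
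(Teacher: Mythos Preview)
Your proposal is correct and follows essentially the same approach as the paper: combine the singular-value domination $\|T_{\kplus}\|_{S_p}\le\|T_{\kmax}\|_{S_p}$, the Schatten interpolation inequalities of \Cref{lem:schatten-interp}, and the pointwise kernel monotonicity of \Cref{lem:kernel-monotonicity} applied to $\kavg\le\kmax\le c\,\kavg$. Your closing remark about the constant ($|\G|$ versus $|\G|^2$, depending on whether the double orbit sum collapses) is a useful clarification, since the paper's setup displays $|\G|^2$ while the lemma states $|\G|$.
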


\section{Benchmarks} \label{app:benchmarks}

In this appendix, we describe the experimental setting and all the benchmarks used to produce the numerical results of Section~\ref{sec:xps}.

\subsection{Experimental Details}

In our experiments, every BO algorithm is implemented with the same BO library, namely BOTorch~\citep{balandat2020botorch}. All of them are initialized with five observations sampled uniformly in $\Sp$. After that, at each iteration $t$, every BO algorithm must:
\begin{itemize}
    \item \textbf{Fit its kernel hyperparameters.} This is done by gradient ascent of the Gaussian likelihood, as recommended by BOTorch. The hyperparameters are the signal variance $\lambda$, the lengthscale $l$ and the observational noise level $\sigma^2_0$.
    \item \textbf{Optimize GP-UCB to find $\x_t$.} This is done by multi-start gradient ascent, using the \texttt{optimize\_acqf} function from BOTorch. As values of $\beta_t$ recommended by~\citet{srinivas-information:2012} turn out to be too exploratory in practice, we set $\beta_t = 0.5 d \log(t)$.
    \item \textbf{Observe $y(\x_t) = f(\x_t) + \epsilon_t$.} Function values are corrupted by noise whose variance is $2\%$ of the signal variance.
\end{itemize}

We optimize over 50 iterations and typically measure the cumulated regret along the optimizer's trajectory.

All experiments are replicated across ten independent seeds and are run on a laptop equipped with an Intel Core i9-9980HK @ 2.40 GHz with 8 cores (16 threads). No graphics card was used to speed up GP inference. The typical time for each maximization problem ranged from $\sim$1 minute (two-dimensional Ackley, $|\G| = 8$) to $\sim$15 minutes (five-dimensional Rastrigin, $|\G| = 3840$).

\subsection{Benchmarks}

As GP-UCB is naturally formulated for maximization tasks, all benchmarks that require minimization have been multiplied by $-1$ to produce benchmarks on maximization.

\paragraph{Ackley.} The $d$-dimensional Ackley function is defined on $\mathcal{S} = [-16, 16]^d$ with a global minimum at $f_{\text{Ackley}}(\bm 0) = 0$ and has the following expression:
\begin{equation}\label{def:ackley}
f_{\text{Ackley}}(\x) = -a \exp\left(-b \sqrt{\frac{1}{d} \sum_{i = 1}^d x_i^2}\right) - \exp\left(\frac{1}{d} \sum_{i = 1}^d \cos(cx_i)\right) + a + \exp(1),
\end{equation}
where we set $a = 20$, $b = 0.2$ and $c = 2\pi$ as recommended.

The $d$-dimensional Ackley is invariant to the hyperoctahedral group in $d$ dimensions, which includes permutations composed with sign-flips. Consequently, in $d$ dimensions, $|\G| = \underbrace{2^d}_{\text{sign flips}} \underbrace{d!}_{\text{permutations}}$.

\paragraph{Griewank.} The $d$-dimensional Griewank function is defined on $\Sp = [-600, 600]^d$ with a global minimum at $f_{\text{Griewank}}(\bm 0) = 0$ and has the following expression:
\[
    f_{\text{Griewank}}(\bm x) = \sum_{i = 1}^d \frac{x_i^2}{4000} - \prod_{i = 1}^d \cos\left(\frac{x_i}{\sqrt{i}}\right) + 1.
\]

The $d$-dimensional Griewank is invariant to sign-flips of all $d$ coordinates. Therefore, in $d$ dimensions, $|\G| = 2^d$.

\paragraph{Rastrigin.} The $d$-dimensional Rastrigin function is defined on $\Sp = [-5.12, 5.12]^d$ with a global minimum at $f_{\text{Rastrigin}}(\bm 0) = 0$ and has the following expression:
\[
    f_{\text{Rastrigin}}(\bm x) = 10d + \sum_{i = 1}^d \left(x_i^2 - 10 \cos\left(2 \pi x_i\right)\right).
\]

The $d$-dimensional Rastrigin is invariant to the hyperoctahedral group in $d$ dimensions, which includes permutations composed with sign-flips. Consequently, in $d$ dimensions, $|\G| = \underbrace{2^d}_{\text{sign flips}} \underbrace{d!}_{\text{permutations}}$.

\paragraph{Radial.} Our radial benchmark is defined on $\Sp = [-10, 10]^2$ with a global minimum at $f_{\text{Radial}}(\x^*) = 0$, where $\x^*$ is any $\x \in \Sp$ such that $||\x||_2 = ab$. It has the following expression:
\begin{equation}\label{def:radial}
    f_{\text{Radial}}(\x) = f_{\text{Rastrigin}}\left(\frac{||\x||_2}{a} - b\right)
\end{equation}
where we set $a = 10 \sqrt{2}$, $b = 0.8$ and where $f_{\text{Rastrigin}}$ is the one-dimensional Rastrigin benchmark.

Our radial benchmark is invariant to planar rotations. Consequently, $\G$ comprises an uncountably infinite number of symmetries.

\paragraph{Scaling.} Our scaling benchmark is defined on $\Sp = [0.1, 10]^2$ with a global minimum at $f_{\text{Scaling}}(\x^*) = 0$, where $\x^*$ is any $\x = (x_1, x_2) \in \Sp$ such that $x_1 = x_2$. It has the following expression:
\[
    f_{\text{Scaling}}(\x) = \left(\frac{x_1}{x_2} - 1\right)^2.
\]

Our scaling benchmark is invariant to rescaling of both coordinates. Consequently, $\G$ comprises an uncountably infinite number of symmetries.

\paragraph{WLAN.} The WLAN benchmark consists of $p$ users scattered in an area $\mathcal{A} = [-50, 50]^2$ and $m$ access points (APs) to be placed in $\mathcal{A}$. Therefore, the search space is $\Sp = \mathcal{A}^m$, which is $2m$-dimensional. Given a placement $\left\{(x_i, y_i)\right\}_{i \in [m]}$ for the $m$ APs, each user is allocated to the AP closest to it. Each AP $i$ has now a set $\mathcal{U}(x_i, y_i)$ of associated users.

Assume the AP $i$ is associated to the user $j \in \mathcal{U}(x_i, y_i)$. The quality of service (QoS) of the AP-user association is given by the Shannon capacity in Mbps:
\[
    C_{ij} = W \log_2(1 + \gamma_{ij}),
\]
where $W$ is the bandwidth of the signal (in MHz), $\gamma_{ij}$ is the signal to interference-plus-noise ratio (SINR) defined by
\[
    \gamma_{ij} = \frac{P_{ij}}{N + \sum_{k \neq i}^m P_{kj}}.
\]
Here, $N$ is the background noise (in mW) while $P_{ij}$ is the power (in mW) received by user $j$ from AP $i$. The power received is computed using the well-known log-distance path-loss:
\[
    P_{ij} = 10^{-L/10} \min(d_{ij}^{-\lambda}, 1).
\]
where $d_{ij}$ is the Euclidean distance between AP $i$ and user $j$ and where $L$ and $\lambda$ are positive constants.

Finally, the objective function to maximize is the cumulated sum of Shannon capacities for every AP-user association:
\[
    f_{\text{WLAN}}(\x, \y) = \sum_{i = 1}^m \sum_{j \in \mathcal{U}(x_i, y_i)} C_{ij},
\]
where $\x = (x_1, \cdots, x_m)$ and $\y = (y_1, \cdots, y_m)$ are the positions of the $m$ APs on the x-axis and y-axis, respectively. In our experiment, we set $W = 1$~MHz, $L = 46.67$~dBm, $\lambda = 3$, $N = -85$~dBm, $m = 4$~APs and $p = 16$~users.

Because the main goal of this task is to optimize the QoS by placing a set of APs, our objective $f_{\text{WLAN}}$ is invariant to any permutation of the coordinates in the vectors $\x$ and $\y$. Therefore, $|\G| = m!$.

Figure~\ref{fig:wlan} shows a depiction of the WLAN and the best placement of APs inferred by GP-UCB using $\kplus^{(\D)}$.

\begin{figure}
    \centering
    \includegraphics[height=5cm]{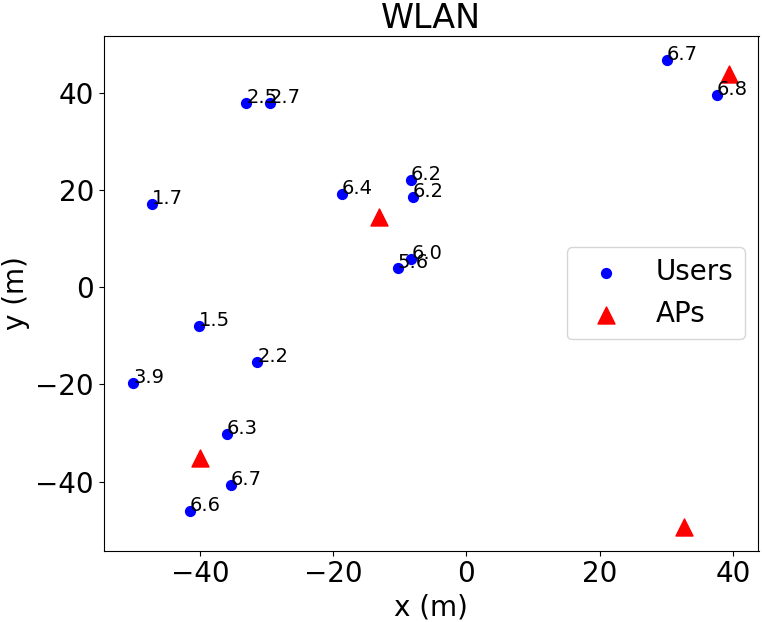}
    \caption{WN with the best positions of APs found by GP-UCB with $\kplus^{(\D)}$. APs are depicted by red triangles and users with blue circles. The throughput for each user is shown in Mbps.}
    \label{fig:wlan}
\end{figure}

\end{document}